\DeclarePairedDelimiterX{\infdivx}[2]{(}{)}{%
	#1\;\delimsize\|\;#2%
}
\newcommand{\vect}[1]{\bm{#1}}
\newcommand{\cmark}{\ding{51}}
\newcommand{\xmark}{\ding{55}}
\newcommand{\argmin}{\operatornamewithlimits{argmin}}
\newcommand{\x}{\xv}
\newcommand{\dm}{\mathrm{d}}
\newcommand{\E}{\mathbb{E}}
\newcommand{\R}{\mathbb{R}}
\newcommand{\N}{\mathcal{N}}
\newcommand{\alphav}{\vect\alpha}
\newcommand{\betav}{\vect\beta}
\newcommand{\gammav}{\vect\gamma}
\newcommand{\epsilonv}{\vect\epsilon}
\newcommand{\xiv}{\vect\xi}
\newcommand{\psiv}{\vect\psi}
\newcommand{\omegav}{\vect\omega}
\newcommand{\av}{\vect a}
\newcommand{\bv}{\vect b}
\newcommand{\cv}{\vect c}
\newcommand{\fv}{\vect f}
\newcommand{\gv}{\vect g}
\newcommand{\hv}{\vect h}
\newcommand{\lv}{\vect l}
\newcommand{\sv}{\vect s}
\newcommand{\vv}{\vect v}
\newcommand{\xv}{\vect x}
\newcommand{\Av}{\vect A}
\newcommand{\Bv}{\vect B}
\newcommand{\Cv}{\vect C}
\newcommand{\Dv}{\vect D}
\newcommand{\Ev}{\vect E}
\newcommand{\Iv}{\vect I}
\newcommand{\Lv}{\vect L}
\newcommand{\Nv}{\vect N}
\newcommand{\Rv}{\vect R}
\newcommand{\Sv}{\vect S}
\newcommand{\Nc}{\mathcal N}
\newcommand{\Oc}{\mathcal O}
\newcommand{\Uc}{\mathcal U}
\newcommand{\diag}{\mbox{diag}}
\title{DPM-Solver-v3: Improved Diffusion ODE Solver with Empirical Model Statistics}
\author{%
  Kaiwen Zheng$^{*\dagger1}$, Cheng Lu$^{*1}$, Jianfei Chen$^1$, Jun Zhu$^{\ddagger123}$\\
  $^1$Dept. of Comp. Sci. \& Tech., Institute for AI, BNRist Center, THBI Lab\\
  $^1$Tsinghua-Bosch Joint ML Center, Tsinghua University, Beijing, China\\
  $^2$Shengshu Technology, Beijing \quad $^3$Pazhou Lab (Huangpu), Guangzhou, China \\
  \texttt{\{zkwthu,lucheng.lc15\}@gmail.com; \{jianfeic, dcszj\}@tsinghua.edu.cn} \\
}
\theoremstyle{plain}
\newtheorem{theorem}{Theorem}[section]
\newtheorem{lemma}[theorem]{Lemma}
\theoremstyle{definition}
\newtheorem{assumption}[theorem]{Assumption}
\theoremstyle{remark}
\newtheorem{remark}[theorem]{Remark}
\begin{document}

\maketitle

\begin{abstract}
Diffusion probabilistic models (DPMs) have exhibited excellent performance for high-fidelity image generation while suffering from inefficient sampling. Recent works accelerate the sampling procedure by proposing fast ODE solvers that leverage the specific ODE form of DPMs. However, they highly rely on specific parameterization during inference (such as noise/data prediction), which might not be the optimal choice. In this work, we propose a novel formulation towards the optimal parameterization during sampling that minimizes the first-order discretization error of the ODE solution. Based on such formulation, we propose \textit{DPM-Solver-v3}, a new fast ODE solver for DPMs by introducing several coefficients efficiently computed on the pretrained model, which we call \textit{empirical model statistics}. We further incorporate multistep methods and a predictor-corrector framework, and propose some techniques for improving sample quality at small numbers of function evaluations (NFE) or large guidance scales. Experiments show that DPM-Solver-v3 achieves consistently better or comparable performance in both unconditional and conditional sampling with both pixel-space and latent-space DPMs, especially in 5$\sim$10 NFEs. We achieve FIDs of 12.21 (5 NFE), 2.51 (10 NFE) on unconditional CIFAR10, and MSE of 0.55 (5 NFE, 7.5 guidance scale) on Stable Diffusion, bringing a speed-up of 15\%$\sim$30\% compared to previous state-of-the-art training-free methods. Code is available at \url{https://github.com/thu-ml/DPM-Solver-v3}.
\end{abstract}

\footnotetext[2]{Work done during an internship at Shengshu Technology}
\footnotetext[1]{Equal contribution}
\footnotetext[3]{Corresponding author}

\section{Introduction}

Diffusion probabilistic models (DPMs)~\cite{sohl2015deep,ho2020denoising,song2020score} are a class of state-of-the-art image generators. By training with a strong encoder, a large model, and massive data as well as leveraging techniques such as guided sampling, DPMs are capable of generating high-resolution photorealistic and artistic images on text-to-image tasks.
\begin{figure}[th]
    \centering
	\begin{minipage}[t]{.32\linewidth}
		\centering
		\includegraphics[width=1.0\linewidth]{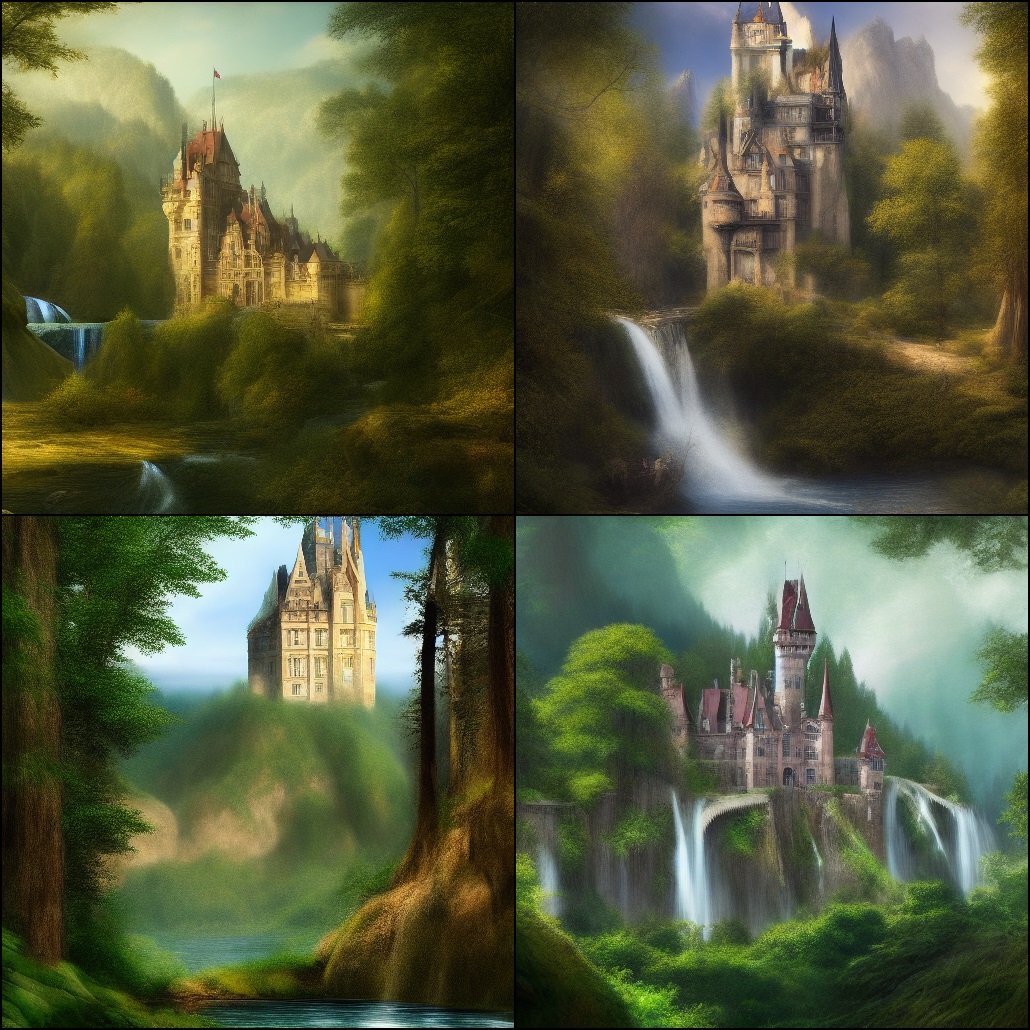}
		\scriptsize{(a) DPM-Solver++~\cite{lu2022dpmpp} (MSE 0.60)}
	\end{minipage}
	\begin{minipage}[t]{.32\linewidth}
		\centering
		\includegraphics[width=1.0\linewidth]{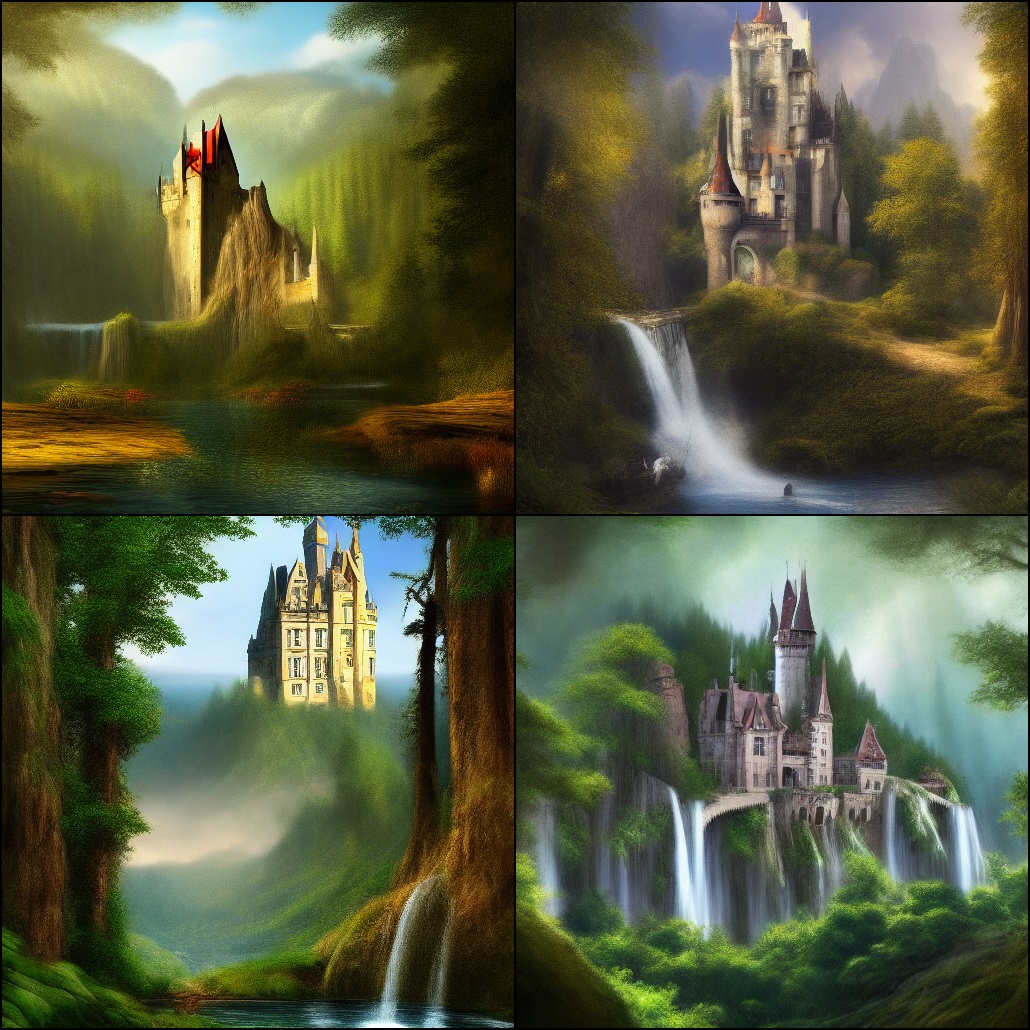}
		\scriptsize{(b) UniPC~\cite{zhao2023unipc} (MSE 0.65)}
	\end{minipage}
	\begin{minipage}[t]{.32\linewidth}
		\centering
		\includegraphics[width=1.0\linewidth]{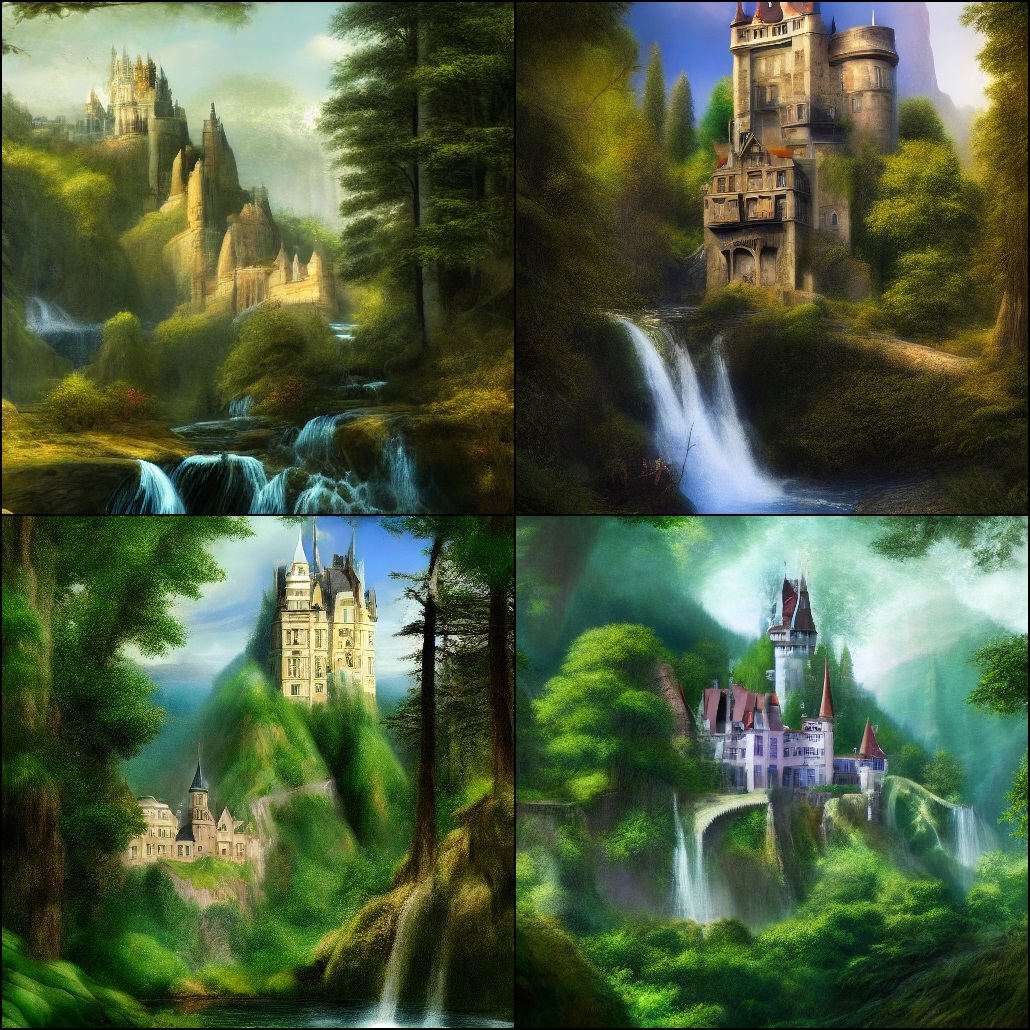}
		\scriptsize{(c) DPM-Solver-v3 (\textbf{Ours}, MSE 0.55)}
	\end{minipage}
	\caption{\label{fig:sdm75} \small{Random samples of Stable-Diffusion~\cite{rombach2022high} with a classifier-free guidance scale 7.5, using only 5 number of function evaluations (NFE) and text prompt \textit{``A beautiful castle beside a waterfall in the woods, by Josef Thoma, matte painting, trending on artstation HQ''}.}
	}
	\vspace{-.1in}
\end{figure}
However, to generate high-quality visual content, DPMs usually require hundreds of model evaluations to gradually remove noise using a pretrained model~\cite{ho2020denoising}, which is much more time-consuming compared to other deep generative models such as generative adversarial networks (GANs)~\cite{goodfellow2014generative}. The sampling overhead of DPMs emerges as a crucial obstacle hindering their integration into downstream tasks.

To accelerate the sampling process of DPMs, one can employ training-based methods~\cite{nichol2021improved,watson2022learning,salimans2022progressive} or training-free samplers~\cite{song2020denoising,song2020score,liu2021pseudo,bao2022analytic,tachibana2021ito,zhang2022fast}. We focus on the latter approach since it requires no extra training and is more flexible. Recent advanced training-free samplers~\cite{zhang2022fast,lu2022dpm,lu2022dpmpp,zhao2023unipc} mainly rely on the ODE form of DPMs, since its absence of stochasticity is essential for high-quality samples in around 20 steps. Besides, ODE solvers built on exponential integrators~\cite{hochbruck2010exponential} converge faster. To change the original diffusion ODE into the form of exponential integrators, they need to cancel its linear term and obtain an ODE solution, where only the noise predictor needs to be approximated, and other terms can be exactly computed. Besides, Lu et al.~\citep{lu2022dpmpp} find that it is better to use another ODE solution where instead the data predictor needs to be approximated. How to choose such model parameterization (e.g., noise/data prediction) in the sampling of DPMs remains unrevealed.

In this work, we systematically study the problem of model parameterization and ODE formulation for fast sampling of DPMs. We first show that by introducing three types of coefficients, the original ODE solution can be reformulated to an equivalent one that contains a new model parameterization. Besides, inspired by exponential Rosenbrock-type methods~\citep{hochbruck2009exponential} and first-order discretization error analysis, we also show that there exists an optimal set of coefficients efficiently computed on the pretrained model, which we call \textit{empirical model statistics} (EMS). Building upon our novel ODE formulation, we further propose a new high-order solver for diffusion ODEs named \textit{DPM-Solver-v3}, which includes a multistep predictor-corrector framework of any order, as well as some novel techniques such as pseudo high-order method to boost the performance at extremely few steps or large guidance scale. 

We conduct extensive experiments with both pixel-space and latent-space DPMs to verify the effectiveness of DPM-Solver-v3. Compared to previous fast samplers, DPM-Solver-v3 can consistently improve the sample quality in 5$\sim$20 steps, and make a significant advancement within 10 steps. 

\section{Background}
\subsection{Diffusion Probabilistic Models}
Suppose we have a $D$-dimensional data distribution $q_0(\x_0)$. Diffusion probabilistic models (DPMs)~\cite{sohl2015deep,ho2020denoising,song2020score} define a forward diffusion process $\{q_t\}_{t=0}^T$ to gradually degenerate the data $\x_0\sim q_0(\x_0)$ with Gaussian noise, which satisfies the transition kernel $q_{0t}(\x_t|\x_0)=\Nc(\alpha_t\x_0,\sigma_t^2\Iv)$, such that $q_T(\x_T)$ is approximately pure Gaussian. $\alpha_t,\sigma_t$ are smooth scalar functions of $t$, which are called \textit{noise schedule}. The transition can be easily applied by $\x_t=\alpha_t\x_0+\sigma_t\epsilonv, \epsilonv\sim\Nc(\vect 0,\Iv)$.
To train DPMs, a neural network $\epsilonv_\theta(\x_t,t)$ is usually parameterized to predict the noise $\epsilonv$ by minimizing $\E_{\x_0\sim q_0(\x_0),\epsilonv\sim\Nc(\vect 0,\Iv),t\sim \Uc(0,T)}\left[w(t)\|\epsilonv_\theta(\x_t,t)-\epsilonv\|_2^2\right]$, where $w(t)$ is a weighting function. Sampling of DPMs can be performed by solving \textit{diffusion ODE}~\cite{song2020score} from time $T$ to time $0$:
\begin{equation}
\label{eqn:diff_ode_t}
     \frac{\dm\x_t}{\dm t}=f(t)\x_t+\frac{g^2(t)}{2\sigma_t}\epsilonv_\theta(\x_t,t),
\end{equation}
where $f(t)=\frac{\dm\log \alpha_t}{\dm t}$, $g^2(t)=\frac{\dm \sigma_t^2}{\dm t}-2\frac{\dm\log \alpha_t}{\dm t}\sigma_t^2$~\citep{kingma2021variational}.
In addition, the conditional sampling by DPMs can be conducted by guided sampling~\citep{dhariwal2021diffusion,ho2022classifier} with a conditional noise predictor $\epsilonv_\theta(\x_t,t,c)$, where $c$ is the condition.
Specifically, classifier-free guidance~\cite{ho2022classifier} combines the unconditional/conditional model and obtains a new noise predictor $\epsilonv_\theta'(\x_t,t,c)\coloneqq s\epsilonv_\theta(\x_t,t,c)+(1-s)\epsilonv_\theta(\x_t,t,\emptyset)$, where $\emptyset$ is a special condition standing for the unconditional case, $s>0$ is the guidance scale that controls the trade-off between image-condition alignment and diversity;
while classifier guidance~\cite{dhariwal2021diffusion} uses an extra classifier $p_\phi(c|\x_t,t)$ to obtain a new noise predictor $\epsilonv_\theta'(\x_t,t,c)\coloneqq\epsilonv_\theta(\x_t,t)-s\sigma_t\nabla_{\x}\log p_\phi(c|\x_t,t)$.

In addition, except for the noise prediction, DPMs can be parameterized as score predictor $\sv_\theta(\x_t,t)$ to predict $\nabla_{\x}\log q_t(\x_t,t)$, or data predictor $\x_\theta(\x_t,t)$ to predict $\x_0$. Under variance-preserving (VP) noise schedule which satisfies $\alpha_t^2+\sigma_t^2=1$~\cite{song2020score}, ``v'' predictor $\vv_\theta(\x_t,t)$ is also proposed to predict $\alpha_t\epsilonv-\sigma_t\x_0$~\cite{salimans2022progressive}. These different parameterizations are theoretically equivalent, but have an impact on the empirical performance when used in training~\cite{kingma2021variational,zheng2023improved}.

\subsection{Fast Sampling of DPMs with Exponential Integrators}
\label{sec:bg_sampler}
Among the methods for solving the diffusion ODE~\eqref{eqn:diff_ode_t}, recent works~\cite{zhang2022fast,lu2022dpm,lu2022dpmpp,zhao2023unipc} find that ODE solvers based on exponential integrators~\cite{hochbruck2010exponential} are more efficient and robust at a small number of function evaluations (<50). Specifically, an insightful observation by Lu et al. \citep{lu2022dpm} is that, by change-of-variable from $t$ to $\lambda_t\coloneqq\log(\alpha_t/\sigma_t)$ (half of the log-SNR), the diffusion ODE is transformed to
\begin{equation}
\label{eq:diffusion_ode_lambda_eps}
    \frac{\dm\x_\lambda}{\dm\lambda}=\frac{\dot\alpha_\lambda}{\alpha_\lambda}\x_\lambda-\sigma_\lambda\epsilonv_\theta(\x_\lambda,\lambda),
\end{equation}
where $\dot\alpha_\lambda\coloneqq\frac{\dm\alpha_\lambda}{\dm\lambda}$.
By utilizing the semi-linear structure of the diffusion ODE and exactly computing the linear term~\citep{zhang2022fast,lu2022dpm}, we can obtain the ODE solution as Eq.~\eqref{eqn:ode_noise_data_pred} (left). Such exact computation of the linear part reduces the discretization errors~\citep{lu2022dpm}.
Moreover, by leveraging the equivalence of different parameterizations, DPM-Solver++~\citep{lu2022dpmpp} rewrites Eq.~\eqref{eq:diffusion_ode_lambda_eps} by the data predictor $\x_\theta(\x_\lambda,\lambda)\coloneqq (\x_\lambda-\sigma_\lambda\epsilonv_\theta(\x_\lambda,\lambda))/\alpha_\lambda$ and obtains another ODE solution as Eq.~\eqref{eqn:ode_noise_data_pred} (right).
Such solution does not need to change the pretrained noise prediction model $\epsilonv_\theta$ during the sampling process, and empirically outperforms previous samplers based on $\epsilonv_\theta$~\citep{lu2022dpm}.
\begin{equation}
\label{eqn:ode_noise_data_pred}
    \frac{\x_t}{\alpha_t}=\frac{\x_s}{\alpha_s}-\int_{\lambda_s}^{\lambda_t}e^{-\lambda} \epsilonv_\theta(\x_\lambda,\lambda)\dm\lambda,\quad \frac{\x_t}{\sigma_t}=\frac{\x_s}{\sigma_s}+\int_{\lambda_s}^{\lambda_t}e^{\lambda}\x_\theta(\x_\lambda,\lambda)\dm\lambda
\end{equation}
However, to the best of our knowledge, the parameterizations for sampling are still manually selected and limited to noise/data prediction, which are not well-studied.
\section{Method}
We now present our method. We start with a new formulation of the ODE solution with extra coefficients, followed by our high-order solver and some practical considerations.
In the following discussions, we assume all the products between vectors are element-wise, and $\fv^{(k)}(\x_{\lambda},\lambda)=\frac{\dm^k\fv(\x_{\lambda},\lambda)}{\dm\lambda^k}$ is the $k$-th order total derivative of any function $\fv$ w.r.t. $\lambda$.

\subsection{Improved Formulation of Exact Solutions of Diffusion ODEs}

As mentioned in Sec.~\ref{sec:bg_sampler}, it is promising to explore the semi-linear structure of diffusion ODEs for fast sampling~\citep{zhang2022fast,lu2022dpm,lu2022dpmpp}. \textit{Firstly}, we reveal one key insight that we can choose the linear part according to Rosenbrock-type exponential integrators~\citep{hochbruck2009exponential,hochbruck2010exponential}. To this end, we consider a general form of diffusion ODEs by rewriting Eq.~\eqref{eq:diffusion_ode_lambda_eps} as
\begin{equation}
\label{eq:diffusion_ode_with_l}
\frac{\dm\x_\lambda}{\dm\lambda}
= \underbrace{
    \left(
        \frac{\dot\alpha_\lambda}{\alpha_\lambda} - \lv_\lambda
    \right) \x_\lambda
}_{\text{linear part}}
- \underbrace{
    \vphantom{\int_{\lambda_s}^{\lambda}e^{-\int_{\lambda_s}^{r}\sv_\tau\dm\tau}\bv_{r}\dm r}
    (\sigma_\lambda\epsilonv_\theta(\x_\lambda,\lambda) - \lv_\lambda\x_\lambda)
}_{\text{non-linear part},\coloneqq\Nv_\theta(\x_\lambda,\lambda)},
\end{equation}
where $\lv_\lambda$ is a $D$-dimensional undetermined coefficient depending on $\lambda$. We choose $\lv_\lambda$ to restrict the Frobenius norm of the gradient of the non-linear part w.r.t. $\x$:
\begin{equation}
\label{eqn:l_definition}
    \lv^*_\lambda = \argmin_{\lv_\lambda} \E_{p^\theta_\lambda(\x_\lambda)}\|\nabla_{\x}\Nv_\theta(\x_\lambda,\lambda)\|_F^2,
\end{equation}
where $p^\theta_\lambda$ is the distribution of samples on the ground-truth ODE trajectories at $\lambda$ (i.e., model distribution). Intuitively, it makes $\Nv_\theta$ insensitive to the errors of $\x$ and cancels all the ``linearty'' of $\Nv_\theta$. With $\lv_\lambda=\lv_\lambda^*$,  by the ``\textit{variation-of-constants}'' formula~\citep{atkinson2011numerical}, starting from $\x_{\lambda_s}$ at time $s$, the exact solution of Eq.~\eqref{eq:diffusion_ode_with_l} at time $t$ is
\begin{equation}
\label{eqn:l_solution}
\x_{\lambda_t} = 
\alpha_{\lambda_t}e^{-\int_{\lambda_s}^{\lambda_t} \lv_{\lambda}\dm\lambda}
\bigg( 
    \frac{\x_{\lambda_s}}{\alpha_{\lambda_s}}
    - \int_{\lambda_s}^{\lambda_t}
    e^{\int_{\lambda_s}^{\lambda}\lv_\tau\dm\tau}
    \underbrace{\fv_\theta(\x_\lambda,\lambda)}_{\text{approximated}}
    \dm\lambda
\bigg),
\end{equation}
where $\fv_\theta(\x_\lambda,\lambda)\coloneqq \frac{\Nv_\theta(\x_\lambda,\lambda)}{\alpha_{\lambda}}$. To calculate the solution in Eq.~\eqref{eqn:l_solution}, we need to approximate $\fv_\theta$ for each $\lambda\in[\lambda_s,\lambda_t]$ by certain polynomials~\citep{lu2022dpm,lu2022dpmpp}. 

\textit{Secondly}, we reveal another key insight that we can choose different functions to be approximated instead of $\fv_\theta$ and further reduce the discretization error, which is related to the total derivatives of the approximated function. To this end, we consider a scaled version of $\fv_\theta$ i.e.,
$\hv_\theta(\x_\lambda,\lambda) \coloneqq e^{-\int_{\lambda_s}^{\lambda} \sv_\tau\dm\tau}\fv_\theta(\x_\lambda,\lambda)$ where $\sv_\lambda$ is a $D$-dimensional coefficient dependent on $\lambda$, and then Eq.~\eqref{eqn:l_solution} becomes
\begin{equation}
\label{eq:solution_with_s}
\x_{\lambda_t} = 
\alpha_{\lambda_t}e^{-\int_{\lambda_s}^{\lambda_t} \lv_{\lambda}\dm\lambda}
\bigg( 
    \frac{\x_{\lambda_s}}{\alpha_{\lambda_s}}
    - \int_{\lambda_s}^{\lambda_t}
    e^{\int_{\lambda_s}^{\lambda}(\lv_\tau+\sv_\tau)\dm\tau}
    \underbrace{\hv_\theta(\x_\lambda,\lambda)}_{\text{approximated}}
    \dm\lambda
\bigg).
\end{equation}
Comparing with Eq.~\eqref{eqn:l_solution}, we change the approximated function from $\fv_\theta$ to $\hv_\theta$ by using an additional scaling term related to $\sv_\lambda$. As we shall see, the first-order discretization error is positively related to the norm of the first-order derivative $\hv_\theta^{(1)}=e^{-\int_{\lambda_s}^{\lambda} \sv_\tau\dm\tau}(\fv^{(1)}_\theta - \sv_\lambda \fv_\theta)$. Thus, we aim to minimize $\Vert \fv^{(1)}_\theta - \sv_\lambda \fv_\theta \Vert_2$, in order to reduce $\|\hv_\theta^{(1)}\|_2$ and the discretization error. As $\fv_\theta$ is a fixed function depending on the pretrained model, this minimization problem essentially finds a linear function of $\fv_\theta$ to approximate $\fv^{(1)}_\theta$. To achieve better linear approximation, we further introduce a bias term $\bv_\lambda\in\R^D$ and construct a function $\gv_\theta$ satisfying $\gv_\theta^{(1)}=e^{-\int_{\lambda_s}^{\lambda} \sv_\tau\dm\tau}(\fv^{(1)}_\theta - \sv_\lambda \fv_\theta - \bv_\lambda)$, which gives
\begin{equation}
\label{eqn:g_definition}
    \gv_\theta(\x_\lambda,\lambda) \coloneqq e^{-\int_{\lambda_s}^{\lambda} \sv_\tau\dm\tau}\fv_\theta(\x_\lambda,\lambda)
    - \int_{\lambda_s}^{\lambda}e^{-\int_{\lambda_s}^{r}\sv_\tau\dm\tau}\bv_{r}\dm r.
\end{equation}
With $\gv_\theta$, Eq.~\eqref{eq:solution_with_s} becomes
\begin{equation}
\label{eqn:solution}
\x_{\lambda_t} = 
\alpha_{\lambda_t}
\underbrace{
    \vphantom{\int_{\lambda_s}^{\lambda}e^{-\int_{\lambda_s}^{r}\sv_\tau\dm\tau}\bv_{r}\dm r}
    e^{-\int_{\lambda_s}^{\lambda_t} \lv_{\lambda}\dm\lambda}
}_{\text{linear coefficient}}
\bigg( 
    \frac{\x_{\lambda_s}}{\alpha_{\lambda_s}}
    - \int_{\lambda_s}^{\lambda_t}
\underbrace{
    \vphantom{\int_{\lambda_s}^{\lambda}e^{-\int_{\lambda_s}^{r}\sv_\tau\dm\tau}\bv_{r}\dm r}
    e^{\int_{\lambda_s}^{\lambda}(\lv_\tau+\sv_\tau)\dm\tau}
}_{\text{scaling coefficient}}
\Big(
\underbrace{
    \vphantom{\int_{\lambda_s}^{\lambda}e^{-\int_{\lambda_s}^{r}\sv_\tau\dm\tau}\bv_{r}\dm r}
    \gv_\theta(\x_\lambda,\lambda)
}_{\text{approximated}}
+ \underbrace{
    \int_{\lambda_s}^{\lambda}e^{-\int_{\lambda_s}^{r}\sv_\tau\dm\tau}\bv_{r}\dm r
}_{\text{bias coefficient}}
\Big)
    \dm\lambda
\bigg).
\end{equation}
Such formulation is equivalent to Eq.~\eqref{eqn:ode_noise_data_pred} but introduces three types of coefficients and a new parameterization $\gv_\theta$. We show in Appendix~\ref{appendix:expressive-power} that the generalized parameterization $\gv_\theta$ in Eq.~\eqref{eqn:g_definition} can cover a wide range of parameterization families in the form of $\psiv_\theta(\x_\lambda,\lambda)=\alphav(\lambda)\epsilonv_\theta(\x_\lambda,\lambda)+\betav(\lambda)\x_\lambda+\gammav(\lambda)$. We aim to reduce the discretization error by finding better coefficients than previous works~\citep{lu2022dpm,lu2022dpmpp}. 

Now we derive the concrete formula for analyzing the first-order discretization error. By replacing $\gv_\theta(\x_\lambda,\lambda)$ with $\gv_\theta(\x_{\lambda_s},\lambda_s)$ in Eq.~\eqref{eqn:solution}, we obtain the first-order approximation $\hat\x_{\lambda_t} = 
\alpha_{\lambda_t}
    e^{-\int_{\lambda_s}^{\lambda_t} \lv_{\lambda}\dm\lambda}
\left( 
    \frac{\x_{\lambda_s}}{\alpha_{\lambda_s}}
    - \int_{\lambda_s}^{\lambda_t}
    e^{\int_{\lambda_s}^{\lambda}(\lv_\tau+\sv_\tau)\dm\tau}
\left(
    \gv_\theta(\x_{\lambda_s},\lambda_s)
    + \int_{\lambda_s}^{\lambda}e^{-\int_{\lambda_s}^{r}\sv_\tau\dm\tau}\bv_{r}\dm r
\right)
    \dm\lambda
\right)$. As $\gv_\theta(\x_{\lambda_s},\lambda_s) = \gv_\theta(\x_{\lambda},\lambda ) + (\lambda_s-\lambda)\gv_\theta^{(1)}(\x_{\lambda},\lambda) + \Oc((\lambda-\lambda_s)^2)$ by Taylor expansion, it follows that \emph{the first-order discretization error} can be expressed as
\begin{equation}
\label{eqn:first-order-error}
\hat\x_{\lambda_t} - \x_{\lambda_t}
= \alpha_{\lambda_t}
    e^{-\int_{\lambda_s}^{\lambda_t} \lv_{\lambda}\dm\lambda}
\bigg( 
    \int_{\lambda_s}^{\lambda_t}
    e^{\int_{\lambda_s}^{\lambda}\lv_\tau\dm\tau}
    (\lambda - \lambda_s)
\Big(
    \fv^{(1)}_\theta(\x_\lambda,\lambda) - \sv_\lambda\fv_\theta(\x_\lambda,\lambda) - \bv_\lambda
\Big)
    \dm\lambda
\bigg) + \Oc(h^3),
\end{equation}
where $h=\lambda_t-\lambda_s$. Thus, given the optimal $\lv_\lambda=\lv_\lambda^*$ in Eq.~\eqref{eqn:l_definition}, the discretization error $\hat\x_{\lambda_t} - \x_{\lambda_t}$ mainly depends on $\fv^{(1)}_\theta - \sv_\lambda\fv_\theta - \bv_\lambda$. Based on this insight, we choose the coefficients $\sv_\lambda,\bv_\lambda$ by solving
\begin{equation}
\label{eqn:sb_definition}
\sv_\lambda^*, \bv_\lambda^* 
= \argmin_{\sv_\lambda,\bv_\lambda} \E_{p^\theta_\lambda(\x_\lambda)}\left[\left\|
        \fv^{(1)}_\theta(\x_\lambda,\lambda)-\sv_\lambda \fv_\theta(\x_\lambda,\lambda)
        - \bv_\lambda
    \right\|^2_2\right].
\end{equation}
For any $\lambda$, $\lv_\lambda^*,\sv_\lambda^*,\bv_\lambda^*$ all have analytic solutions involving the Jacobian-vector-product of the pretrained model $\epsilonv_\theta$, and they can be unbiasedly evaluated on a few datapoints $\{\x_\lambda^{(n)}\}_K\sim p_\lambda^\theta(\x_\lambda)$ via Monte-Carlo estimation (detailed in Section~\ref{section:implementation} and Appendix~\ref{appendix:EMS}). Therefore, we call $\lv_\lambda,\sv_\lambda,\bv_\lambda$ \textit{empirical model statistics} (EMS). In the following sections, we'll show that by approximating $\gv_\theta$ with Taylor expansion, we can develop our high-order solver for diffusion ODEs.

\subsection{Developing High-Order Solver}
\label{sec:develop-high-order}
\begin{figure}[t]
    \vspace{-.1in}
	\begin{minipage}[t]{.395\linewidth}
		\centering
		\scalebox{0.725}{\tikzset{every picture/.style={line width=0.75pt}} 

\begin{tikzpicture}[x=0.75pt,y=0.75pt,yscale=-1,xscale=1]

\draw   (257.92,67.54) .. controls (257.92,57.85) and (265.77,50) .. (275.46,50) .. controls (285.15,50) and (293,57.85) .. (293,67.54) .. controls (293,77.23) and (285.15,85.08) .. (275.46,85.08) .. controls (265.77,85.08) and (257.92,77.23) .. (257.92,67.54) -- cycle ;
\draw   (174.92,41.54) .. controls (174.92,31.85) and (182.77,24) .. (192.46,24) .. controls (202.15,24) and (210,31.85) .. (210,41.54) .. controls (210,51.23) and (202.15,59.08) .. (192.46,59.08) .. controls (182.77,59.08) and (174.92,51.23) .. (174.92,41.54) -- cycle ;
\draw [line width=1.5]  [dash pattern={on 1.69pt off 2.76pt}]  (55,136) -- (55,141.52) -- (55,150.52) ;
\draw   (90,123.63) -- (137.86,123.63) -- (137.86,120) -- (169.76,127.26) -- (137.86,134.52) -- (137.86,130.89) -- (90,130.89) -- cycle ;
\draw   (19.76,80.8) .. controls (19.76,73.73) and (25.49,68) .. (32.56,68) -- (70.96,68) .. controls (78.03,68) and (83.76,73.73) .. (83.76,80.8) -- (83.76,167.08) .. controls (83.76,174.15) and (78.03,179.88) .. (70.96,179.88) -- (32.56,179.88) .. controls (25.49,179.88) and (19.76,174.15) .. (19.76,167.08) -- cycle ;
\draw [line width=1.5]  [dash pattern={on 1.69pt off 2.76pt}]  (193,134) -- (193,148.52) ;
\draw   (173.76,76.72) .. controls (173.76,72.19) and (177.43,68.52) .. (181.96,68.52) -- (206.56,68.52) .. controls (211.09,68.52) and (214.76,72.19) .. (214.76,76.72) -- (214.76,174.32) .. controls (214.76,178.85) and (211.09,182.52) .. (206.56,182.52) -- (181.96,182.52) .. controls (177.43,182.52) and (173.76,178.85) .. (173.76,174.32) -- cycle ;
\draw    (210,41.54) -- (255.28,66.11) ;
\draw [shift={(257.92,67.54)}, rotate = 208.48] [fill={rgb, 255:red, 0; green, 0; blue, 0 }  ][line width=0.08]  [draw opacity=0] (6.25,-3) -- (0,0) -- (6.25,3) -- cycle    ;
\draw    (214.76,106.52) -- (255.69,69.55) ;
\draw [shift={(257.92,67.54)}, rotate = 137.91] [fill={rgb, 255:red, 0; green, 0; blue, 0 }  ][line width=0.08]  [draw opacity=0] (6.25,-3) -- (0,0) -- (6.25,3) -- cycle    ;

\draw (268,63) node [anchor=north west][inner sep=0.75pt]  [font=\normalsize] [align=left] {$\displaystyle \hat\x_{t}$};
\draw (185,37) node [anchor=north west][inner sep=0.75pt]  [font=\normalsize] [align=left] {$\displaystyle \x_{s}$};
\draw (25,76) node [anchor=north west][inner sep=0.75pt]  [font=\normalsize] [align=left] {$\displaystyle ( t_{s} ,\gv_{s})$};
\draw (24,153) node [anchor=north west][inner sep=0.75pt]  [font=\normalsize] [align=left] {$\displaystyle ( t_{i_{n}} ,\gv_{i_{n}})$};
\draw (89,107) node [anchor=north west][inner sep=0.75pt]  [font=\footnotesize] [align=left] {estimation};
\draw (182,73) node [anchor=north west][inner sep=0.75pt]  [font=\normalsize] [align=left] {$\displaystyle \gv_{s}^{( 0)}$};
\draw (182,104) node [anchor=north west][inner sep=0.75pt]  [font=\normalsize] [align=left] {$\displaystyle \gv_{s}^{( 1)}$};
\draw (182,153) node [anchor=north west][inner sep=0.75pt]  [font=\normalsize] [align=left] {$\displaystyle \gv_{s}^{( n)}$};
\draw (25,105) node [anchor=north west][inner sep=0.75pt]  [font=\normalsize] [align=left] {$\displaystyle ( t_{i_{1}} ,\gv_{i_{1}})$};

\end{tikzpicture}}
		\small{	(a) Local Approximation}
	\end{minipage}
	\begin{minipage}[t]{.6\linewidth}
		\centering
		\scalebox{0.6}{\input{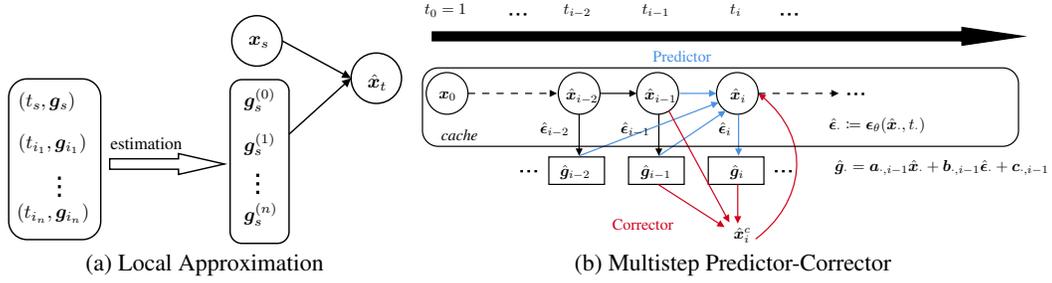}}
		\small{	(b) Multistep Predictor-Corrector}
	\end{minipage}
	\caption{\label{fig:1} \small{Illustration of our high-order solver. (a) $(n+1)$-th order local approximation from time $s$ to time $t$, provided $n$ extra function values of $\gv_\theta$. (b) Multistep predictor-corrector procedure as our global solver. A combination of second-order predictor and second-order corrector is shown. $a_{\cdot,i-1},b_{\cdot,i-1},c_{\cdot,i-1}$ are abbreviations of coefficients in Eq.~\eqref{eqn:g_definition}.}}
	\vspace{-.1in}
\end{figure}
In this section, we propose our high-order solver for diffusion ODEs with local accuracy and global convergence order guarantee
by leveraging our proposed solution formulation in Eq.~\eqref{eqn:solution}. The proposed solver and analyses are
highly motivated by the methods of exponential integrators~\cite{hochbruck2005explicit,hochbruck2010exponential} in the ODE literature and their previous applications in the field of diffusion models~\cite{zhang2022fast,lu2022dpm,lu2022dpmpp,zhao2023unipc}. Though the EMS are designed to minimize the first-order error, they can also help our high-order solver (see Appendix~\ref{appendix:why-first-help-high}).


For simplicity, denote $\Av(\lambda_s,\lambda_t)\coloneqq e^{-\int_{\lambda_s}^{\lambda_t}\lv_\tau \dm\tau}$, $\Ev_{\lambda_s}(\lambda)\coloneqq e^{\int_{\lambda_s}^{\lambda}(\lv_\tau+\sv_\tau)\dm\tau}$, $\Bv_{\lambda_s}(\lambda)\coloneqq\int_{\lambda_s}^{\lambda}e^{-\int_{\lambda_s}^{r}\sv_\tau\dm\tau}\bv_{r}\dm r$. Though high-order ODE solvers essentially share the same mathematical principles, since we utilize a more complicated parameterization $\gv_\theta$ and ODE formulation in Eq.~\eqref{eqn:solution} than previous works~\cite{zhang2022fast,lu2022dpm,lu2022dpmpp,zhao2023unipc}, we divide the development of high-order solver into simplified local and global parts, which are not only easier to understand, but also neat and general for any order.
\subsubsection{Local Approximation}
Firstly, we derive formulas and properties of the local approximation, which describes how we transit locally from time $s$ to time $t$. It can be divided into two parts: discretization of the integral in Eq.~\eqref{eqn:solution} and estimating the high-order derivatives in the Taylor expansion. 

\textbf{Discretization.}$\mbox{ }$ Denote $\gv^{(k)}_s\coloneqq \gv_\theta^{(k)}(\x_{\lambda_s},\lambda_s)$. For $n\geq 0$, to obtain the $(n+1)$-th order discretization of Eq.~\eqref{eqn:solution}, we take the $n$-th order Taylor expansion of $\gv_\theta(\x_\lambda,\lambda)$ w.r.t. $\lambda$ at $\lambda_s$:
$\gv_\theta(\x_\lambda,\lambda)=\sum_{k=0}^{n}\frac{(\lambda-\lambda_s)^k}{k!}\gv^{(k)}_s+\Oc((\lambda-\lambda_s)^{n+1})$. Substituting it into Eq.~\eqref{eqn:solution} yields
\begin{equation}
\label{eqn:solution_taylor}
\frac{\x_t}{\alpha_t}=\Av(\lambda_s,\lambda_t)\left(\frac{\x_s}{\alpha_s}\!-\!\int_{\lambda_s}^{\lambda_t}\Ev_{\lambda_s}(\lambda)\Bv_{\lambda_s}(\lambda)\dm\lambda\!-\!\sum_{k=0}^{n}\gv^{(k)}_s\int_{\lambda_s}^{\lambda_t}\Ev_{\lambda_s}(\lambda)\frac{(\lambda-\lambda_s)^k}{k!}\dm\lambda\right)\! +\! \Oc(h^{n+2})
\end{equation}
Here we only need to estimate the $k$-th order total derivatives $\gv^{(k)}_\theta(\x_{\lambda_s},\lambda_s)$ for $0\leq k\leq n$, since the other terms are determined once given $\lambda_s,\lambda_t$ and $\lv_\lambda,\sv_\lambda,\bv_\lambda$, which we'll discuss next.

\textbf{High-order derivative estimation.}$\mbox{ }$
For $(n+1)$-th order approximation, we use the finite difference of $\gv_\theta(\x_\lambda,\lambda)$ at previous $n+1$ steps $\lambda_{i_{n}},\dots,\lambda_{i_1},\lambda_s$ to estimate each $\gv_\theta^{(k)}(\x_{\lambda_s},\lambda_s)$. 
Such derivation is to match the coefficients of Taylor expansions.
Concretely, denote $\delta_k\coloneqq\lambda_{i_k}-\lambda_{s}$, $\gv_{i_k}\coloneqq \gv_\theta(\x_{\lambda_{i_k}},\lambda_{i_k})$, and the estimated high-order derivatives $\hat\gv^{(k)}_s$ can be solved by the following linear system:
\begin{equation}
\label{eqn:high_order_estimation}
    \begin{pmatrix}
    \delta_1&\delta_1^2&\cdots&\delta_1^n\\
    \vdots&\vdots&\ddots&\vdots\\
    \delta_n&\delta_n^2&\cdots&\delta_n^n
    \end{pmatrix}
    \begin{pmatrix}
    \hat\gv^{(1)}_s\\
    \vdots\\
    \frac{\hat\gv^{(n)}_s}{n!}
    \end{pmatrix}=
    \begin{pmatrix}
    \gv_{i_1}-\gv_{s}\\
    \vdots\\
    \gv_{i_n}-\gv_{s}
    \end{pmatrix}
\end{equation}
Then by substituting $\hat\gv^{(k)}_s$ into Eq.~\eqref{eqn:solution_taylor} and dropping the $\Oc(h^{n+2})$ error terms, we obtain the $(n+1)$-th order local approximation:
\begin{equation}
\label{eqn:local_approx}
\frac{\hat\x_t}{\alpha_t}=\Av(\lambda_s,\lambda_t)\left(\frac{\x_s}{\alpha_s}-\int_{\lambda_s}^{\lambda_t}\Ev_{\lambda_s}(\lambda)\Bv_{\lambda_s}(\lambda)\dm\lambda-\sum_{k=0}^{n}\hat\gv^{(k)}_s\int_{\lambda_s}^{\lambda_t}\Ev_{\lambda_s}(\lambda)\frac{(\lambda-\lambda_s)^k}{k!}\dm\lambda\right)
\end{equation}
where $\hat\gv_\theta^{(0)}(\x_{\lambda_s},\lambda_s)=\gv_s$.
Eq.~\eqref{eqn:high_order_estimation} and Eq.~\eqref{eqn:local_approx} provide an update rule to transit from time $s$ to time $t$ and get an approximated solution $\hat\x_t$, when we already have the solution $\x_s$. For $(n+1)$-th order approximation, we need $n$ extra solutions $\x_{\lambda_{i_k}}$ and their corresponding function values $\gv_{i_k}$. We illustrate the procedure in Fig.~\ref{fig:1}(a) and summarize it in Appendix~\ref{appendix:algorithm}. In the following theorem, we show that under some assumptions, such local approximation has a guarantee of order of accuracy. 
\begin{theorem}[Local order of accuracy, proof in Appendix~\ref{appendix:proof_order_local}]
\label{theorem:local}
Suppose $\x_{\lambda_{i_k}}$ are exact (i.e., on the ground-truth ODE trajectory passing $\x_s$) for $k=1,\dots,n$, then under some regularity conditions detailed in Appendix~\ref{appendix:assumptions}, the local truncation error $\hat\x_t-\x_t=\Oc(h^{n+2})$, which means the local approximation has $(n+1)$-th order of accuracy.
\end{theorem}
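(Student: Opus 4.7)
}
The plan is to subtract the approximate update in Eq.~\eqref{eqn:local_approx} from the exact Taylor-expanded solution in Eq.~\eqref{eqn:solution_taylor}, so that all deterministic integrals involving $\lv_\lambda,\sv_\lambda,\bv_\lambda$ cancel, and what remains is the error in the finite-difference estimates of the derivatives $\gv_s^{(k)}$ weighted by certain basis integrals, plus the $\Oc(h^{n+2})$ Taylor remainder. Concretely I would write
\begin{equation*}
\hat\x_t-\x_t \;=\; -\alpha_t\,\Av(\lambda_s,\lambda_t)\sum_{k=1}^{n}\bigl(\hat\gv_s^{(k)}-\gv_s^{(k)}\bigr)\!\int_{\lambda_s}^{\lambda_t}\!\Ev_{\lambda_s}(\lambda)\frac{(\lambda-\lambda_s)^k}{k!}\dm\lambda \;+\; \Oc(h^{n+2}),
\end{equation*}
using $\hat\gv_s^{(0)}=\gv_s^{(0)}$. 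It then suffices to control the two factors in each summand.

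For the integrals, under the regularity assumptions on $\lv_\lambda$ and $\sv_\lambda$ (continuity/boundedness on the working interval) $\Ev_{\lambda_s}(\lambda)$ is uniformly bounded, so by a direct estimate each integral is $\Oc(h^{k+1})$. The main task is therefore to bound the derivative estimation errors $\hat\gv_s^{(k)}-\gv_s^{(k)}$. Since the points $\x_{\lambda_{i_j}}$ are assumed exact, I would Taylor-expand each $\gv_{i_j}=\gv_\theta(\x_{\lambda_{i_j}},\lambda_{i_j})$ around $\lambda_s$ along the true trajectory, giving $\gv_{i_j}-\gv_s=\sum_{k=1}^{n}\frac{\delta_j^k}{k!}\gv_s^{(k)}+\Oc(h^{n+1})$; subtracting this identity from the defining linear system Eq.~\eqref{eqn:high_order_estimation} yields a Vandermonde-type system
\begin{equation*}
\sum_{k=1}^{n}\delta_j^{\,k}\,\frac{\hat\gv_s^{(k)}-\gv_s^{(k)}}{k!}=\Oc(h^{n+1}),\qquad j=1,\dots,n.
\end{equation*}

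The main obstacle, and the step that actually uses the regularity assumptions, is inverting this system with $h$-uniform control. My plan is to factor the coefficient matrix as $M=\bar V\,H$, where $H=\diag(h,h^2,\dots,h^n)$ and $\bar V_{jk}=(\delta_j/h)^{k}$; under the standard assumption that the rescaled offsets $\mu_j:=\delta_j/h$ are distinct and bounded (both above and below away from $0$), $\bar V$ is a nonsingular Vandermonde matrix with $h$-independent condition number, so $\bar V^{-1}=\Oc(1)$. Therefore $M^{-1}=H^{-1}\bar V^{-1}$ scales row $k$ by $h^{-k}$, and solving the system component-wise gives
\begin{equation*}
\hat\gv_s^{(k)}-\gv_s^{(k)}=\Oc\!\bigl(h^{\,n+1-k}\bigr),\qquad k=1,\dots,n.
\end{equation*}

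Substituting back, the $k$-th summand contributes $\Oc(h^{n+1-k})\cdot\Oc(h^{k+1})=\Oc(h^{n+2})$, and since $\Av(\lambda_s,\lambda_t)$ and $\alpha_t$ are bounded by regularity, the total local truncation error is $\Oc(h^{n+2})$, establishing $(n+1)$-th order local accuracy. The routine calculations I would not grind through are the explicit bound on $\Ev_{\lambda_s}$, the Taylor remainder estimate for $\gv_\theta$ along the exact trajectory (which needs $\gv_\theta$ to be sufficiently smooth, as will be listed among the regularity conditions in Appendix~\ref{appendix:assumptions}), and the standard Vandermonde invertibility argument.
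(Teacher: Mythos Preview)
Your proposal is correct and follows essentially the same route as the paper's proof: subtract Eq.~\eqref{eqn:local_approx} from Eq.~\eqref{eqn:solution_taylor}, bound $\Av(\lambda_s,\lambda_t)=\Oc(1)$ and the basis integrals by $\Oc(h^{k+1})$, Taylor-expand the $\gv_{i_j}$ to obtain a Vandermonde system with $\Oc(h^{n+1})$ right-hand side, and factor the coefficient matrix as a fixed Vandermonde in the rescaled nodes $r_j=\delta_j/h$ times $\diag(h,\dots,h^n)$ to extract $\hat\gv_s^{(k)}-\gv_s^{(k)}=\Oc(h^{n+1-k})$. The paper packages your node assumption as $\delta_k=\Theta(h)$ (Assumption~\ref{assum:3}), which is exactly the condition you identified for $\bar V^{-1}=\Oc(1)$.
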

Besides, we have the following theorem showing that, whatever the order is, the local approximation is unbiased given our choice of $\sv_\lambda,\bv_\lambda$ in Eq.~\eqref{eqn:sb_definition}. In practice, the phenomenon of reduced bias can be empirically observed (Section~\ref{sec:vis_quality}).
\begin{theorem}[Local unbiasedness, proof in Appendix~\ref{appendix:proof_unbiasedness}]
\label{theorem:unbiased}
Given the optimal $\sv_\lambda^*,\bv_\lambda^*$ in Eq.~\eqref{eqn:sb_definition}, For the $(n+1)$-th order approximation, suppose $\x_{\lambda_{i_1}},\dots,\x_{\lambda_{i_n}}$ are on the ground-truth ODE trajectory passing $\x_{\lambda_s}$, then $\E_{p^\theta_{\lambda_s}(\x_s)}\left[\hat\x_t-\x_t\right]=0$.
\end{theorem}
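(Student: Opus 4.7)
The plan is to express the local error as a trajectory integral and then show that, under the optimality of $\sv_\lambda^*,\bv_\lambda^*$, every random quantity appearing in the integrand has zero mean under the model marginal $p_\lambda^\theta$. First I would subtract Eq.~\eqref{eqn:local_approx} from the exact formula in Eq.~\eqref{eqn:solution}. The linear, bias, and $\x_s/\alpha_s$ terms cancel, yielding
\begin{equation*}
\hat\x_t - \x_t \;=\; \alpha_t\,\Av(\lambda_s,\lambda_t)\int_{\lambda_s}^{\lambda_t}\Ev_{\lambda_s}(\lambda)\Bigl[\gv_\theta(\x_\lambda,\lambda) - \sum_{k=0}^{n}\hat\gv_s^{(k)}\tfrac{(\lambda-\lambda_s)^k}{k!}\Bigr]\dm\lambda,
\end{equation*}
in which the prefactors $\alpha_t,\Av,\Ev_{\lambda_s}$ are deterministic. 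Hence it suffices to prove that the bracketed expression has zero expectation for every $\lambda\in[\lambda_s,\lambda_t]$, after which Fubini finishes the argument.

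The pivotal observation is that with $\sv_\lambda=\sv_\lambda^*$, $\bv_\lambda=\bv_\lambda^*$, the first-order optimality of Eq.~\eqref{eqn:sb_definition} with respect to $\bv_\lambda$ gives $\E_{p_\lambda^\theta}[\fv_\theta^{(1)}-\sv_\lambda^*\fv_\theta-\bv_\lambda^*]=\mathbf{0}$. Differentiating Eq.~\eqref{eqn:g_definition} yields $\gv_\theta^{(1)}(\x_\lambda,\lambda)=e^{-\int_{\lambda_s}^{\lambda}\sv_\tau^*\dm\tau}(\fv_\theta^{(1)}-\sv_\lambda^*\fv_\theta-\bv_\lambda^*)$, so the optimality condition immediately produces the clean identity $\E_{p_\lambda^\theta}[\gv_\theta^{(1)}(\x_\lambda,\lambda)]=\mathbf{0}$ for every $\lambda$. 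This is the single fact on which everything else will hinge.

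Next I would transfer this identity to the random quantities in the bracket by using that the ODE flow pushes $p_{\lambda_s}^\theta$ forward to $p_\lambda^\theta$, so that expectation over $\x_s\sim p_{\lambda_s}^\theta$ of any functional along the trajectory equals the expectation under the marginal. Applying the fundamental theorem of calculus along the trajectory gives $\gv_\theta(\x_\lambda,\lambda)-\gv_s=\int_{\lambda_s}^{\lambda}\gv_\theta^{(1)}(\x_\tau,\tau)\dm\tau$ and $\gv_{i_k}-\gv_s=\int_{\lambda_s}^{\lambda_{i_k}}\gv_\theta^{(1)}(\x_\tau,\tau)\dm\tau$; taking expectations and exchanging the order of integration (by standard dominated-convergence/regularity in Appendix~\ref{appendix:assumptions}) yields $\E[\gv_\theta(\x_\lambda,\lambda)-\gv_s]=\mathbf{0}$ and $\E[\gv_{i_k}-\gv_s]=\mathbf{0}$. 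Since the linear system \eqref{eqn:high_order_estimation} inverts a Vandermonde matrix whose entries $\delta_k^j$ are purely deterministic, each $\hat\gv_s^{(k)}$ for $k\geq 1$ is a deterministic linear combination of $\{\gv_{i_j}-\gv_s\}$ and therefore also has zero mean.

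Putting it together, $\E[\hat\gv_s^{(k)}]=\mathbf{0}$ for $k\geq 1$ and $\hat\gv_s^{(0)}=\gv_s$ exactly cancels $\E[\gv_\theta(\x_\lambda,\lambda)]=\E[\gv_s]$, so the bracketed integrand has zero expectation pointwise in $\lambda$, whence $\E[\hat\x_t-\x_t]=\mathbf{0}$. The main obstacle I anticipate is purely technical: justifying the interchange of expectation with the two nested $\lambda$-integrals and the linear-system solve, which will require the smoothness/integrability bounds already collected in Appendix~\ref{appendix:assumptions}; the algebraic content of the argument is surprisingly light once the identity $\E[\gv_\theta^{(1)}]=\mathbf{0}$ is in hand.
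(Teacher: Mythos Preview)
The proposal is correct and follows essentially the same approach as the paper's proof in Appendix~\ref{appendix:proof_unbiasedness}: both subtract the exact solution from the local approximation, use the optimality condition on $\bv_\lambda^*$ in Eq.~\eqref{eqn:sb_definition} to obtain $\E_{p_\lambda^\theta}[\gv_\theta^{(1)}]=\mathbf{0}$, invoke Newton--Leibniz to turn $\gv_\theta(\x_\lambda,\lambda)-\gv_s$ and $\gv_{i_k}-\gv_s$ into trajectory integrals of $\gv_\theta^{(1)}$, use the pushforward of $p_{\lambda_s}^\theta$ under the ODE flow to transfer the zero-mean property, and finally exploit that $\hat\gv_s^{(k)}$ is a deterministic linear combination of these differences. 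The only cosmetic difference is that the paper expands $\hat\gv_s^{(k)}$ explicitly via the inverse Vandermonde entries $(\Rv_n^{-1})_{kl}$, whereas you keep the bracket intact and argue by linearity; the substance is identical.
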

\subsubsection{Global Solver}
\label{sec:global-solver}
Given $M+1$ time steps $\{t_i\}_{i=0}^M$, starting from some initial value, we can repeat the local approximation $M$ times to make consecutive transitions from each $t_{i-1}$ to $t_i$ until we reach an acceptable solution. At each step, we apply multistep methods~\citep{atkinson2011numerical} by caching and reusing the previous $n$ values at timesteps $t_{i-1-n},\dots,t_{i-2}$, which is proven to be more stable when NFE is small~\cite{lu2022dpmpp,zhang2022fast}. Moreover, we also apply the predictor-corrector method~\cite{zhao2023unipc} to refine the approximation at each step without introducing extra NFE. Specifically, the $(n+1)$-th order predictor is the case of the local approximation when we choose $(t_{i_n},\dots,t_{i_1},s,t)=(t_{i-1-n},\dots,t_{i-2},t_{i-1},t_i)$, and the $(n+1)$-th order corrector is the case when we choose $(t_{i_n},\dots,t_{i_1},s,t)=(t_{i-n},\dots,t_{i-2},t_i,t_{i-1},t_i)$. We present our $(n+1)$-th order multistep predictor-corrector procedure in Appendix~\ref{appendix:algorithm}. We also illustrate a second-order case in Fig.~\ref{fig:1}(b). Note that different from previous works, in the local transition from $t_{i-1}$ to $t_i$, the previous function values $\hat\gv_{i_k}$ ($1\leq k\leq n$) used for derivative estimation are dependent on $i$ and are different during the sampling process because $\gv_\theta$ is dependent on the current $t_{i-1}$ (see Eq.~\eqref{eqn:g_definition}). Thus, we directly cache $\hat\x_i,\hat\epsilonv_{i}$ and reuse them to compute $\hat\gv_i$ in the subsequent steps.
Notably, our proposed solver also has a global convergence guarantee, as shown in the following theorem. For simplicity, we only consider the predictor case and the case with corrector can also be proved by similar derivations in~\citep{zhao2023unipc}.
\begin{theorem} [Global order of convergence, proof in Appendix~\ref{appendix:proof_order_global}]
\label{theorem:global}
For $(n+1)$-th order predictor, if we iteratively
compute a sequence $\{\hat\x_i\}_{i=0}^M$ to approximate the true solutions $\{\x_i\}_{i=0}^M$ at $\{t_i\}_{i=0}^M$, then under both local and global assumptions detailed in Appendix~\ref{appendix:assumptions}, the final error $|\hat\x_M-\x_M|=\Oc(h^{n+1})$, where $|\cdot|$ denotes the element-wise absolute value, and $h=\max_{1\leq i\leq M}(\lambda_i-\lambda_{i-1})$.
\end{theorem}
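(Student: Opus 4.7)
The plan is to follow the standard strategy for proving global convergence of a multistep scheme: combine the local truncation estimate of Theorem~\ref{theorem:local} with a stability bound, and conclude via a discrete Gronwall argument. Let $\ev_i \coloneqq \hat\x_i - \x_i$ denote the global error. For the transition from $t_{i-1}$ to $t_i$, I would decompose one step of the predictor into (a) the error made by running Eq.~\eqref{eqn:local_approx} starting from the exact $\x_{i-1}$ with exact cached function values $\gv_{i-1-k}=\gv_\theta(\x_{\lambda_{i-1-k}},\lambda_{i-1-k})$, and (b) the additional error introduced by the fact that in reality we run Eq.~\eqref{eqn:local_approx} starting from $\hat\x_{i-1}$ with cached approximations $\hat\gv_{i-1-k}$. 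Part (a) is exactly the local truncation error, which by Theorem~\ref{theorem:local} is $\Oc(h^{n+2})$.

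For part (b), I would use the regularity assumptions (bounded Lipschitz constants of $\epsilonv_\theta$, boundedness of the EMS coefficients $\lv_\lambda,\sv_\lambda,\bv_\lambda$, and bounded ratios $h_i/h_j$ of consecutive step sizes) to convert differences in $\hat\x$ into differences in $\hat\gv$ via the definition in Eq.~\eqref{eqn:g_definition}, giving $|\hat\gv_j-\gv_j|\le L|\ev_j|$ for a uniform constant $L$. The crucial technical point is to propagate these bounds through the linear system Eq.~\eqref{eqn:high_order_estimation} for the derivative estimates: the coefficient matrix is Vandermonde in $\delta_1,\ldots,\delta_n$, so its inverse has entries of order $h^{-k}$, but these are multiplied by $\int_{\lambda_s}^{\lambda_t}\Ev_{\lambda_s}(\lambda)(\lambda-\lambda_s)^k/k!\,\dm\lambda=\Oc(h^{k+1})$ in Eq.~\eqref{eqn:local_approx}, so each contribution of $|\hat\gv_{i-1-k}-\gv_{i-1-k}|$ enters with a coefficient of order $h$. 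Together with the linear coefficient $\Av(\lambda_{s},\lambda_{t})=1+\Oc(h)$ multiplying $\ev_{i-1}$, this yields a recursion of the form
\begin{equation*}
|\ev_i| \;\le\; (1+C_0 h)|\ev_{i-1}| + C_1 h \sum_{k=1}^{n}|\ev_{i-1-k}| + C_2 h^{n+2},
\end{equation*}
where $C_0,C_1,C_2$ are uniform constants and the inequality is taken element-wise.

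Finally, I would conclude by a standard discrete Gronwall argument for multistep methods. Writing $E_i \coloneqq \max_{0\le j\le i}|\ev_j|$ (element-wise), the recursion yields $E_i \le (1 + (C_0 + n C_1) h) E_{i-1} + C_2 h^{n+2}$, which iterates to $E_M \le e^{(C_0+nC_1)(\lambda_M-\lambda_0)} E_{n} + C_2 h^{n+1}\cdot\frac{e^{(C_0+nC_1)(\lambda_M-\lambda_0)}-1}{C_0+nC_1}$. Provided the starting values $\hat\x_0,\dots,\hat\x_n$ are computed with error $\Oc(h^{n+1})$ (using warm-up by lower-order variants of the same scheme, which the global assumptions in Appendix~\ref{appendix:assumptions} will posit), we obtain $|\hat\x_M-\x_M| = \Oc(h^{n+1})$.

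The main obstacle I anticipate is the careful bookkeeping in part (b): the cached $\hat\gv_{i-1-k}$ depend on $i-1$ through the integration bounds in Eq.~\eqref{eqn:g_definition} (as emphasized at the end of Section~\ref{sec:global-solver}), so one must re-express these in a form where the exponential scaling and bias integrals across adjacent steps can be compared and bounded uniformly. Once that bookkeeping is set up, the Vandermonde cancellation between $h^{-k}$ and $h^{k+1}$ is routine, and the Gronwall step is standard.
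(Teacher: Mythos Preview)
Your proposal is correct and follows essentially the same route as the paper: the paper first isolates your part (b) as a separate lemma (local truncation error with inexact previous values), obtaining the same recursion $|\Delta_m|\le\big(\frac{\alpha_{t_m}\Av(\lambda_{t_{m-1}},\lambda_{t_m})}{\alpha_{t_{m-1}}}+Ch\big)|\Delta_{m-1}|+Ch\sum_{k=0}^n|\Delta_{m-k-1}|+C_0h^{n+2}$ via exactly the Vandermonde $h^{-k}$ versus $h^{k+1}$ cancellation you describe, and then applies the same discrete Gronwall argument on $f_m=\max_{i\le m}|\Delta_i|$ together with the starting-value assumption $\hat\x_i-\x_i=\Oc(h^{n+1})$ for $1\le i\le n$. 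The only cosmetic difference is that the paper keeps the explicit factor $\alpha_{t_m}\Av(\lambda_{t_{m-1}},\lambda_{t_m})/\alpha_{t_{m-1}}$ (noting it is $1+\Oc(h)$) where you directly write $1+C_0h$; your anticipated bookkeeping obstacle about the $i$-dependence of $\hat\gv_{i-1-k}$ does not actually arise, since the Lipschitz bound on $\epsilonv_\theta$ transfers directly to $\gv_\theta$ uniformly in the base point $\lambda_{i-1}$.
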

\subsection{Practical Techniques}
In this section, we introduce some practical techniques that further improve the sample quality in the case of small NFE or large guidance scales.

\textbf{Pseudo-order solver for small NFE.}$\mbox{ }$
When NFE is extremely small (e.g., 5$\sim$10), the error at each timestep becomes rather large, and incorporating too many previous values by high-order solver at each step will cause instabilities. To alleviate this problem, we propose a technique called \textit{pseudo-order} solver: when estimating the $k$-th order derivative, we only utilize the previous $k+1$ function values of $\gv_\theta$, instead of all the $n$ previous values as in Eq.~\eqref{eqn:high_order_estimation}. For each $k$, we can obtain $\hat\gv^{(k)}_s$ by solving a part of Eq.~\eqref{eqn:high_order_estimation} and taking the last element:
\begin{equation}
\label{eqn:pseudo}
    \begin{pmatrix}
    \delta_1&\delta_1^2&\cdots&\delta_1^k\\
    \vdots&\vdots&\ddots&\vdots\\
    \delta_k&\delta_k^2&\cdots&\delta_k^k
    \end{pmatrix}
    \begin{pmatrix}
    \cdot\\
    \vdots\\
    \frac{\hat\gv^{(k)}_s}{k!}
    \end{pmatrix}=
    \begin{pmatrix}
    \gv_{i_1}-\gv_{s}\\
    \vdots\\
    \gv_{i_k}-\gv_{s}
    \end{pmatrix},\quad k=1,2,\dots,n
\end{equation}
In practice, we do not need to solve $n$ linear systems. Instead, the solutions for $\hat\gv^{(k)}_s,k=1,\dots,n$ have a simpler recurrence relation similar to Neville’s method~\cite{neville1934iterative} in Lagrange polynomial interpolation. Denote $i_0\coloneqq s$ so that $\delta_0=\lambda_{i_0}-\lambda_s=0$, we have
\begin{theorem} [Pseudo-order solver]
\label{theorem:pseudo}
For each $k$, the solution in Eq.~\eqref{eqn:pseudo} is $\hat\gv^{(k)}_s=k!D^{(k)}_0$, where
\begin{equation}
\label{eqn:pseudo_recur}
\begin{aligned}
D^{(0)}_l&\coloneqq\gv_{i_l},\quad l=0,1,\dots,n\\
D^{(k)}_l&\coloneqq\frac{D^{(k-1)}_{l+1}-D^{(k-1)}_l}{\delta_{l+k}-\delta_l},\quad l=0,1,\dots, n-k
\end{aligned}
\end{equation}
\end{theorem}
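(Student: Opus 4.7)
The plan is to identify the pseudo-order linear system with classical polynomial interpolation and then recognize the recurrence in Eq.~\eqref{eqn:pseudo_recur} as the Newton divided-difference recurrence.

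First, fix $k$ and let $P_k(\lambda)$ denote the unique polynomial of degree at most $k$ interpolating the data $(\lambda_{i_l},\gv_{i_l})$ for $l=0,1,\dots,k$, where $i_0\coloneqq s$ so that $\delta_0=0$. Expanded in the monomial basis centered at $\lambda_s$,
\begin{equation*}
P_k(\lambda) = \gv_s + \sum_{j=1}^{k} c_j\,(\lambda-\lambda_s)^j ,
\end{equation*}
the node $\lambda_{i_0}=\lambda_s$ is matched automatically by the constant term $\gv_s$, while the remaining conditions $P_k(\lambda_{i_l}) = \gv_{i_l}$ for $l=1,\dots,k$ read $\sum_{j=1}^{k} c_j \delta_l^j = \gv_{i_l}-\gv_s$, which is precisely the linear system in Eq.~\eqref{eqn:pseudo} under the identification $c_j = \hat\gv^{(j)}_s/j!$. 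Since the Vandermonde-type matrix is invertible whenever the $\delta_l$ are distinct, $\hat\gv^{(k)}_s/k! = c_k$ is well-defined, and it equals the leading coefficient of the interpolating polynomial $P_k$.

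Second, I would invoke the standard Newton form of the interpolant: the leading coefficient of $P_k$ equals the $k$-th divided difference of $\gv$ over the nodes $\lambda_{i_0},\dots,\lambda_{i_k}$, defined by the base case $\gv[\lambda_{i_l}]=\gv_{i_l}$ and
\begin{equation*}
\gv[\lambda_{i_l},\dots,\lambda_{i_{l+k}}] = \frac{\gv[\lambda_{i_{l+1}},\dots,\lambda_{i_{l+k}}] - \gv[\lambda_{i_l},\dots,\lambda_{i_{l+k-1}}]}{\lambda_{i_{l+k}}-\lambda_{i_l}}.
\end{equation*}
Since $\delta_{l+k}-\delta_l = \lambda_{i_{l+k}}-\lambda_{i_l}$, this is exactly the recurrence in Eq.~\eqref{eqn:pseudo_recur} defining $D^{(k)}_l$, with matching base case $D^{(0)}_l=\gv_{i_l}$. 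A straightforward induction on $k$ therefore gives $D^{(k)}_0 = \gv[\lambda_{i_0},\dots,\lambda_{i_k}] = c_k$, whence $\hat\gv^{(k)}_s = k!\,D^{(k)}_0$ as claimed.

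There is no substantial analytic obstacle here; the statement is essentially Neville/Newton divided differences in disguise. The only bookkeeping care required is to observe that the $l=0$ interpolation condition is absorbed into the constant term $\gv_s$ (so the \emph{full} $(k+1)$-node interpolation problem genuinely reduces to the $k\times k$ system written in Eq.~\eqref{eqn:pseudo}), and to note that shifting the nodes by $-\lambda_s$ does not alter the leading coefficient (the coefficient of the top power is preserved under any translation of the argument). With these two observations in place, the theorem follows immediately from the classical divided-difference identity.
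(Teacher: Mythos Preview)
Your proof is correct. Both your approach and the paper's hinge on recognizing that $\hat\gv^{(k)}_s/k!$ is the leading coefficient of the degree-$k$ polynomial interpolant through the nodes $(\lambda_{i_0},\gv_{i_0}),\dots,(\lambda_{i_k},\gv_{i_k})$, and that this leading coefficient obeys the divided-difference recurrence. The paper, however, takes a longer self-contained route: it first derives a closed-form expression for $\hat\gv^{(k)}_s/k!$ via the explicit Vandermonde inverse (its Lemma in Appendix~\ref{appendix:pseudo}), and then proves by induction that $D^{(k)}_l$ matches this formula, essentially re-establishing Neville's recurrence from scratch by writing down Lagrange interpolants $P_l^{(k)}$ and verifying the interpolation conditions for $\tilde P_l^{(k)}(x)=\frac{(x-\delta_l)P_{l+1}^{(k-1)}(x)-(x-\delta_{l+k})P_l^{(k-1)}(x)}{\delta_{l+k}-\delta_l}$. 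Your argument is more economical because you invoke the classical Newton divided-difference characterization of the leading coefficient directly rather than re-deriving it; the paper's version has the compensating advantage of being fully self-contained for a reader who does not have that identity at hand.
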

Proof in Appendix~\ref{appendix:pseudo}. Note that the pseudo-order solver of order $n> 2$ no longer has the guarantee of $n$-th order of accuracy, which is not so important when NFE is small. In our experiments, we mainly rely on two combinations: when we use $n$-th order predictor, we then combine it with $n$-th order corrector or $(n+1)$-th pseudo-order corrector.

\textbf{Half-corrector for large guidance scales.}$\mbox{ }$ When the guidance scale is large in guided sampling, we find that corrector may have negative effects on the sample quality. We propose a useful technique called \textit{half-corrector} by using the corrector only in the time region $t\leq 0.5$. Correspondingly, the strategy that we use corrector at each step is called \textit{full-corrector}.
\subsection{Implementation Details}
\label{section:implementation}
In this section, we give some implementation details about how to compute and integrate the EMS in our solver and how to adapt them to guided sampling.

\textbf{Estimating EMS.}$\mbox{ }$ For a specific $\lambda$, the EMS $\lv_\lambda^*,\sv_\lambda^*,\bv_\lambda^*$ can be estimated by firstly drawing $K$ (1024$\sim$4096) datapoints $\x_\lambda\sim p^\theta_{\lambda}(\x_\lambda)$ with 200-step DPM-Solver++~\cite{lu2022dpmpp} and then analytically computing some terms related to $\epsilonv_\theta$ (detailed in Appendix~\ref{appendix:EMS}). In practice, we find it both convenient and effective to choose the distribution of the dataset $q_0$ to approximate $p_0^\theta$. Thus, without further specifications, we directly use samples from $q_0$. 

\textbf{Estimating integrals of EMS.}$\mbox{ }$ We estimate EMS on $N$ ($120\sim 1200$) timesteps $\lambda_{j_0},\lambda_{j_1},\dots,\lambda_{j_N}$ and use trapezoidal rule to estimate the integrals in Eq.~\eqref{eqn:solution_taylor} (see Appendix~\ref{appendix:extra-error} for the estimation error analysis). We also apply some pre-computation for the integrals to avoid extra computation costs during sampling, detailed in Appendix~\ref{appendix:integral_estimation}.

\textbf{Adaptation to guided sampling.}$\mbox{ }$ Empirically, we find that within a common range of guidance scales, we can simply compute the EMS on the model without guidance, and it can work for both unconditional sampling and guided sampling cases. See Appendix~\ref{appendix:discussions} for more discussions.
\subsection{Comparison with Existing Methods}
\label{section:comparison}
By comparing with existing diffusion ODE solvers that are based on exponential integrators~\cite{zhang2022fast,lu2022dpm,lu2022dpmpp,zhao2023unipc}, we can conclude that (1) Previous ODE formulations with noise/data prediction are special cases of ours by setting $\lv_\lambda,\sv_\lambda,\bv_\lambda$ to specific values. (2) Our first-order discretization can be seen as improved DDIM. See more details in Appendix~\ref{appendix:related}.
\section{Experiments}
\label{sec:exps}
\begin{figure}[t]
    \centering
	\begin{minipage}[t]{.32\linewidth}
		\centering
		\includegraphics[width=.8\linewidth]{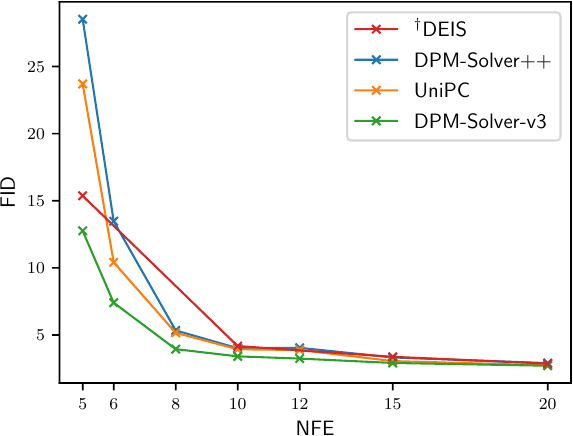} \\
		\small{	(a) CIFAR10\\ (ScoreSDE, Pixel DPM)}
	\end{minipage}
	\begin{minipage}[t]{.32\linewidth}
		\centering
		\includegraphics[width=.8\linewidth]{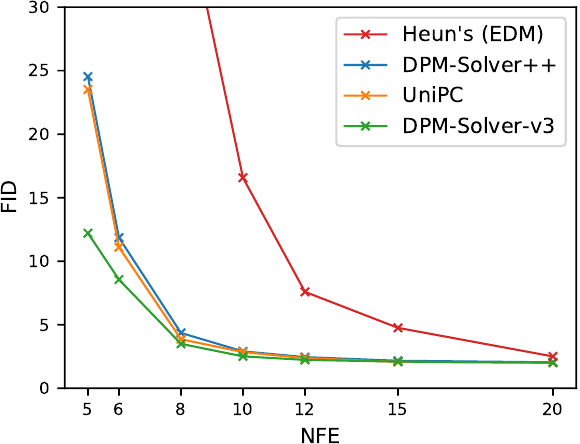} \\
		\small{	(b) CIFAR10\\ (EDM, Pixel DPM)}
	\end{minipage}
	\begin{minipage}[t]{.32\linewidth}
		\centering
		\includegraphics[width=.8\linewidth]{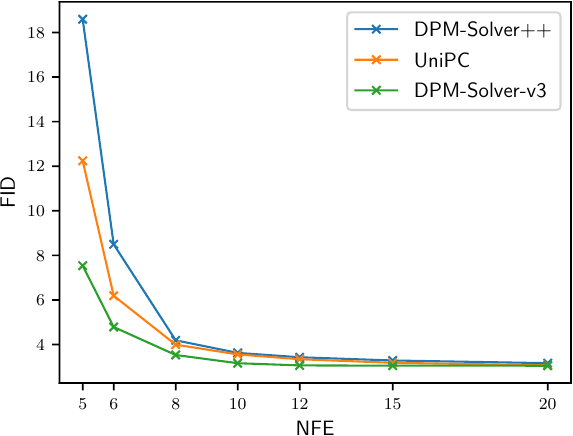} \\
		\small{	(c) LSUN-Bedroom\\ (Latent-Diffusion, Latent DPM)}
	\end{minipage}
	\caption{\label{fig:uncond} \small{Unconditional sampling results. We report the FID$\downarrow$ of the methods with different numbers of function evaluations (NFE), evaluated on 50k samples. $^\dagger$We borrow the results reported in their original paper directly.}}
\end{figure}

\begin{figure}[t]
    \centering
	\begin{minipage}[t]{.32\linewidth}
		\centering
		\includegraphics[width=.8\linewidth]{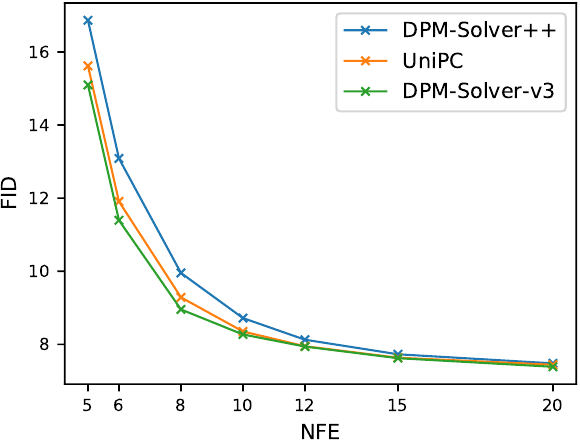} \\
		\small{	(a) ImageNet-256\\ (Guided-Diffusion, Pixel DPM)\\(Classifier Guidance, $s=2.0$)}
	\end{minipage}
	\begin{minipage}[t]{.32\linewidth}
		\centering
		\includegraphics[width=.8\linewidth]{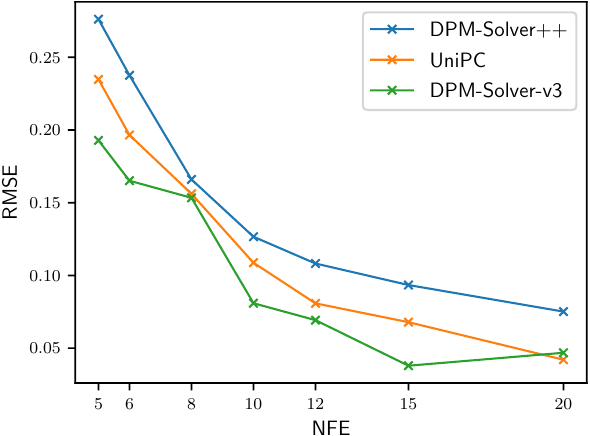} \\
		\small{	(b) MS-COCO2014\\ (Stable-Diffusion, Latent DPM)\\(Classifier-Free Guidance, $s=1.5$)}
	\end{minipage}
	\begin{minipage}[t]{.32\linewidth}
		\centering
		\includegraphics[width=.8\linewidth]{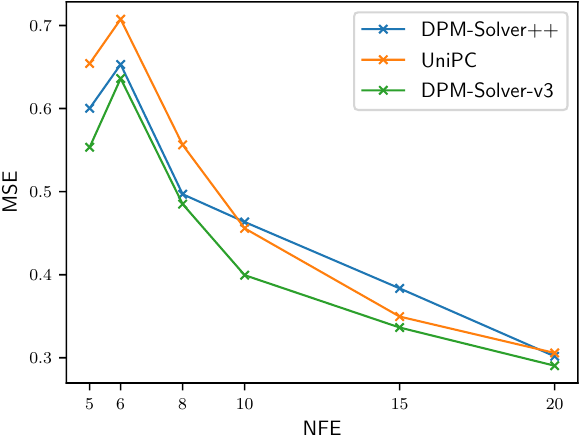} \\
		\small{	(c) MS-COCO2014\\ (Stable-Diffusion, Latent DPM)\\(Classifier-Free Guidance, $s=7.5$)}
	\end{minipage}
	\caption{\label{fig:cond} \small{Conditional sampling results. We report the FID$\downarrow$ or MSE$\downarrow$ of the methods with different numbers of function evaluations (NFE), evaluated on 10k samples.}}
\end{figure}

\begin{figure}[thb]
    \centering
	\begin{minipage}[t]{.25\linewidth}
		\centering
		\includegraphics[width=.95\linewidth]{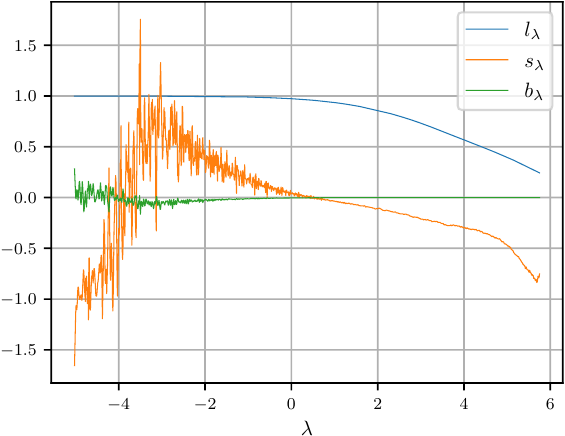} \\
		\small{	(a) CIFAR10 (ScoreSDE)}
	\end{minipage}
	\begin{minipage}[t]{.25\linewidth}
		\centering
		\includegraphics[width=.95\linewidth]{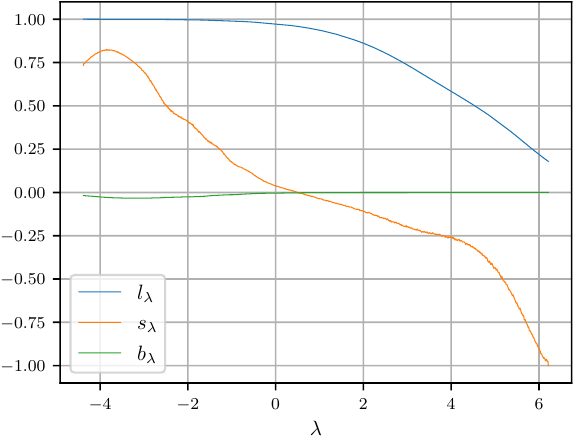} \\
		\small{	(b) CIFAR10 (EDM)}
	\end{minipage}
	\begin{minipage}[t]{.25\linewidth}
		\centering
		\includegraphics[width=.95\linewidth]{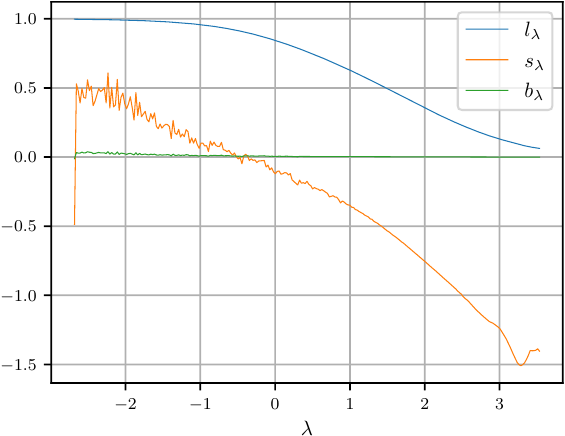} \\
		\small{	(c) MS-COCO2014 (Stable-Diffusion, $s=7.5$)}
	\end{minipage}
	\caption{\label{fig:vis_ems} \small{Visualization of the EMS $\lv_\lambda,\sv_\lambda,\bv_\lambda$ w.r.t. $\lambda$ estimated on different models.}}
	\vspace{-.1in}
\end{figure}

\begin{figure}[thb]
    \centering
	\begin{minipage}[t]{.3\linewidth}
		\centering
		\includegraphics[width=1.0\linewidth]{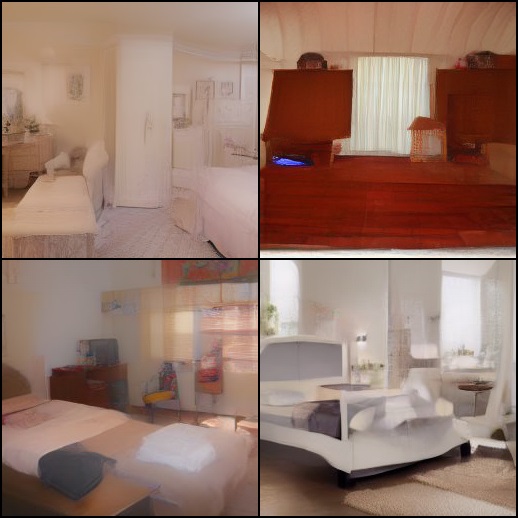}
		\scriptsize{(a) DPM-Solver++~\cite{lu2022dpmpp} (FID 18.59)}
	\end{minipage}
	\begin{minipage}[t]{.3\linewidth}
		\centering
		\includegraphics[width=1.0\linewidth]{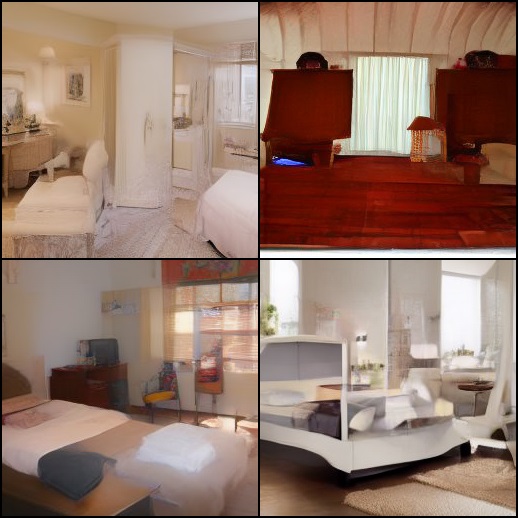}
		\scriptsize{(b) UniPC~\cite{zhao2023unipc} (FID 12.24)}
	\end{minipage}
	\begin{minipage}[t]{.3\linewidth}
		\centering
		\includegraphics[width=1.0\linewidth]{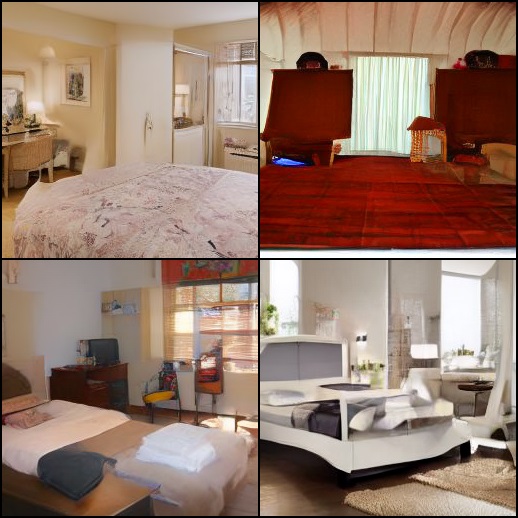}
		\scriptsize{(c) DPM-Solver-v3 (\textbf{Ours}) (FID 7.54)}
	\end{minipage}
	\caption{\label{fig:bedroom} \small{Random samples of Latent-Diffusion~\cite{rombach2022high} on LSUN-Bedroom~\cite{yu2015lsun} with only NFE = 5.}}
	\vspace{-.1in}
\end{figure}

In this section, we show that DPM-Solver-v3 can achieve consistent and notable speed-up for both unconditional and conditional sampling with both pixel-space and latent-space DPMs. We conduct extensive experiments on diverse image datasets, where the resolution ranges from 32 to 256. First, we present the main results of sample quality comparison with previous state-of-the-art training-free methods. Then we illustrate the effectiveness of our method by visualizing the EMS and samples. Additional ablation studies are provided in Appendix~\ref{appendix:ablation}.
On each dataset, we choose a sufficient number of timesteps $N$ and datapoints $K$ for computing the EMS to reduce the estimation error, while the EMS can still be computed within hours.
After we obtain the EMS and precompute the integrals involving them, there is \textbf{negligible extra overhead} in the sampling process. We provide the runtime comparison in Appendix~\ref{appendix:runtime}. We refer to Appendix~\ref{appendix:exp_details} for more detailed experiment settings.

\subsection{Main Results}
\label{sec:main_results}

\begin{table}[ht]
    \centering
    \caption{\label{tab:results_cifar10} Quantitative results on CIFAR10~\cite{krizhevsky2009learning}. We report the FID$\downarrow$ of the methods with different numbers of function evaluations (NFE), evaluated on 50k samples. $^\dagger$We borrow the results reported in their original paper directly.}
    \vskip 0.1in
    \resizebox{\textwidth}{!}{
    \begin{tabular}{lccccccccc}
    \toprule
    \multirow{2}{*}{Method}&\multirow{2}{*}{Model}& \multicolumn{8}{c}{NFE} \\
    \cmidrule{3-10}
    &&5&6&8&10&12&15&20&25\\
    \midrule
        $^\dagger$DEIS~\citep{zhang2022fast}&\multirow{4}{*}{\makecell{ScoreSDE~\citep{song2020score}}}&15.37& $\backslash$&$\backslash$&4.17&$\backslash$&3.37&2.86&$\backslash$\\
        DPM-Solver++~\citep{lu2022dpmpp}&&28.53&13.48&5.34&4.01&4.04&3.32&2.90&2.76\\
        UniPC~\citep{zhao2023unipc}&&23.71&10.41&5.16&3.93&3.88&3.05&2.73&2.65\\
        DPM-Solver-v3&&\textbf{12.76}&\textbf{7.40}&\textbf{3.94}&\textbf{3.40}&\textbf{3.24}&\textbf{2.91}&\textbf{2.71}&\textbf{2.64}\\
        \cmidrule{1-10}
        Heun's 2nd~\citep{karras2022elucidating}&\multirow{4}{*}{\makecell{EDM~\citep{karras2022elucidating}}}& 320.80&103.86&39.66&16.57&7.59&4.76&2.51&2.12\\
        DPM-Solver++~\citep{lu2022dpmpp}&&24.54&11.85&4.36&2.91&2.45&2.17&2.05&2.02\\
        UniPC~\citep{zhao2023unipc}&&23.52&11.10&3.86&2.85&2.38&\textbf{2.08}&\textbf{2.01}&\textbf{2.00}\\
        DPM-Solver-v3&&\textbf{12.21}&\textbf{8.56}&\textbf{3.50}&\textbf{2.51}&\textbf{2.24}&2.10&2.02&\textbf{2.00}\\
    \bottomrule
    \end{tabular}
    }
\end{table}

We present the results in $5\sim 20$ number of function evaluations (NFE), covering both few-step cases and the almost converged cases, as shown in Fig.~\ref{fig:uncond} and Fig.~\ref{fig:cond}. For the sake of clarity, we mainly compare DPM-Solver-v3 to DPM-Solver++~\cite{lu2022dpmpp} and UniPC~\cite{zhao2023unipc}, which are the most state-of-the-art diffusion ODE solvers. We also include the results for DEIS~\cite{zhang2022fast} and Heun's 2nd order method in EDM~\cite{karras2022elucidating}, but only for the datasets on which they originally reported. We don't show the results for other methods such as DDIM~\cite{song2020denoising}, PNDM~\cite{liu2021pseudo}, since they have already been compared in previous works and have inferior performance. The quantitative results on CIFAR10~\cite{krizhevsky2009learning} are listed in Table~\ref{tab:results_cifar10}, and more detailed quantitative results are presented in Appendix~\ref{appendix:results}.

\textbf{Unconditional sampling}$\mbox{ }$ We first evaluate the unconditional sampling performance of different methods on CIFAR10~\cite{krizhevsky2009learning} and LSUN-Bedroom~\cite{yu2015lsun}. For CIFAR10 we use two pixel-space DPMs, one is based on ScoreSDE~\cite{song2020score} which is a widely adopted model by previous samplers, and another is based on EDM~\cite{karras2022elucidating} which achieves the best sample quality. For LSUN-Bedroom, we use the latent-space Latent-Diffusion model~\cite{rombach2022high}. 
We apply the multistep 3rd-order version for DPM-Solver++, UniPC and DPM-Solver-v3 by default, which performs best in the unconditional setting. For UniPC, we report the better result of their two choices $B_1(h)=h$ and $B_2(h)=e^h-1$ at each NFE. For our DPM-Solver-v3, we tune the strategies of whether to use the pseudo-order predictor/corrector at each NFE on CIFAR10, and use the pseudo-order corrector on LSUN-Bedroom. 
As shown in Fig.~\ref{fig:uncond}, we find that DPM-Solver-v3 can achieve consistently better FID, which is especially notable when NFE is 5$\sim$10. For example, we improve the FID on CIFAR10 with 5 NFE from $23$ to $12$ with ScoreSDE, and achieve an FID of $2.51$ with only $10$ NFE with the advanced DPM provided by EDM. On LSUN-Bedroom, with around 12 minutes computing of the EMS, DPM-Solver-v3 converges to the FID of 3.06 with 12 NFE, which is approximately \textbf{60\% sampling cost} of the previous best training-free method (20 NFE by UniPC).

\textbf{Conditional sampling.}$\mbox{ }$ We then evaluate the conditional sampling performance, which is more widely used since it allows for controllable generation with user-customized conditions. We choose two conditional settings, one is classifier guidance on pixel-space Guided-Diffusion~\cite{dhariwal2021diffusion} model trained on ImageNet-256 dataset~\cite{deng2009imagenet} with 1000 class labels as conditions; the other is classifier-free guidance on latent-space Stable-Diffusion model~\cite{rombach2022high} trained on LAION-5B dataset~\cite{schuhmann2022laion} with CLIP~\cite{radford2021learning} embedded text prompts as conditions. We evaluate the former at the guidance scale of $2.0$, following the best choice in~\citep{dhariwal2021diffusion}; and the latter at the guidance scale of $1.5$ (following the original paper) or $7.5$ (following the official code) with prompts random selected from MS-COCO2014 validation set~\cite{lin2014microsoft}. Note that the FID of Stable-Diffusion samples saturates to 15.0$\sim$16.0 even within 10 steps when the latent codes are far from convergence, possibly due to the powerful image decoder (see Appendix~\ref{appendix:fid-clip-sdm}). Thus, following~\citep{lu2022dpmpp}, we measure the mean square error (MSE) between the generated latent code $\hat\x$ and the ground-truth solution $\x^*$ (i.e., $\|\hat\x-\x^*\|_2^2/D$) to evaluate convergence, starting from the same Gaussian noise. We obtain $\x^*$ by 200-step DPM-Solver++, which is enough to ensure the convergence. 

We apply the multistep 2nd-order version for DPM-Solver++, UniPC and DPM-Solver-v3, which performs best in conditional setting. For UniPC, we only apply the choice $B_2(h)=e^h-1$, which performs better than $B_1(h)$. For our DPM-Solver-v3, we use the pseudo-order corrector by default, and report the best results between using half-corrector/full-corrector on Stable-Diffusion ($s=7.5$).
As shown in Fig.~\ref{fig:cond}, DPM-Solver-v3 can achieve better sample quality or convergence at most NFEs, which indicates the effectiveness of our method and techniques under the conditional setting. It's worth noting that UniPC, which adopts an extra corrector, performs even worse than DPM-Solver++ when NFE<10 on Stable-Diffusion ($s=7.5$). With the combined effect of the EMS and the half-corrector technique, we successfully outperform DPM-Solver++ in such a case. Detailed comparisons can be found in the ablations in Appendix~\ref{appendix:ablation}.

\subsection{Visualizations of Estimated EMS}
\label{sec:vis_ems}
We further visualize the estimated EMS in Fig.~\ref{fig:vis_ems}.
Since $\lv_\lambda,\sv_\lambda,\bv_\lambda$ are $D$-dimensional vectors, we average them over all dimensions to obtain a scalar. From Fig.~\ref{fig:vis_ems}, we find that $\lv_\lambda$ gradually changes from $1$ to near $0$ as the sampling proceeds, which suggests we should gradually slide from data prediction to noise prediction. As for $\sv_\lambda,\bv_\lambda$, they are more model-specific and display many fluctuations, especially for ScoreSDE model~\cite{song2020score} on CIFAR10. Apart from the estimation error of the EMS, we suspect that it comes from the fluctuations of $\epsilonv_\theta^{(1)}$, which is caused by the periodicity of trigonometric functions in the positional embeddings of the network input. It's worth noting that the fluctuation of $\sv_\lambda,\bv_\lambda$ will not cause instability in our sampler (see Appendix~\ref{appendix:discussions}).
\subsection{Visual Quality}
\label{sec:vis_quality}

We present some qualitative comparisons in Fig.~\ref{fig:bedroom} and Fig.~\ref{fig:sdm75}. We can find that previous methods tend to have a small degree of color contrast at small NFE, while our method is less biased and produces more visual details. In Fig.~\ref{fig:sdm75}(b), we can observe that previous methods with corrector may cause distortion at large guidance scales (in the left-top image, a part of the castle becomes a hill; in the left-bottom image, the hill is translucent and the castle is hanging in the air), while ours won't. Additional samples are provided in Appendix~\ref{appendix:samples}.

\section{Conclusion}
We study the ODE parameterization problem for fast sampling of DPMs. Through theoretical analysis, we find a novel ODE formulation with empirical model statistics, which is towards the optimal one to minimize the first-order discretization error. Based on such improved ODE formulation, we propose a new fast solver named DPM-Solver-v3, which involves a multistep predictor-corrector framework of any order and several techniques for improved sampling with few steps or large guidance scale. Experiments demonstrate the effectiveness of
DPM-Solver-v3 in both unconditional conditional sampling with both
pixel-space latent-space pre-trained DPMs, and the significant advancement of sample quality in 5$\sim$10 steps.

\textbf{Limitations and broader impact}$\mbox{ }$ Despite the great speedup in small numbers of steps, DPM-Solver-v3 still lags behind training-based methods and is not fast enough for real-time applications. Besides, we conducted theoretical analyses of the local error, but didn't explore the global design spaces, such as the design of timestep schedules during sampling. And commonly, there are potential undesirable effects that DPM-Solver-v3 may be abused to accelerate the generation of fake and malicious content.
\section*{Acknowledgements}
This work was supported by the National Key Research and Development Program of China (No.~2021ZD0110502), NSFC Projects (Nos.~62061136001, 62106123, 62076147, U19A2081, 61972224, 62106120), Tsinghua Institute for Guo Qiang, and the High Performance Computing Center, Tsinghua University. J.Z is
also supported by the XPlorer Prize.

\newpage
\normalem
\bibliographystyle{plain}
\bibliography{references}

\clearpage
\appendix

\section{Related Work}
\label{appendix:related}
Diffusion probabilistic models (DPMs)~\cite{sohl2015deep,ho2020denoising,song2020score}, also known as score-based generative models (SGMs), 
have achieved remarkable generation ability on image domain~\cite{dhariwal2021diffusion,karras2022elucidating}, yielding extensive applications such as speech, singing and video synthesis~\cite{chen2020wavegrad,liu2022diffsinger,ho2022imagen}, controllable image generation, translation and editing~\cite{nichol2021glide,ramesh2022hierarchical,rombach2022high,meng2021sdedit,zhao2022egsde,couairon2022diffedit}, likelihood estimation~\cite{song2021maximum,kingma2021variational,lu2022maximum,zheng2023improved}, data compression~\cite{kingma2021variational} and inverse problem solving~\cite{chung2022diffusion,kawar2022denoising}.

\subsection{Fast Sampling Methods for DPMs}

Fast sampling methods based on extra training or optimization include learning variances in the reverse process~\citep{nichol2021improved,bao2022estimating}, learning sampling schedule~\citep{watson2022learning}, learning high-order model derivatives~\citep{dockhorn2022genie}, model refinement~\citep{liu2022flow} and model distillation~\citep{salimans2022progressive,luhman2021knowledge,song2023consistency}. Though distillation-based methods can generate high-quality samples in less than 5 steps, they additionally bring onerous training costs. Moreover, the distillation process will inevitably lose part of the information of the original model, and is hard to be adapted to pre-trained large DPMs~\citep{rombach2022high,saharia2022photorealistic,ramesh2022hierarchical} and conditional sampling~\citep{meng2022distillation}. Some of distillation-based methods also lack the ability to make flexible trade-offs between sample speed and sample quality.

In contrast, training-free samplers are more lightweight and flexible. Among them, samplers based on diffusion ODEs generally require fewer steps than those based on diffusion SDEs~\citep{song2020score,popov2021diffusion,bao2022analytic,song2020denoising,lu2022dpmpp}, since SDEs introduce more randomness and make the denoising harder in the sampling process. Previous samplers handle the diffusion ODEs with different methods, such as Heun's methods~\citep{karras2022elucidating}, splitting numerical methods~\citep{wizadwongsa2023accelerating}, pseudo numerical methods~\citep{liu2021pseudo}, Adams methods~\citep{li2023era} or exponential integrators~\citep{zhang2022fast,lu2022dpm,lu2022dpmpp,zhao2023unipc}.

\subsection{Comparison with Existing Solvers Based on Exponential Integrators}

\begin{table}[ht]
    \centering
    \caption{\label{tab:comparison}Comparison between DPM-Solver-v3 and other high-order diffusion ODE solvers based on exponential integrators.}
    \vskip 0.1in
    \resizebox{\textwidth}{!}{
    \begin{tabular}{lccccc}
    \toprule
    &\makecell{DEIS\\\cite{zhang2022fast}}&\makecell{DPM-Solver\\\cite{lu2022dpm}}&\makecell{DPM-Solver++\\\cite{lu2022dpmpp}}&\makecell{UniPC\\\cite{zhao2023unipc}}&\makecell{DPM-Solver-v3\\(\textbf{Ours})}\\
    \midrule
    First-Order&DDIM&DDIM&DDIM&DDIM&Improved DDIM\\
    Taylor Expanded Predictor&$\epsilonv_\theta$ for $t$&$\epsilonv_\theta$ for $\lambda$&$\x_\theta$ for $\lambda$&$\x_\theta$ for $\lambda$&$\gv_\theta$ for $\lambda$\\
    Solver Type (High-Order)&Multistep&Singlestep&Multistep&Multistep&Multistep\\
    Applicable for Guided Sampling&\cmark&\xmark&\cmark&\cmark&\cmark\\
    Corrector Supported&\xmark&\xmark&\xmark&\cmark&\cmark\\
    Model-Specific&\xmark&\xmark&\xmark&\xmark&\cmark\\
    \bottomrule
    \end{tabular}
    }
    \vspace{-0.1in}
\end{table}

In this section, we make some theoretical comparisons between DPM-Solver-v3 and existing diffusion ODE solvers that are based on exponential integrators~\cite{zhang2022fast,lu2022dpm,lu2022dpmpp,zhao2023unipc}. We summarize the results in Table~\ref{tab:comparison}, and provide some analysis below.

\textbf{Previous ODE formulation as special cases of ours.}$\mbox{ }$ 
First, we compare the formulation of the ODE solution. Our reformulated ODE solution in Eq.~\eqref{eqn:solution} involves extra coefficients $\lv_\lambda,\sv_\lambda,\bv_\lambda$, and it corresponds to a new predictor $\gv_\theta$ to be approximated by Taylor expansion. By comparing ours with previous ODE formulations in Eq.~\eqref{eqn:ode_noise_data_pred} and their corresponding noise/data prediction, we can easily figure out that they are special cases of ours by setting $\lv_\lambda,\sv_\lambda,\bv_\lambda$ to specific values:
\begin{itemize}
    \item Noise prediction: $\lv_\lambda=\vect 0, \sv_\lambda=-\vect 1, \bv_\lambda=\vect 0$
    \item Data prediction: $\lv_\lambda=\vect 1, \sv_\lambda=\vect 0, \bv_\lambda=\vect 0$
\end{itemize}

It's worth noting that, though our ODE formulation can degenerate to previous ones, it may still result in different solvers, since previous works conduct some equivalent substitution of the same order in the local approximation (for example, the choice of $B_1(h)=h$ or $B_2(h)=e^h-1$ in UniPC~\cite{zhao2023unipc}). We never conduct such substitution, thus saving the efforts to tune it.

Moreover, under our framework, we find that \textbf{DPM-Solver++ is a model-agnostic approximation of DPM-Solver-v3, under the Gaussian assumption}. Specifically, according to Eq.~\eqref{eqn:l_definition}, we have
\begin{equation}
    \lv^*_\lambda = \argmin_{\lv_\lambda} \E_{p^\theta_\lambda(\x_\lambda)}\|\sigma_\lambda\nabla_{\x}\epsilonv_\theta(\x_\lambda,\lambda) - \lv_\lambda\|_F^2,
\end{equation}
If we assume $q_\lambda(\x_\lambda)\approx \N(\x_\lambda|\alpha_\lambda\x_0,\sigma_\lambda^2\Iv)$ for some fixed $\x_0$, then the optimal noise predictor is
\begin{equation}
    \epsilonv_\theta(\x_\lambda,\lambda) \approx -\sigma_\lambda\nabla_{\x}\log q_\lambda(\x_\lambda) = \frac{\x_\lambda - \alpha_t\x_0}{\sigma_\lambda}.
\end{equation}
It follows that $\sigma_\lambda\nabla_{\x}\epsilonv_\theta(\x_\lambda,\lambda)\approx \Iv$, thus $\lv_\lambda^* \approx \vect 1$ by Eq.~\eqref{eqn:l_definition}, which corresponds the data prediction model used in DPM-Solver++. Moreover, for small enough $\lambda$ (i.e., $t$ near to $T$), the Gaussian assumption is almost true (see Section~\ref{sec:vis_ems}), thus the data-prediction DPM-Solver++ approximately computes all the linear terms at the initial stage. To the best of our knowledge, this is the first explanation for the reason why the data-prediction DPM-Solver++ outperforms the noise-prediction DPM-Solver.

\textbf{First-order discretization as improved DDIM}$\mbox{ }$ Previous methods merely use noise/data parameterization, whether or not they change the time domain from $t$ to $\lambda$. While they differ in high-order cases, they are proven to coincide in the first-order case, which is DDIM~\cite{song2020denoising} (deterministic case, $\eta=0$):
\begin{equation}
\begin{aligned}
\hat\x_t=\frac{\alpha_t}{\alpha_s}\hat\x_s-\alpha_t\left(\frac{\sigma_s}{\alpha_s}-\frac{\sigma_t}{\alpha_t}\right)\epsilonv_\theta(\hat\x_s,\lambda_s)
\end{aligned}
\end{equation}
However, the first-order case of our method is
\begin{equation}
\begin{aligned}
    \hat\x_t&=\frac{\alpha_t}{\alpha_s}\Av(\lambda_s,\lambda_t)\left(\left(1+\lv_{\lambda_s}\int_{\lambda_s}^{\lambda_t}\Ev_{\lambda_s}(\lambda)\dm\lambda\right)\hat\x_s-\left(\sigma_s\int_{\lambda_s}^{\lambda_t}\Ev_{\lambda_s}(\lambda)\dm\lambda\right)\epsilonv_\theta(\hat\x_s,\lambda_s)\right)\\
    &-\alpha_t\Av(\lambda_s,\lambda_t)\int_{\lambda_s}^{\lambda_t}\Ev_{\lambda_s}(\lambda)\Bv_{\lambda_s}(\lambda)\dm\lambda
\end{aligned}
\end{equation}
which is not DDIM since we choose a better parameterization by the estimated EMS. Empirically, our first-order solver performs better than DDIM, as detailed in Appendix~\ref{appendix:ablation}.

\section{Proofs}
\subsection{Assumptions}
\label{appendix:assumptions}
In this section, we will give some mild conditions under which the local order of accuracy of Algorithm~\ref{algorithm:local} and the global order of convergence of Algorithm~\ref{algorithm:global} (predictor) are guaranteed.
\subsubsection{Local}
First, we will give the assumptions to bound the local truncation error.
\begin{assumption}
\label{assum:1}
The total derivatives of the noise prediction model $\frac{\dm^k\epsilonv_\theta(\x_\lambda,\lambda)}{\dm\lambda^k},k=1,\dots,n$ exist and are continuous.
\end{assumption}
\begin{assumption}
\label{assum:2}
The coefficients $\lv_\lambda,\sv_\lambda,\bv_\lambda$ are continuous and bounded. $\frac{\dm^k\lv_\lambda}{\dm\lambda^k},\frac{\dm^k\sv_\lambda}{\dm\lambda^k},\frac{\dm^k\bv_\lambda}{\dm\lambda^k},k=1,\dots,n$ exist and are continuous.
\end{assumption}
\begin{assumption}
\label{assum:3}
$\delta_k=\Theta(\lambda_t-\lambda_s),k=1,\dots,n$
\end{assumption}
Assumption~\ref{assum:1} is required for the Taylor expansion which is regular in high-order numerical methods. Assumption~\ref{assum:2} requires the boundness of the coefficients as well as regularizes the coefficients' smoothness to enable the Taylor expansion for $\gv_\theta(\x_\lambda,\lambda)$, which holds in practice given the smoothness of $\epsilonv_\theta(\x_\lambda,\lambda)$ and $p_\lambda^\theta(\x_\lambda)$. Assumption~\ref{assum:3} makes sure $\delta_k$ and $\lambda_t-\lambda_s$ is of the same order, i.e., there exists some constant $r_k=\Oc(1)$ so that $\delta_k=r_k(\lambda_t-\lambda_s)$, which is satisfied in regular multistep methods.
\subsubsection{Global}
Then we will give the assumptions to bound the global error.
\begin{assumption}
\label{assum:4}
The noise prediction model $\epsilonv_\theta(\x,t)$ is Lipschitz w.r.t. to $\x$.
\end{assumption}
\begin{assumption}
\label{assum:5}
$h=\max_{1\leq i\leq M}(\lambda_i-\lambda_{i-1})=\Oc(1/M)$.
\end{assumption}
\begin{assumption}
\label{assum:6}
The starting values $\hat\x_i,1\leq i\leq n$ satisfies $\hat\x_i-\x_i=\Oc(h^{n+1})$.
\end{assumption}
Assumption~\ref{assum:4} is common in the analysis of ODEs, which assures $\epsilonv_\theta(\hat\x_t,t)-\epsilonv_\theta(\x_t,t)=\Oc(\hat\x_t-\x_t)$. Assumption~\ref{assum:5} implies that the step sizes are rather uniform. Assumption~\ref{assum:6} is common in the convergence analysis of multistep methods~\cite{calvo2006class}.
\subsection{Order of Accuracy and Convergence}
In this section, we prove the local and global order guarantees detailed in Theorem~\ref{theorem:local} and Theorem~\ref{theorem:global}.
\subsubsection{Local}
\label{appendix:proof_order_local}
\begin{proof} (Proof of Theorem~\ref{theorem:local})
Denote $h\coloneqq \lambda_t-\lambda_s$. Subtracting the Taylor-expanded exact solution in Eq.~\eqref{eqn:solution_taylor} from the local approximation in Eq.~\eqref{eqn:local_approx}, we have
\begin{equation}
\label{eqn:proof_local_1}
    \hat\x_t-\x_t=-\alpha_t\Av(\lambda_s,\lambda_t)\sum_{k=1}^{n}\frac{\hat\gv^{(k)}_\theta(\x_{\lambda_s},\lambda_s)-\gv^{(k)}_\theta(\x_{\lambda_s},\lambda_s)}{k!}\int_{\lambda_s}^{\lambda_t}\Ev_{\lambda_s}(\lambda)(\lambda-\lambda_s)^k\dm\lambda+\Oc(h^{n+2})
\end{equation}
First we examine the order of $\Av(\lambda_s,\lambda_t)$ and $\int_{\lambda_s}^{\lambda_t}\Ev_{\lambda_s}(\lambda)(\lambda-\lambda_s)^k\dm\lambda$. Under Assumption~\ref{assum:2}, there exists some constant $C_1,C_2$ such that $-\lv_\lambda<C_1,\lv_\lambda+\sv_\lambda<C_2$. So
\begin{equation}
\label{eqn:proof_local_2}
\begin{aligned}
\Av(\lambda_s,\lambda_t)&=e^{-\int_{\lambda_s}^{\lambda_t}\lv_\tau\dm\tau}\\
&< e^{C_1h}\\
&=\Oc(1)
\end{aligned}
\end{equation}
\begin{equation}
\label{eqn:proof_local_3}
\begin{aligned}
\int_{\lambda_s}^{\lambda_t}\Ev_{\lambda_s}(\lambda)(\lambda-\lambda_s)^k\dm\lambda&=\int_{\lambda_s}^{\lambda_t}e^{\int_{\lambda_s}^{\lambda}(\lv_\tau+\sv_\tau)\dm\tau}(\lambda-\lambda_s)^k\dm\lambda\\
&<\int_{\lambda_s}^{\lambda_t}e^{C_2(\lambda-\lambda_s)}(\lambda-\lambda_s)^k\dm\lambda\\
&=\Oc(h^{k+1})
\end{aligned}
\end{equation}

Next we examine the order of $\frac{\hat\gv^{(k)}_\theta(\x_{\lambda_s},\lambda_s)-\gv^{(k)}_\theta(\x_{\lambda_s},\lambda_s)}{k!}$. Under Assumption~\ref{assum:1} and Assumption~\ref{assum:2}, since $\gv_\theta$ is elementary function of $\epsilonv_\theta$ and $\lv_\lambda,\sv_\lambda,\bv_\lambda$, we know $\gv^{(k)}_\theta(\x_{\lambda_s},\lambda_s),k=1,\dots,n$ exist and are continuous. Adopting the notations in Eq.~\eqref{eqn:high_order_estimation}, by Taylor expansion, we have
\begin{equation}
\begin{aligned}
\gv_{i_1}&=\gv_{s}+\delta_1\gv_s^{(1)}+\delta_1^2\gv_s^{(2)}+\dots+\delta_1^n\gv_s^{(n)}+\Oc(\delta_1^{n+1})\\
\gv_{i_2}&=\gv_{s}+\delta_2\gv_s^{(1)}+\delta_2^2\gv_s^{(2)}+\dots+\delta_2^n\gv_s^{(n)}+\Oc(\delta_2^{n+1})\\
\dots\\
\gv_{i_n}&=\gv_{s}+\delta_n\gv_s^{(1)}+\delta_n^2\gv_s^{(2)}+\dots+\delta_n^n\gv_s^{(n)}+\Oc(\delta_n^{n+1})
\end{aligned}
\end{equation}
Comparing it with Eq.~\eqref{eqn:high_order_estimation}, we have
\begin{equation}
    \begin{pmatrix}
    \delta_1&\delta_1^2&\cdots&\delta_1^n\\
    \delta_2&\delta_2^2&\cdots&\delta_2^n\\
    \vdots&\vdots&\ddots&\vdots\\
    \delta_n&\delta_n^2&\cdots&\delta_n^n
    \end{pmatrix}
    \begin{pmatrix}
    \hat\gv^{(1)}_s-\gv^{(1)}_s\\
    \frac{\hat\gv^{(2)}_s-\gv^{(2)}_s}{2!}\\
    \vdots\\
    \frac{\hat\gv^{(n)}_s-\gv^{(n)}_s}{n!}
    \end{pmatrix}=
    \begin{pmatrix}
    \Oc(\delta_1^{n+1})\\
    \Oc(\delta_2^{n+1})\\
    \vdots\\
    \Oc(\delta_n^{n+1})
    \end{pmatrix}
\end{equation}
From Assumption~\ref{assum:3}, we know there exists some constants $r_k$ so that $\delta_k=r_kh,k=1,\dots,n$. Thus
\begin{equation}
        \begin{pmatrix}
    \delta_1&\delta_1^2&\cdots&\delta_1^n\\
    \delta_2&\delta_2^2&\cdots&\delta_2^n\\
    \vdots&\vdots&\ddots&\vdots\\
    \delta_n&\delta_n^2&\cdots&\delta_n^n
    \end{pmatrix}=\begin{pmatrix}
    r_1&r_1^2&\cdots&r_1^n\\
    r_2&r_2^2&\cdots&r_2^n\\
    \vdots&\vdots&\ddots&\vdots\\
    r_n&r_n^2&\cdots&r_n^n
    \end{pmatrix}\begin{pmatrix}
    h&&&\\
    &h^2&&\\
    &&\ddots&\\
    &&&h^n\\
    \end{pmatrix}, \begin{pmatrix}
    \Oc(\delta_1^{n+1})\\
    \Oc(\delta_2^{n+1})\\
    \vdots\\
    \Oc(\delta_n^{n+1})
    \end{pmatrix}=\begin{pmatrix}
    \Oc(h^{n+1})\\
    \Oc(h^{n+1})\\
    \vdots\\
    \Oc(h^{n+1})
    \end{pmatrix}
\end{equation}
And finally, we have
\begin{equation}
\label{eqn:proof_local_4}
\begin{aligned}
\begin{pmatrix}
    \hat\gv^{(1)}_s-\gv^{(1)}_s\\
    \frac{\hat\gv^{(2)}_s-\gv^{(2)}_s}{2!}\\
    \vdots\\
    \frac{\hat\gv^{(n)}_s-\gv^{(n)}_s}{n!}
    \end{pmatrix}&=\begin{pmatrix}
    h^{-1}&&&\\
    &h^{-2}&&\\
    &&\ddots&\\
    &&&h^{-n}\\
    \end{pmatrix}\begin{pmatrix}
    r_1&r_1^2&\cdots&r_1^n\\
    r_2&r_2^2&\cdots&r_2^n\\
    \vdots&\vdots&\ddots&\vdots\\
    r_n&r_n^2&\cdots&r_n^n
    \end{pmatrix}^{-1}\begin{pmatrix}
    \Oc(h^{n+1})\\
    \Oc(h^{n+1})\\
    \vdots\\
    \Oc(h^{n+1})
    \end{pmatrix}\\
    &=\begin{pmatrix}
    \Oc(h^{n})\\
    \Oc(h^{n-1})\\
    \vdots\\
    \Oc(h^{1})
    \end{pmatrix}
\end{aligned}
\end{equation}
Substitute Eq.~\eqref{eqn:proof_local_2}, Eq.~\eqref{eqn:proof_local_3} and Eq.~\eqref{eqn:proof_local_4} into Eq.~\eqref{eqn:proof_local_1}, we can conclude that $\hat\x_t-\x_t=\Oc(h^{n+2})$.
\end{proof}
\subsubsection{Global}
\label{appendix:proof_order_global}
First, we provide a lemma that gives the local truncation error given inexact previous values when estimating the high-order derivatives.
\begin{lemma} (Local truncation error with inexact previous values) 
\label{lemma:global}
Suppose inexact values $\hat\x_{\lambda_{i_k}},k=1,\dots,n$ and $\hat\x_s$ are used in Eq.~\eqref{eqn:high_order_estimation} to estimate the high-order derivatives, then the local truncation error of the local approximation Eq.~\eqref{eqn:local_approx} satisfies
\begin{equation}
\label{eqn:lemma_global}
    \Delta_t=\frac{\alpha_t\Av(\lambda_s,\lambda_t)}{\alpha_s}\Delta_s+\Oc(h)\left(\Oc(\Delta_s)+\sum_{k=1}^n\Oc(\Delta_{\lambda_{i_k}})+\Oc(h^{n+1})\right)
\end{equation}
where $\Delta_{\cdot}\coloneqq\hat\x_{\cdot}-\x_{\cdot},h\coloneqq\lambda_t-\lambda_s$.
\end{lemma}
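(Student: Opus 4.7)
The plan is to subtract the Taylor-expanded exact solution in Eq.~\eqref{eqn:solution_taylor} from the inexact local approximation in Eq.~\eqref{eqn:local_approx} and track every place where inexact inputs enter. There are exactly three sources of additional error compared to the exact-initial-value analysis in Theorem~\ref{theorem:local}: (i) the initial value $\hat\x_s$ appears directly in the $\x_s/\alpha_s$ term of Eq.~\eqref{eqn:local_approx}; (ii) the value $\hat\gv_s=\gv_\theta(\hat\x_{\lambda_s},\lambda_s)$ enters as the $k=0$ summand; and (iii) the estimated derivatives $\hat\gv^{(k)}_s$ for $k\geq 1$ are obtained from Eq.~\eqref{eqn:high_order_estimation} with perturbed right-hand-side entries $\hat\gv_{i_k}-\hat\gv_s$ rather than $\gv_{i_k}-\gv_s$. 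The original Taylor remainder still contributes the $\Oc(h^{n+2})$ residual already handled in Appendix~\ref{appendix:proof_order_local}.

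First I would use Assumption~\ref{assum:4} (Lipschitzness of $\epsilonv_\theta$) together with Assumption~\ref{assum:2} (boundedness and smoothness of $\lv_\lambda,\sv_\lambda,\bv_\lambda$) to conclude that $\x\mapsto\gv_\theta(\x,\lambda)$ is Lipschitz uniformly for $\lambda$ in the step interval, so $\hat\gv_\cdot-\gv_\cdot=\Oc(\Delta_\cdot)$. Source (i) then propagates unchanged through the outer linear factor to give $\tfrac{\alpha_t\Av(\lambda_s,\lambda_t)}{\alpha_s}\Delta_s$. Source (ii) contributes $(\hat\gv_s-\gv_s)\cdot\int_{\lambda_s}^{\lambda_t}\Ev_{\lambda_s}(\lambda)\dm\lambda=\Oc(h)\Oc(\Delta_s)$, using Eq.~\eqref{eqn:proof_local_3} with $k=0$, and the outer $\alpha_t\Av(\lambda_s,\lambda_t)=\Oc(1)$ from Eq.~\eqref{eqn:proof_local_2} does not alter the order.

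The core of the argument is source (iii). By Lipschitzness, the perturbation of each right-hand-side entry of Eq.~\eqref{eqn:high_order_estimation} is $\Oc(\Delta_s)+\Oc(\Delta_{\lambda_{i_k}})$, in addition to the $\Oc(h^{n+1})$ Taylor remainder already accounted for in the proof of Theorem~\ref{theorem:local}. Inverting the Vandermonde-type matrix exactly as in Eq.~\eqref{eqn:proof_local_4} introduces an $\Oc(h^{-k})$ factor on the $k$-th row, so $\tfrac{1}{k!}(\hat\gv^{(k)}_s-\gv^{(k)}_s)=\Oc(h^{-k})(\Oc(\Delta_s)+\sum_{j=1}^{n}\Oc(\Delta_{\lambda_{i_j}})+\Oc(h^{n+1}))$. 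Multiplying by the weight $\int_{\lambda_s}^{\lambda_t}\Ev_{\lambda_s}(\lambda)(\lambda-\lambda_s)^k\dm\lambda=\Oc(h^{k+1})$ from Eq.~\eqref{eqn:proof_local_3} cancels the $h^{-k}$ and leaves a uniform $\Oc(h)$ prefactor for every $k\geq 1$. Summing over $k\in\{1,\dots,n\}$ yields a total high-order contribution of $\Oc(h)\bigl(\Oc(\Delta_s)+\sum_{k=1}^n\Oc(\Delta_{\lambda_{i_k}})+\Oc(h^{n+1})\bigr)$.

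Combining the three sources, together with the pre-existing $\Oc(h^{n+2})$ truncation error that folds neatly into the $\Oc(h)\cdot\Oc(h^{n+1})$ slot, gives exactly Eq.~\eqref{eqn:lemma_global}. The main obstacle is the bookkeeping in source (iii): one must simultaneously track the $h^{-k}$ blow-up from inverting the near-singular Vandermonde system, the $h^{k+1}$ suppression from the integral weights, and the two distinct origins (perturbed previous-step values versus Taylor remainders) of the right-hand-side perturbation, and verify that they collapse into a single uniform $\Oc(h)$ prefactor independent of $k$. Once this uniformity is established, the remaining steps are a mechanical assembly.
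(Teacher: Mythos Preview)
Your proposal is correct and follows essentially the same approach as the paper's proof: the paper likewise subtracts the Taylor-expanded exact solution from the inexact update, splits the discrepancy into the $\Delta_s/\alpha_s$ term, the $k=0$ term $(\hat\gv_s-\gv_s)\int\Ev_{\lambda_s}$, and the $k\geq 1$ terms coming from the perturbed Vandermonde system, and then uses the same $h^{-k}$-from-inversion versus $h^{k+1}$-from-integral-weight cancellation (the paper packages this as a single matrix product in Eq.~\eqref{eqn:proof_global_3}). Your identification of Assumption~\ref{assum:4} as the ingredient giving $\hat\gv_\cdot-\gv_\cdot=\Oc(\Delta_\cdot)$ and your handling of the $\Oc(h^{n+2})$ residual also match the paper.
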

\begin{proof}
By replacing $\x_{\cdot}$ with $\hat\x_{\cdot}$ in Eq.~\eqref{eqn:high_order_estimation} and subtracting Eq.~\eqref{eqn:solution_taylor} from Eq.~\eqref{eqn:local_approx}, the expression for the local truncation error becomes
\begin{equation}
\label{eqn:proof_global_1}
\begin{aligned}
\Delta_t&=\frac{\alpha_t\Av(\lambda_s,\lambda_t)}{\alpha_s}\Delta_s-\alpha_t\Av(\lambda_s,\lambda_t)\left(\gv_\theta(\hat\x_{\lambda_{s}},\lambda_{s})-\gv_\theta(\x_{\lambda_{s}},\lambda_{s})\right)\int_{\lambda_s}^{\lambda_t}\Ev_{\lambda_s}(\lambda)\dm\lambda\\
&-\alpha_t\Av(\lambda_s,\lambda_t)\sum_{k=1}^{n}\frac{\hat\gv^{(k)}_\theta(\x_{\lambda_s},\lambda_s)-\gv^{(k)}_\theta(\x_{\lambda_s},\lambda_s)}{k!}\int_{\lambda_s}^{\lambda_t}\Ev_{\lambda_s}(\lambda)(\lambda-\lambda_s)^k\dm\lambda+\Oc(h^{n+2})
\end{aligned}
\end{equation}
And the linear system for solving $\gv^{(k)}_\theta(\x_{\lambda_s},\lambda_s),k=1,\dots,n$ becomes
\begin{equation}
    \begin{pmatrix}
    \delta_1&\delta_1^2&\cdots&\delta_1^n\\
    \delta_2&\delta_2^2&\cdots&\delta_2^n\\
    \vdots&\vdots&\ddots&\vdots\\
    \delta_n&\delta_n^2&\cdots&\delta_n^n
    \end{pmatrix}
    \begin{pmatrix}
    \hat\gv^{(1)}_s\\
    \frac{\hat\gv^{(2)}_s}{2!}\\
    \vdots\\
    \frac{\hat\gv^{(n)}_s}{n!}
    \end{pmatrix}=
    \begin{pmatrix}
    \hat\gv_{i_1}-\hat\gv_{s}\\
    \hat\gv_{i_2}-\hat\gv_{s}\\
    \vdots\\
    \hat\gv_{i_n}-\hat\gv_{s}
    \end{pmatrix}
\end{equation}
where $\hat\gv_{\cdot}=\gv_\theta(\hat\x_{\lambda_{\cdot}},\lambda_{\cdot})$. Under Assumption~\ref{assum:4}, we know $\hat\gv_{\cdot}-\gv_{\cdot}=\Oc(\Delta_{\lambda_{\cdot}})$. Thus, under Assumption~\ref{assum:1}, Assumption~\ref{assum:2} and Assumption~\ref{assum:3}, similar to the deduction in the last section, we have
\begin{equation}
\begin{pmatrix}
    \hat\gv^{(1)}_s-\gv^{(1)}_s\\
    \frac{\hat\gv^{(2)}_s-\gv^{(2)}_s}{2!}\\
    \vdots\\
    \frac{\hat\gv^{(n)}_s-\gv^{(n)}_s}{n!}
    \end{pmatrix}=\begin{pmatrix}
    h^{-1}&&&\\
    &h^{-2}&&\\
    &&\ddots&\\
    &&&h^{-n}\\
    \end{pmatrix}\begin{pmatrix}
    r_1&r_1^2&\cdots&r_1^n\\
    r_2&r_2^2&\cdots&r_2^n\\
    \vdots&\vdots&\ddots&\vdots\\
    r_n&r_n^2&\cdots&r_n^n
    \end{pmatrix}^{-1}\begin{pmatrix}
    \Oc(h^{n+1}+\Delta_s+\Delta_{\lambda_{i_1}})\\
    \Oc(h^{n+1}+\Delta_s+\Delta_{\lambda_{i_2}})\\
    \vdots\\
    \Oc(h^{n+1}+\Delta_s+\Delta_{\lambda_{i_n}})
    \end{pmatrix}
\end{equation}
Besides, under Assumption~\ref{assum:2}, the orders of the other coefficients are the same as we obtain in the last section:
\begin{equation}
\label{eqn:proof_global_2}
    \Av(\lambda_s,\lambda_t)=\Oc(1),\quad\int_{\lambda_s}^{\lambda_t}\Ev_{\lambda_s}(\lambda)(\lambda-\lambda_s)^k\dm\lambda=\Oc(h^{k+1})
\end{equation}
Thus
\begin{equation}
\label{eqn:proof_global_3}
\begin{aligned}
&\sum_{k=1}^{n}\frac{\hat\gv^{(k)}_\theta(\x_{\lambda_s},\lambda_s)-\gv^{(k)}_\theta(\x_{\lambda_s},\lambda_s)}{k!}\int_{\lambda_s}^{\lambda_t}\Ev_{\lambda_s}(\lambda)(\lambda-\lambda_s)^k\dm\lambda\\
=&\begin{pmatrix}
    \int_{\lambda_s}^{\lambda_t}\Ev_{\lambda_s}(\lambda)(\lambda-\lambda_s)^1\dm\lambda\\
    \int_{\lambda_s}^{\lambda_t}\Ev_{\lambda_s}(\lambda)(\lambda-\lambda_s)^2\dm\lambda\\
    \vdots\\
    \int_{\lambda_s}^{\lambda_t}\Ev_{\lambda_s}(\lambda)(\lambda-\lambda_s)^n\dm\lambda
    \end{pmatrix}^\top\begin{pmatrix}
    \hat\gv^{(1)}_s-\gv^{(1)}_s\\
    \frac{\hat\gv^{(2)}_s-\gv^{(2)}_s}{2!}\\
    \vdots\\
    \frac{\hat\gv^{(n)}_s-\gv^{(n)}_s}{n!}
    \end{pmatrix}\\
    =&\begin{pmatrix}
    \Oc(h^2)\\
    \Oc(h^3)\\
    \vdots\\
    \Oc(h^{n+1})
    \end{pmatrix}^\top\begin{pmatrix}
    h^{-1}&&&\\
    &h^{-2}&&\\
    &&\ddots&\\
    &&&h^{-n}\\
    \end{pmatrix}\begin{pmatrix}
    r_1&r_1^2&\cdots&r_1^n\\
    r_2&r_2^2&\cdots&r_2^n\\
    \vdots&\vdots&\ddots&\vdots\\
    r_n&r_n^2&\cdots&r_n^n
    \end{pmatrix}^{-1}\begin{pmatrix}
    \Oc(h^{n+1}+\Delta_s+\Delta_{\lambda_{i_1}})\\
    \Oc(h^{n+1}+\Delta_s+\Delta_{\lambda_{i_2}})\\
    \vdots\\
    \Oc(h^{n+1}+\Delta_s+\Delta_{\lambda_{i_n}})\\
    \end{pmatrix}\\
    =&\begin{pmatrix}
    \Oc(h)\\
    \Oc(h)\\
    \vdots\\
    \Oc(h)
    \end{pmatrix}^\top\begin{pmatrix}
    r_1&r_1^2&\cdots&r_1^n\\
    r_2&r_2^2&\cdots&r_2^n\\
    \vdots&\vdots&\ddots&\vdots\\
    r_n&r_n^2&\cdots&r_n^n
    \end{pmatrix}^{-1}\begin{pmatrix}
    \Oc(h^{n+1}+\Delta_s+\Delta_{\lambda_{i_1}})\\
    \Oc(h^{n+1}+\Delta_s+\Delta_{\lambda_{i_2}})\\
    \vdots\\
    \Oc(h^{n+1}+\Delta_s+\Delta_{\lambda_{i_n}})\\
    \end{pmatrix}\\
    =&\sum_{k=1}^n \Oc(h)\Oc(h^{n+1}+\Delta_s+\Delta_{\lambda_{i_k}})
\end{aligned}
\end{equation}
Combining Eq.~\eqref{eqn:proof_global_1}, Eq.~\eqref{eqn:proof_global_2} and Eq.~\eqref{eqn:proof_global_3}, we can obtain the conclusion in Eq.~\eqref{eqn:lemma_global}.
\end{proof}

Then we prove Theorem~\ref{theorem:global} below.
\begin{proof} (Proof of Theorem~\ref{theorem:global})

As we have discussed, the predictor step from $t_{m-1}$ to $t_m$ is a special case of the local approximation Eq.~\eqref{eqn:local_approx} with $(t_{i_n},\dots,t_{i_1},s,t)=(t_{m-n-1},\dots,t_{m-2},t_{m-1},t_m)$. By Lemma~\ref{lemma:global} we have
\begin{equation}
\Delta_m=\frac{\alpha_{t_m}\Av(\lambda_{t_{m-1}},\lambda_{t_m})}{\alpha_{t_{m-1}}}\Delta_{m-1}+\Oc(h)\left(\sum_{k=0}^n\Oc(\Delta_{m-k-1})+\Oc(h^{n+1})\right)
\end{equation}

It follows that there exists constants $C,C_0$ irrelevant to $h$, so that
\begin{equation}
|\Delta_m|\leq \left(\frac{\alpha_{t_m}\Av(\lambda_{t_{m-1}},\lambda_{t_m})}{\alpha_{t_{m-1}}}+Ch\right)|\Delta_{m-1}|+Ch\sum_{k=0}^n|\Delta_{m-k-1}|+C_0h^{n+2}
\end{equation}

Denote $f_m\coloneqq\max_{0\leq i\leq m}|\Delta_i|$, we then have
\begin{equation}
|\Delta_m|\leq \left(\frac{\alpha_{t_m}\Av(\lambda_{t_{m-1}},\lambda_{t_m})}{\alpha_{t_{m-1}}}+C_1h\right)f_{m-1}+C_0h^{n+2}
\end{equation}
Since $\frac{\alpha_{t_m}\Av(\lambda_{t_{m-1}},\lambda_{t_m})}{\alpha_{t_{m-1}}}\rightarrow 1$ when $h\rightarrow 0$ and it has bounded first-order derivative due to Assumption~\ref{assum:2}, there exists a constant $C_2$, so that for any $C\geq C_2$, $\frac{\alpha_{t_m}\Av(\lambda_{t_{m-1}},\lambda_{t_m})}{\alpha_{t_{m-1}}}+Ch>1$ for sufficiently small $h$. Thus, by taking $C_3=\max\{C_1,C_2\}$, we have
\begin{equation}
\label{eqn:proof_global_main_1}
f_{m}\leq \left(\frac{\alpha_{t_m}\Av(\lambda_{t_{m-1}},\lambda_{t_m})}{\alpha_{t_{m-1}}}+C_3h\right)f_{m-1}+C_0h^{n+2}
\end{equation}
Denote $A_{m-1}\coloneqq \frac{\alpha_{t_m}\Av(\lambda_{t_{m-1}},\lambda_{t_m})}{\alpha_{t_{m-1}}}+C_3h$, by repeating Eq.~\eqref{eqn:proof_global_main_1}, we have
\begin{equation}
\label{eqn:proof_global_main_2}
f_M\leq \left(\prod_{i=n}^{M-1}A_i\right)f_n+\left(\sum_{i=n+1}^M\prod_{j=i}^{M-1}A_j\right)C_0h^{n+2}
\end{equation}
By Assumption~\ref{assum:5}, $h=\Oc(1/M)$, so we have
\begin{equation}
\label{eqn:proof_global_main_3}
\begin{aligned}
\prod_{i=n}^{M-1}A_i&=\frac{\alpha_{t_M}\Av(\lambda_{t_{n}},\lambda_{t_M})}{\alpha_{t_{n}}}\prod_{i=n}^{M-1}\left(1+\frac{\alpha_{t_{i-1}}C_3h}{\alpha_{t_i}\Av(\lambda_{t_{i-1}},\lambda_{t_i})}\right)\\
&\leq \frac{\alpha_{t_M}\Av(\lambda_{t_{n}},\lambda_{t_M})}{\alpha_{t_{n}}}\prod_{i=n}^{M-1}\left(1+\frac{\alpha_{t_{i-1}}C_4}{\alpha_{t_i}\Av(\lambda_{t_{i-1}},\lambda_{t_i})M}\right)\\
&\leq \frac{\alpha_{t_M}\Av(\lambda_{t_{n}},\lambda_{t_M})}{\alpha_{t_{n}}}\left(1+\frac{\sigma}{M}\right)^{M-n}\\
&\leq C_5e^{\sigma}
\end{aligned}
\end{equation}
where $\sigma=\max_{n\leq i\leq M-1}\frac{\alpha_{t_{i-1}}C_4}{\alpha_{t_i}\Av(\lambda_{t_{i-1}},\lambda_{t_i})}$. Then denote $\beta\coloneqq \max_{n+1\leq i\leq M}\frac{\alpha_{t_M}\Av(\lambda_{t_{i}},\lambda_{t_M})}{\alpha_{t_{i}}}$, we have
\begin{equation}
\label{eqn:proof_global_main_4}
\begin{aligned}
\sum_{i=n+1}^M\prod_{j=i}^{M-1}A_j&\leq \sum_{i=n+1}^M \frac{\alpha_{t_M}\Av(\lambda_{t_{i}},\lambda_{t_M})}{\alpha_{t_{i}}}\left(1+\frac{\sigma}{M}\right)^{M-i}\\
&\leq \beta \sum_{i=0}^{M-n-1} \left(1+\frac{\sigma}{M}\right)^{i}\\
&=\frac{\beta M}{\sigma}\left[\left(1+\frac{\sigma}{M}\right)^{M-n}-1\right]\\
&\leq C_6\left(e^{\sigma}-1\right)M
\end{aligned}
\end{equation}
Then we substitute Eq.~\eqref{eqn:proof_global_main_3} and Eq.~\eqref{eqn:proof_global_main_4} into Eq.~\eqref{eqn:proof_global_main_2}. Note that $M=\Oc(1/h)$ by Assumption~\ref{assum:5}, and $f_n=\Oc(h^{n+1})$ by Assumption~\ref{assum:6}, finally we conclude that $|\Delta_M|\leq f_M=\Oc(h^{n+1})$.
\end{proof}
\subsection{Pseudo-Order Solver}
\label{appendix:pseudo}
First, we provide a lemma that gives the explicit solution to Eq.~\eqref{eqn:pseudo}.
\begin{lemma}
\label{lemma:pseudo}
The solution to Eq.~\eqref{eqn:pseudo} is
\begin{equation}
    \frac{\hat\gv_s^{(k)}}{k!}=\sum_{p=1}^k\frac{\gv_{i_{p}}-\gv_{i_0}}{\prod_{q=0,q\neq p}^k (\delta_{p}-\delta_{q})}
\end{equation}
\end{lemma}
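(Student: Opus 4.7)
The plan is to reinterpret the linear system in Eq.~\eqref{eqn:pseudo} as a polynomial interpolation problem and read off the answer from the Lagrange interpolating polynomial.

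First I would observe that the $k$-th linear system amounts to the assertion that the polynomial $Q_k(\delta) \coloneqq \sum_{j=1}^{k} \frac{\hat\gv_s^{(j)}}{j!}\,\delta^j$ satisfies $Q_k(\delta_p) = \gv_{i_p} - \gv_s$ for every $p = 1,\dots,k$: each row of the system is precisely one of these equations once one identifies the unknown in the $j$-th slot with the coefficient of $\delta^j$ in $Q_k$. Using the convention $i_0 = s$ (so that $\delta_0 = 0$ and $\gv_{i_0} = \gv_s$), one automatically also has $Q_k(0) = 0 = \gv_{i_0} - \gv_s$, giving a free extra interpolation condition. Hence $Q_k$ is a polynomial of degree at most $k$ matching the $k+1$ values $\{\gv_{i_p} - \gv_s\}_{p=0}^{k}$ at the $k+1$ distinct nodes $\{\delta_p\}_{p=0}^{k}$.

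Next, by uniqueness of polynomial interpolation through $k+1$ distinct nodes, $Q_k$ must coincide with the Lagrange interpolating polynomial
\begin{equation*}
Q_k(\delta) \;=\; \sum_{p=0}^{k} (\gv_{i_p} - \gv_s)\prod_{\substack{q=0 \\ q \neq p}}^{k} \frac{\delta - \delta_q}{\delta_p - \delta_q}.
\end{equation*}
Reading off the coefficient of $\delta^k$ on both sides yields $\hat\gv_s^{(k)}/k! = \sum_{p=0}^{k} (\gv_{i_p} - \gv_s)\big/\prod_{q\neq p}(\delta_p - \delta_q)$, and the $p = 0$ summand vanishes because $\gv_{i_0} = \gv_s$, which gives exactly the claimed formula.

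The only nontrivial bookkeeping is verifying that the coefficient matrix in Eq.~\eqref{eqn:pseudo} is invertible (so the system really does have a unique solution, which our interpolation argument then identifies) and that the unknown labelled $\hat\gv_s^{(k)}/k!$ genuinely corresponds to the leading coefficient of $Q_k$. The former follows from the standard Vandermonde determinant formula together with the distinctness and nonvanishing of $\delta_1,\dots,\delta_k$; the latter is immediate from the last column of the matrix consisting of the powers $\delta_p^k$. With those two checks in hand, the rest is pure Lagrange interpolation and no computation beyond the leading-coefficient extraction is required.
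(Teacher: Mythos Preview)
Your proof is correct. The paper takes a slightly different route: it writes the solution of the $k\times k$ system as $\frac{\hat\gv_s^{(k)}}{k!}=\sum_{p=1}^k(\Rv_k^{-1})_{kp}(\gv_{i_p}-\gv_{i_0})$ and then invokes a cited closed-form for the last row of the inverse of a Vandermonde-type matrix, namely $(\Rv_k^{-1})_{kp}=1/\big(\delta_p\prod_{q=1,q\neq p}^{k}(\delta_p-\delta_q)\big)$, which after inserting $\delta_0=0$ becomes $1/\prod_{q=0,q\neq p}^{k}(\delta_p-\delta_q)$. Your argument reinterprets the system as fixing a polynomial with no constant term at $k$ nodes, adjoins the free node $\delta_0=0$, and then reads off the leading coefficient from the Lagrange form---this is the same underlying linear algebra (Lagrange bases are exactly what produce the Vandermonde inverse), but your presentation is self-contained and avoids the external reference. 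It also dovetails nicely with the paper's proof of the subsequent Theorem~\ref{theorem:pseudo}, which itself works with Lagrange polynomials and their recurrence; your Lemma proof could in fact be absorbed into that argument as the $l=0$ case of the leading-coefficient identification.
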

\begin{proof}
Denote 
\begin{equation}
    \Rv_k=\begin{pmatrix}
    \delta_1&\delta_1^2&\cdots&\delta_1^k\\
    \delta_2&\delta_2^2&\cdots&\delta_2^k\\
    \vdots&\vdots&\ddots&\vdots\\
    \delta_k&\delta_k^2&\cdots&\delta_k^k
    \end{pmatrix}
\end{equation}
Then the solution to Eq.~\eqref{eqn:pseudo} can be expressed as
\begin{equation}
\label{eqn:proof_lemma_pseudo_1}
    \frac{\hat\gv_s^{(k)}}{k!}=\sum_{p=1}^k(\Rv_k^{-1})_{kp}(\gv_{i_p}-\gv_{i_0})
\end{equation}
where $(\Rv_k^{-1})_{kp}$ is the element of $\Rv_k^{-1}$ at the $k$-th row and the $p$-th column. From previous studies of the inversion of the Vandermonde matrix~\cite{eisinberg2006inversion}, we know
\begin{equation}
\label{eqn:proof_lemma_pseudo_2}
    (\Rv_k^{-1})_{kp}=\frac{1}{\delta_p\prod_{q=1,q\neq p}^k (\delta_{p}-\delta_{q})}=\frac{1}{(\delta_p-\delta_0)\prod_{q=1,q\neq p}^k (\delta_{p}-\delta_{q})}
\end{equation}
Substituting Eq.~\eqref{eqn:proof_lemma_pseudo_2} into Eq.~\eqref{eqn:proof_lemma_pseudo_1}, we finish the proof.
\end{proof}
Then we prove Theorem~\ref{theorem:pseudo} below:
\begin{proof} (Proof of Theorem~\ref{theorem:pseudo})
First, we use mathematical induction to prove that
\begin{equation}
\label{eqn:proof_pseudo_1}
    D_l^{(k)}=\tilde D_l^{(k)}\coloneqq\sum_{p=1}^k\frac{\gv_{i_{l+p}}-\gv_{i_l}}{\prod_{q=0,q\neq p}^k (\delta_{l+p}-\delta_{l+q})},\quad 1\leq k\leq n,0\leq l\leq n-k
\end{equation}

For $k=1$, Eq.~\eqref{eqn:proof_pseudo_1} holds by the definition of $D_l^{(k)}$. Suppose the equation holds for $k$, we then prove it holds for $k+1$. 

Define the Lagrange polynomial which passes $(\delta_{l+p},\gv_{i_{l+p}}-\gv_{i_l})$ for $0\leq p\leq k$:
\begin{equation}
    P_l^{(k)}(x)\coloneqq \sum_{p=1}^k\left(\gv_{i_{l+p}}-\gv_{i_l}\right)\prod_{q=0,q\neq p}^k \frac{x-\delta_{l+q}}{\delta_{l+p}-\delta_{l+q}},\quad 1\leq k\leq n,0\leq l\leq n-k
\end{equation}
Then $\tilde D_l^{(k)}=P_l^{(k)}(x)[x^k]$ is the coefficients before the highest-order term $x^k$ in $P_l^{(k)}(x)$. We then prove that $P_l^{(k)}(x)$ satisfies the following recurrence relation:
\begin{equation}
    P_l^{(k)}(x)=\tilde P_l^{(k)}(x)\coloneqq\frac{(x-\delta_l)P_{l+1}^{(k-1)}(x)-(x-\delta_{l+k})P_l^{(k-1)}(x)}{\delta_{l+k}-\delta_l}
\end{equation}

By definition, $P_{l+1}^{(k-1)}(x)$ is the $(k-1)$-th order polynomial which passes $(\delta_{l+p},\gv_{i_{l+p}}-\gv_{i_l})$ for $1\leq p\leq k$, and $P_{l}^{(k-1)}(x)$ is the $(k-1)$-th order polynomial which passes $(\delta_{l+p},\gv_{i_{l+p}}-\gv_{i_l})$ for $0\leq p\leq k-1$. 

Thus, for $1\leq p\leq k-1$, we have
\begin{equation}
\tilde P_l^{(k)}(\delta_{l+p})=\frac{(\delta_{l+p}-\delta_l)P_{l+1}^{(k-1)}(\delta_{l+p})-(\delta_{l+p}-\delta_{l+k})P_l^{(k-1)}(\delta_{l+p})}{\delta_{l+k}-\delta_l}=\gv_{i_{l+p}}-\gv_{i_l}
\end{equation}
For $p=0$, we have
\begin{equation}
\tilde P_l^{(k)}(\delta_{l})=\frac{(\delta_{l}-\delta_l)P_{l+1}^{(k-1)}(\delta_{l})-(\delta_{l}-\delta_{l+k})P_l^{(k-1)}(\delta_{l})}{\delta_{l+k}-\delta_l}=\gv_{i_{l}}-\gv_{i_l}
\end{equation}
for $p=k$, we have
\begin{equation}
\tilde P_l^{(k)}(\delta_{l+k})=\frac{(\delta_{l+k}-\delta_l)P_{l+1}^{(k-1)}(\delta_{l+k})-(\delta_{l+k}-\delta_{l+k})P_l^{(k-1)}(\delta_{l+k})}{\delta_{l+k}-\delta_l}=\gv_{i_{l+k}}-\gv_{i_l}
\end{equation}
Therefore, $\tilde P_l^{(k)}(x)$ is the $k$-th order polynomial which passes $k+1$ distince points $(\delta_{l+p},\gv_{i_{l+p}}-\gv_{i_l})$ for $0\leq p\leq k$. Due to the uniqueness of the Lagrange polynomial, we can conclude that $P_l^{(k)}(x)=\tilde P_l^{(k)}(x)$. By taking the coefficients of the highest-order term, we obtain
\begin{equation}
\label{eqn:proof_pseudo_2}
\tilde D_l^{(k)}=\frac{\tilde D_{l+1}^{(k-1)}-\tilde D_{l}^{(k-1)}}{\delta_{l+k}-\delta_l}
\end{equation}

where by the induction hypothesis we have $D_{l+1}^{(k-1)}=\tilde D_{l+1}^{(k-1)},D_{l}^{(k-1)}=\tilde D_{l}^{(k-1)}$. Comparing Eq.~\eqref{eqn:proof_pseudo_2} with the recurrence relation of $D_l^{(k)}$ in Eq.~\eqref{eqn:pseudo_recur}, it follows that $D_l^{(k)}=\tilde D_l^{(k)}$, which completes the mathematical induction.

Finally, by comparing the expression for $\tilde D_l^{(k)}$ in Eq.~\eqref{eqn:proof_pseudo_1} and the expression for $\hat\gv_s^{(k)}$ in Lemma~\ref{lemma:pseudo}, we can conclude that $\hat\gv_s^{(k)}=k!D_0^{(k)}$.
\end{proof}
\subsection{Local Unbiasedness}
\label{appendix:proof_unbiasedness}
\begin{proof} (Proof of Theorem~\ref{theorem:unbiased})
Subtracting the local exact solution in Eq.~\eqref{eqn:solution} from the $(n+1)$-th order local approximation in Eq.~\eqref{eqn:local_approx}, we have the local truncation error
\begin{equation}
\label{eqn:proof_unbiased}
\begin{aligned}
\hat\x_t-\x_t&=\alpha_t\Av(\lambda_s,\lambda_t)\left(\int_{\lambda_s}^{\lambda_t}\Ev_{\lambda_s}(\lambda)\gv_\theta(\x_\lambda,\lambda)\dm\lambda-\sum_{k=0}^{n}\hat\gv^{(k)}_\theta(\x_{\lambda_s},\lambda_s)\int_{\lambda_s}^{\lambda_t}\Ev_{\lambda_s}(\lambda)\frac{(\lambda-\lambda_s)^k}{k!}\dm\lambda\right)\\
&=\alpha_t\Av(\lambda_s,\lambda_t)\int_{\lambda_s}^{\lambda_t}\Ev_{\lambda_s}(\lambda)\left(\gv_\theta(\x_\lambda,\lambda)-\gv_\theta(\x_{\lambda_s},\lambda_s)\right)\dm\lambda\\
&-\alpha_t\Av(\lambda_s,\lambda_t)\sum_{k=1}^{n}\hat\gv^{(k)}_\theta(\x_{\lambda_s},\lambda_s)\int_{\lambda_s}^{\lambda_t}\Ev_{\lambda_s}(\lambda)\frac{(\lambda-\lambda_s)^k}{k!}\dm\lambda\\
&=\alpha_t\Av(\lambda_s,\lambda_t)\int_{\lambda_s}^{\lambda_t}\Ev_{\lambda_s}(\lambda)\left(\gv_\theta(\x_\lambda,\lambda)-\gv_\theta(\x_{\lambda_s},\lambda_s)\right)\dm\lambda\\
&-\alpha_t\Av(\lambda_s,\lambda_t)\sum_{k=1}^{n}\left(\sum_{l=1}^n (\Rv_n^{-1})_{kl}(\gv_\theta(\x_{\lambda_{i_l}},\lambda_{i_l})-\gv_\theta(\x_{\lambda_s},\lambda_s))\right)\int_{\lambda_s}^{\lambda_t}\Ev_{\lambda_s}(\lambda)\frac{(\lambda-\lambda_s)^k}{k!}\dm\lambda
\end{aligned}
\end{equation}
where $\x_\lambda$ is on the ground-truth ODE trajectory passing $\x_{\lambda_s}$, and $(\Rv_n^{-1})_{kl}$ is the element of the inverse matrix $\Rv_n^{-1}$ at the $k$-th row and the $l$-th column, as discussed in the proof of Lemma~\ref{lemma:pseudo}. By Newton-Leibniz theorem, we have
\begin{equation}
\gv_\theta(\x_\lambda,\lambda)-\gv_\theta(\x_{\lambda_s},\lambda_s)=\int_{\lambda_s}^\lambda \gv_\theta^{(1)}(\x_\tau,\tau) \dm\tau
\end{equation}
Also, since $\x_{\lambda_{i_l}},l=1,\dots,n$ are on the ground-truth ODE trajectory passing $\x_{\lambda_s}$, we have
\begin{equation}
\gv_\theta(\x_{\lambda_{i_l}},\lambda_{i_l})-\gv_\theta(\x_{\lambda_s},\lambda_s)=\int_{\lambda_s}^{\lambda_{i_l}} \gv_\theta^{(1)}(\x_\tau,\tau) \dm\tau
\end{equation}
where
\begin{equation}
    \gv_\theta^{(1)}(\x_\tau,\tau) = e^{-\int_{\lambda_s}^{\tau}\sv_r\dm r}\left(\fv_\theta^{(1)}(\x_\tau,\tau)-\sv_\tau \fv_\theta(\x_\tau,\tau)- \bv_\tau
    \right)
\end{equation}

Note that $\sv_\lambda,\lv_\lambda$ are the solution to the least square problem in Eq.~\eqref{eqn:sb_definition}, which makes sure $\E_{p_{\tau}^\theta(\x_\tau)}\left[\fv_\theta^{(1)}(\x_\tau,\tau)-\sv_\tau \fv_\theta(\x_\tau,\tau)- \bv_\tau\right]=0$. It follows that $\E_{p_{\lambda_s}^\theta(\x_{\lambda_s})}\left[\fv_\theta^{(1)}(\x_\tau,\tau)-\sv_\tau \fv_\theta(\x_\tau,\tau)- \bv_\tau\right]=0$, since $\x_\tau$ is on the ground-truth ODE trajectory passing $\x_{\lambda_s}$. Therefore, we have $\E_{p_{\lambda_s}^\theta(\x_{\lambda_s})}\left[\gv_\theta(\x_\lambda,\lambda)-\gv_\theta(\x_{\lambda_s},\lambda_s)\right]=0$ and $\E_{p_{\lambda_s}^\theta(\x_{\lambda_s})}\left[\gv_\theta(\x_{\lambda_{i_l}},\lambda_{i_l})-\gv_\theta(\x_{\lambda_s},\lambda_s)\right]=0$. Substitute them into Eq.~\eqref{eqn:proof_unbiased}, we conclude that $\E_{p_{\lambda_s}^\theta(\x_{\lambda_s})}\left[\hat\x_t-\x_t\right]=0$.
\end{proof}
\section{Implementation Details}
\subsection{Computing the EMS and Related Integrals in the ODE Formulation}
The ODE formulation and local approximation require computing some complex integrals involving $\lv_\lambda,\sv_\lambda,\bv_\lambda$. In this section, we'll give details about how to estimate $\lv_\lambda^*,\sv_\lambda^*,\bv_\lambda^*$ on a few datapoints, and how to use them to compute the integrals efficiently.
\subsubsection{Computing the EMS}
\label{appendix:EMS}
First for the computing of $\lv_\lambda^*$ in Eq.~\eqref{eqn:l_definition}, note that
\begin{equation}
    \nabla_{\x}\Nv_\theta(\x_\lambda,\lambda)=\sigma_\lambda\nabla_{\x}\epsilonv(\x_\lambda,\lambda)-\diag(\lv_\lambda)
\end{equation}
Since $\diag(\lv_\lambda)$ is a diagonal matrix, minimizing $\E_{p_\lambda^\theta(\x_\lambda)}\left[\|\nabla_{\x}\Nv_\theta(\x_\lambda,\lambda)\|_F^2\right]$ is equivalent to minimizing $\E_{p_\lambda^\theta(\x_\lambda)}\left[\|\diag^{-1}(\nabla_{\x}\Nv_\theta(\x_\lambda,\lambda))\|_2^2\right]=\E_{p_\lambda^\theta(\x_\lambda)}\left[\|\diag^{-1}(\sigma_\lambda\nabla_{\x}\epsilonv(\x_\lambda,\lambda))-\lv_\lambda\|_2^2\right]$, where $\diag^{-1}$ denotes the operator that takes the diagonal of a matrix as a vector. Thus we have $\lv_\lambda^*=\E_{p_\lambda^\theta(\x_\lambda)}\left[\diag^{-1}(\sigma_\lambda\nabla_{\x}\epsilonv(\x_\lambda,\lambda))\right]$. 

However, this formula for $\lv_\lambda^*$ requires computing the diagonal of the full Jacobian of the noise prediction model, which typically has $\Oc(d^2)$ time complexity for $d$-dimensional data and is unacceptable when $d$ is large. Fortunately, the cost can be reduced to $\Oc(d)$ by utilizing stochastic diagonal estimators and employing the efficient Jacobian-vector-product operator provided by forward-mode automatic differentiation in deep learning frameworks. 

For a $d$-by-$d$ matrix $\Dv$, its diagonal can be unbiasedly estimated by
\cite{bekas2007estimator}
\begin{equation}
    \diag^{-1}(\Dv)=\left[\sum_{k=1}^s(\Dv\vv_k)\odot\vv_k\right]\oslash\left[\sum_{k=1}^s\vv_k\odot\vv_k\right]
\end{equation}

where $\vv_k\sim p(\vv)$ are $d$-dimensional i.i.d. samples with zero mean, $\odot$ is the element-wise multiplication i.e., Hadamard product, and $\oslash$ is the element-wise division. The stochastic diagonal estimator is analogous to the famous Hutchinson’s trace estimator~\citep{hutchinson1989stochastic}. By taking $p(\vv)$ as the Rademacher distribution, we have $\vv_k\odot\vv_k=\vect 1$, and the denominator can be omitted. For simplicity, we use regular multiplication and division symbols, assuming they are element-wise between vectors. Then $\lv_\lambda^*$ can be expressed as:
\begin{equation}
\label{eqn:l-explicit}
    \lv_{\lambda}^* = 
\E_{p_\lambda^\theta(\x_\lambda)p(\vv)}\left[(\sigma_{\lambda}\nabla_{\x}\epsilonv_\theta(\x_{\lambda}, \lambda)\vv) \vv\right]
\end{equation}
which is an unbiased estimation when we replace the expectation with mean on finite samples $\x_\lambda\sim p_\lambda^\theta(\x_\lambda),\vv\sim p(\vv)$. The process for estimating $\lv_{\lambda}^*$ can easily be paralleled on multiple devices by computing $\sum (\sigma_{\lambda}\nabla_{\x}\epsilonv_\theta(\x_{\lambda}, \lambda)\vv) \vv$ on separate datapoints and gather them in the end.

Next, for the computing of $\sv_\lambda^*,\bv_\lambda^*$ in Eq.~\eqref{eqn:sb_definition}, note that it's a simple least square problem. By taking partial derivatives w.r.t. $\sv_\lambda,\bv_\lambda$ and set them to 0, we have
\begin{equation}
\begin{cases}
\E_{p_\lambda^\theta(\x_\lambda)}\left[\left(\fv^{(1)}_\theta(\x_\lambda,\lambda)-\sv_\lambda^* \fv_\theta(\x_\lambda,\lambda)- \bv_\lambda^*\right)\fv_\theta(\x_\lambda,\lambda)\right]=0\\
\E_{p_\lambda^\theta(\x_\lambda)}\left[\fv^{(1)}_\theta(\x_\lambda,\lambda)-\sv_\lambda^* \fv_\theta(\x_\lambda,\lambda)- \bv_\lambda^*\right]=0
\end{cases}
\end{equation}
And we obtain the explicit formula for $\sv_\lambda^*,\bv_\lambda^*$
\begin{align}
\sv_\lambda^* &= \frac{
\E_{p_\lambda^\theta(\x_\lambda)}\left[
    \fv_\theta(\x_\lambda,\lambda) \fv_\theta^{(1)}(\x_\lambda,\lambda)
\right]
- \E_{p_\lambda^\theta(\x_\lambda)}\left[
    \fv_\theta(\x_\lambda,\lambda)
\right]
\E_{p_\lambda^\theta(\x_\lambda)}\left[
    \fv_\theta^{(1)}(\x_\lambda,\lambda)
\right]
}{
\E_{p_\lambda^\theta(\x_\lambda)}[\fv_\theta(\x_\lambda,\lambda) \fv_\theta(\x_\lambda,\lambda)]
- \E_{p_\lambda^\theta(\x_\lambda)}[\fv_\theta(\x_\lambda,\lambda)] \E_{p_\lambda^\theta(\x_\lambda)}[\fv_\theta(\x_\lambda,\lambda)]
} \\
\bv_\lambda^* &= \E_{p_\lambda^\theta(\x_\lambda)}[\fv^{(1)}(\x_\lambda,\lambda)]
- \sv_\lambda^* 
\E_{p_\lambda^\theta(\x_\lambda)}[\fv_\theta(\x_\lambda,\lambda)] 
\end{align}
which are unbiased least square estimators when we replace the expectation with mean on finite samples $\x_\lambda\sim p_\lambda^\theta(\x_\lambda)$. Also, the process for estimating $\sv_{\lambda}^*,\bv_{\lambda}^*$ can be paralleled on multiple devices by computing $\sum \fv_\theta,\sum \fv^{(1)}_\theta,\sum \fv_\theta\fv_\theta,\sum \fv_\theta\fv^{(1)}_\theta$ on separate datapoints and gather them in the end. Thus, the estimation of $\sv_\lambda^*,\bv_\lambda^*$ involving evaluating $\fv_\theta$ and $\fv_\theta^{(1)}$ on $\x_\lambda$. $\fv_\theta$ is a direct transformation of $\epsilonv_\theta$ and requires a single forward pass. For $\fv_\theta^{(1)}$, we have
\begin{equation}
\begin{aligned}
    \fv^{(1)}_\theta(\x_{\lambda},\lambda) &=\frac{\partial \fv_\theta(\x_{\lambda},\lambda)}{\partial \lambda}+\nabla_{\x}\fv_\theta(\x_{\lambda},\lambda)\frac{\dm\x_\lambda}{\dm\lambda}\\
    &=e^{-\lambda}\left(\epsilonv_\theta^{(1)}(\x_\lambda,\lambda)-\epsilonv_\theta(\x_\lambda,\lambda)\right)-\frac{\dot\lv_\lambda\alpha_\lambda-\dot\alpha_\lambda\lv_\lambda}{\alpha_\lambda^2}\x_\lambda-\frac{\lv_\lambda}{\alpha_\lambda}\left(\frac{\dot\alpha_\lambda}{\
    \alpha_\lambda}\x_\lambda-\sigma_\lambda\epsilonv_\theta(\x_\lambda,\lambda)\right)\\
    &=e^{-\lambda}\left(
    (\lv_\lambda - 1)\epsilonv_\theta(\x_\lambda,\lambda)
    + \epsilonv_\theta^{(1)}(\x_\lambda,\lambda)
\right) - \frac{\dot\lv_\lambda \x_\lambda}{\alpha_\lambda}
\end{aligned}
\end{equation}
After we obtain $\lv_\lambda$, $\dot\lv_\lambda$ can be estimated by finite difference. To compute $\epsilonv_\theta^{(1)}(\x_\lambda,\lambda)$, we have
\begin{equation}
\label{eqn:sb-explicit}
\begin{aligned}
\epsilonv_\theta^{(1)}(\x_\lambda,\lambda)&=\partial_\lambda\epsilonv_\theta(\x_\lambda,\lambda)+\nabla_{\x}\epsilonv_\theta(\x_\lambda,\lambda)\frac{\dm\x_\lambda}{\dm\lambda}\\
&=\partial_\lambda\epsilonv_\theta(\x_\lambda,\lambda)+\nabla_{\x}\epsilonv_\theta(\x_\lambda,\lambda)\left(\frac{\dot\alpha_\lambda}{\
    \alpha_\lambda}\x_\lambda-\sigma_\lambda\epsilonv_\theta(\x_\lambda,\lambda)\right)
\end{aligned}
\end{equation}
which can also be computed with the Jacobian-vector-product operator.

In conclusion, for any $\lambda$, $\lv_\lambda^*,\sv_\lambda^*,\bv_\lambda^*$ can be efficiently and unbiasedly estimated by sampling a few datapoints $\x_\lambda\sim p_\lambda^\theta(\x_\lambda)$ and using the Jacobian-vector-product.
\subsubsection{Integral Precomputing}
\label{appendix:integral_estimation}

In the local approximation in Eq.~\eqref{eqn:local_approx}, there are three integrals involving the EMS, which are $\Av(\lambda_s,\lambda_t),\int_{\lambda_s}^{\lambda_t}\Ev_{\lambda_s}(\lambda)\Bv_{\lambda_s}(\lambda)\dm\lambda,\int_{\lambda_s}^{\lambda_t}\Ev_{\lambda_s}(\lambda)\frac{(\lambda-\lambda_s)^k}{k!}\dm\lambda$. Define the following terms, which are also evaluated at $\lambda_{j_0},\lambda_{j_1},\dots,\lambda_{j_N}$ and can be precomputed in $\Oc(N)$ time:
\begin{equation}
\begin{aligned}
\Lv_\lambda&=\int_{\lambda_T}^{\lambda}\lv_\tau\dm\tau\\
\Sv_\lambda&=\int_{\lambda_T}^{\lambda}\sv_\tau\dm\tau\\
\Bv_\lambda&=\int_{\lambda_T}^{\lambda}e^{-\int_{\lambda_T}^{r}\sv_\tau\dm\tau}\bv_{r}\dm r=\int_{\lambda_T}^{\lambda}e^{-\Sv_r}\bv_{r}\dm r\\
\Cv_\lambda&=\int_{\lambda_T}^{\lambda}\left(
    e^{\int_{\lambda_T}^{u}(\lv_\tau+\sv_\tau)\dm\tau}
    \int_{\lambda_T}^{u}e^{-\int_{\lambda_T}^{r}\sv_\tau\dm\tau}\bv_{r}\dm r
\right)\dm u=\int_{\lambda_T}^\lambda e^{\Lv_u+\Sv_u}\Bv_u \dm u\\
\Iv_\lambda&=\int_{\lambda_T}^{\lambda}e^{\int_{\lambda_T}^{r}(\lv_\tau+\sv_\tau)\dm\tau}\dm r=\int_{\lambda_T}^{\lambda}e^{\Lv_r+\Sv_r}\dm r
\end{aligned}
\end{equation}

Then for any $\lambda_s,\lambda_t$, we can verify that the first two integrals can be expressed as
\begin{equation}
\begin{aligned}
\Av(\lambda_s,\lambda_t)&=e^{\Lv_{\lambda_s}-\Lv_{\lambda_t}}\\
\int_{\lambda_s}^{\lambda_t}\Ev_{\lambda_s}(\lambda)\Bv_{\lambda_s}(\lambda)\dm\lambda&=e^{-\Lv_{\lambda_s}}(\Cv_{\lambda_t}-\Cv_{\lambda_s}-\Bv_{\lambda_s}(\Iv_{\lambda_t} -\Iv_{\lambda_s}))
\end{aligned}
\end{equation}
which can be computed in $\Oc(1)$ time. For the third and last integral, denote it as $\Ev_{\lambda_s,\lambda_t}^{(k)}$, i.e.,

\begin{equation}
    \Ev_{\lambda_s,\lambda_t}^{(k)}=\int_{\lambda_s}^{\lambda_t}\Ev_{\lambda_s}(\lambda)\frac{(\lambda-\lambda_s)^k}{k!}\dm\lambda
\end{equation}
We need to compute it for $0\leq k\leq n$ and for every local transition time pair $(\lambda_s,\lambda_t)$ in the sampling process. For $k=0$, we have
\begin{equation}
    \Ev_{\lambda_s,\lambda_t}^{(0)}=e^{-\Lv_{\lambda_s}-\Sv_{\lambda_s}}(\Iv_{\lambda_t}-\Iv_{\lambda_s})
\end{equation}
which can also be computed in $\Oc(1)$ time. But for $k>0$, we no longer have such a simplification technique. Still, for any fixed timestep schedule $\{\lambda_i\}_{i=0}^M$ during the sampling process, we can use a lazy precomputing strategy: compute $\Ev_{\lambda_{i-1},\lambda_i}^{(k)},1\leq i\leq M$ when generating the first sample, store it with a unique key $(k,i)$ and retrieve it in $\Oc(1)$ in the following sampling process.

\subsection{Algorithm}
\label{appendix:algorithm}
We provide the pseudocode of the local approximation and global solver in Algorithm~\ref{algorithm:local} and Algorithm~\ref{algorithm:global}, which concisely describes how we implement DPM-Solver-v3.
\begin{algorithm}[ht!]
\caption{$(n+1)$-th order local approximation: LUpdate$_{n+1}$}
\label{algorithm:local}
\textbf{Require:} noise schedule $\alpha_t,\sigma_t$, coefficients $\lv_\lambda,\sv_\lambda,\bv_\lambda$

\textbf{Input:} transition time pair $(s,t)$, $\x_s$, 
$n$ extra timesteps $\{t_{i_k}\}_{k=1}^n$, $\gv_\theta$ values $(\gv_{i_n},\dots,\gv_{i_1},\gv_s)$ at $\{(\x_{\lambda_{i_k}},t_{i_k})\}_{k=1}^n$ and $(\x_s,s)$

\textbf{Input Format:} $\{t_{i_n},\gv_{i_n}\},\dots,\{t_{i_1},\gv_{i_1}\},\{s,\x_s,\gv_s\},t$

\begin{algorithmic}[1]
\STATE Compute $\Av(\lambda_s,\lambda_t),\int_{\lambda_s}^{\lambda_t}\Ev_{\lambda_s}(\lambda)\Bv_{\lambda_s}(\lambda)\dm\lambda,\int_{\lambda_s}^{\lambda_t}\Ev_{\lambda_s}(\lambda)\frac{(\lambda-\lambda_s)^k}{k!}\dm\lambda$ (Appendix~\ref{appendix:integral_estimation})
\STATE $\delta_k=\lambda_{i_k}-\lambda_s,\quad k=1,\dots,n$
\STATE $
    \begin{pmatrix}
    \hat\gv^{(1)}_s\\
    \frac{\hat\gv^{(2)}_s}{2!}\\
    \vdots\\
    \frac{\hat\gv^{(n)}_s}{n!}
    \end{pmatrix}\leftarrow\begin{pmatrix}
    \delta_1&\delta_1^2&\cdots&\delta_1^n\\
    \delta_2&\delta_2^2&\cdots&\delta_2^n\\
    \vdots&\vdots&\ddots&\vdots\\
    \delta_n&\delta_n^2&\cdots&\delta_n^n
    \end{pmatrix}^{-1}
    \begin{pmatrix}
    \gv_{i_1}-\gv_{s}\\
    \gv_{i_2}-\gv_{s}\\
    \vdots\\
    \gv_{i_n}-\gv_{s}
    \end{pmatrix}$ (Eq.~\eqref{eqn:high_order_estimation})
\STATE $\displaystyle\hat\x_t\leftarrow\alpha_t\Av(\lambda_s,\lambda_t)\left(\frac{\x_s}{\alpha_s}-\int_{\lambda_s}^{\lambda_t}\Ev_{\lambda_s}(\lambda)\Bv_{\lambda_s}(\lambda)\dm\lambda-\sum_{k=0}^{n}\hat\gv^{(k)}_s\int_{\lambda_s}^{\lambda_t}\Ev_{\lambda_s}(\lambda)\frac{(\lambda-\lambda_s)^k}{k!}\dm\lambda\right)$ (Eq.~\eqref{eqn:local_approx})
\end{algorithmic}
\textbf{Output:} $\hat\x_t$
\end{algorithm}

\begin{algorithm}[H]
\caption{$(n+1)$-th order multistep predictor-corrector algorithm}
\label{algorithm:global}
\textbf{Require:} noise prediction model $\epsilonv_\theta$, noise schedule $\alpha_t,\sigma_t$, coefficients $\lv_\lambda,\sv_\lambda,\bv_\lambda$, cache $Q_1,Q_2$

\textbf{Input:} timesteps $\{t_i\}_{i=0}^M$, initial value $\x_0$

\begin{algorithmic}[1]
\STATE $Q_1\overset{cache}{\leftarrow} \x_0$ 
\STATE $Q_2\overset{cache}{\leftarrow}\epsilonv_\theta(\x_0,t_0)$

\FOR {$m = 1$ \TO $M$}
\STATE $n_m\leftarrow \min\{n+1,m\}$
\STATE $\hat\x_{m-n_m},\dots,\hat\x_{m-1}\overset{fetch}{\leftarrow} Q_1$
\STATE $\hat\epsilonv_{m-n_m},\dots,\hat\epsilonv_{m-1}\overset{fetch}{\leftarrow} Q_2$
\STATE $\displaystyle\hat\gv_l\leftarrow e^{-\int_{\lambda_{m-1}}^{\lambda_l} \sv_\tau\dm\tau}\frac{\sigma_{\lambda_l}\hat\epsilonv_l - \lv_{\lambda_l}\hat\x_l}{\alpha_{\lambda_l}}
    - \int_{\lambda_{m-1}}^{\lambda_l}e^{-\int_{\lambda_{m-1}}^{r}\sv_\tau\dm\tau}\bv_{r}\dm r,\quad l=m-n_m,\dots,m-1$ (Eq.~\eqref{eqn:g_definition})
\STATE $\hat\x_m\leftarrow \text{LUpdate}_{n_m}(\{t_{m-n_m},\hat\gv_{m-n_m}\},\dots,\{t_{m-2},\hat\gv_{m-2}\},\{t_{m-1},\hat\x_{m-1},\hat\gv_{m-1}\},t_m)$
\IF {$m\neq M$}
\STATE $\hat\epsilonv_m\leftarrow\epsilonv_\theta(\hat\x_m,t_m)$
\STATE $\displaystyle\hat\gv_m\leftarrow e^{-\int_{\lambda_{m-1}}^{\lambda_m} \sv_\tau\dm\tau}\frac{\sigma_{\lambda_m}\hat\epsilonv_m - \lv_{\lambda_m}\hat\x_m}{\alpha_{\lambda_m}}
    - \int_{\lambda_{m-1}}^{\lambda_m}e^{-\int_{\lambda_{m-1}}^{r}\sv_\tau\dm\tau}\bv_{r}\dm r$  (Eq.~\eqref{eqn:g_definition})
\STATE $\hat\x_m^c\leftarrow \text{LUpdate}_{n_m}(\{t_{m-n_m+1},\hat\gv_{m-n_m+1}\},\dots,\{t_{m-2},\hat\gv_{m-2}\},\{t_m,\hat\gv_m\},$\newline
\hspace*{20em}$\{t_{m-1},\hat\x_{m-1},\hat\gv_{m-1}\},t_m)$
\STATE $\hat\epsilonv_m^c\leftarrow\hat\epsilonv_m+\lv_{\lambda_m}(\hat\x_m^c-\hat\x_m)/\sigma_{\lambda_m}$ (to ensure $\hat\gv_m^c=\hat\gv_m$)
\STATE $Q_1\overset{cache}{\leftarrow}\hat\x_m^c$
\STATE $Q_2\overset{cache}{\leftarrow}\hat\epsilonv_m^c$
\ENDIF
\ENDFOR
\end{algorithmic}

\textbf{Output:} $\hat\x_M$
\end{algorithm}
\section{Experiment Details}
\label{appendix:exp_details}
In this section, we provide more experiment details for each setting, including the codebases and the configurations for evaluation, EMS computing and sampling. Unless otherwise stated, we utilize the forward-mode automatic differentiation (\texttt{torch.autograd.forward\_ad}) provided by PyTorch~\citep{paszke2019pytorch} to compute the Jacobian-vector-products (JVPs). Also, as stated in Section~\ref{section:implementation}, we draw datapoints $\x_\lambda$ from the marginal distribution $q_\lambda$ defined by the forward diffusion process starting from some data distribution $q_0$, instead of the model distribution $p_\lambda^\theta$.

\subsection{ScoreSDE on CIFAR10}

\textbf{Codebase and evaluation}$\mbox{ }$ For unconditional sampling on CIFAR10~\citep{krizhevsky2009learning}, one experiment setting is based on the pretrained pixel-space diffusion model provided by ScoreSDE~\citep{song2020score}. We use their official codebase of PyTorch implementation, and their checkpoint \texttt{checkpoint\_8.pth} under \texttt{vp/cifar10\_ddpmpp\_deep\_continuous} config. We adopt their own statistic file and code for computing FID.

\textbf{EMS computing}$\mbox{ }$ We estimate the EMS at $N=1200$ uniform timesteps $\lambda_{j_0},\lambda_{j_1},\dots,\lambda_{j_N}$ by drawing $K=4096$ datapoints $\x_{\lambda_0}\sim q_0$, where $q_0$ is the distribution of the training set. We compute two sets of EMS, corresponding to start time $\epsilon=10^{-3}$ (NFE$\leq$ 10) and $\epsilon=10^{-4}$ (NFE>10) in the sampling process respectively. The total time for EMS computing is $\sim$7h on 8 GPU cards of NVIDIA A40.

\textbf{Sampling}$\mbox{ }$ Following previous works~\citep{lu2022dpm,lu2022dpmpp,zhao2023unipc}, we use start time $\epsilon=10^{-3}$ (NFE$\leq$ 10) and $\epsilon=10^{-4}$ (NFE>10), end time $T=1$ and adopt the uniform logSNR timestep schedule. For DPM-Solver-v3, we use the 3rd-order predictor with 
the 3rd-order corrector by default. In particular, we change to the pseudo 3rd-order predictor at 5 NFE to further boost the performance.

\subsection{EDM on CIFAR10}

\textbf{Codebase and evaluation}$\mbox{ }$ For unconditional sampling on CIFAR10~\citep{krizhevsky2009learning}, another experiment setting is based on the pretrained pixel-space diffusion model provided by EDM~\citep{karras2022elucidating}. We use their official codebase of PyTorch implementation, and their checkpoint \texttt{edm-cifar10-32x32-uncond-vp.pkl}. For consistency, we borrow the statistic file and code from ScoreSDE~\citep{song2020score} for computing FID.

\textbf{EMS computing}$\mbox{ }$ Since the pretrained models of EDM are stored within the pickles, we fail to use \texttt{torch.autograd.forward\_ad} for computing JVPs. Instead, we use \texttt{torch.autograd.functional.jvp}, which is much slower since it employs the double backward trick. We estimate two sets of EMS. One corresponds to $N=1200$ uniform timesteps $\lambda_{j_0},\lambda_{j_1},\dots,\lambda_{j_N}$ and $K=1024$ datapoints $\x_{\lambda_0}\sim q_0$, where $q_0$ is the distribution of the training set. The other corresponds to $N=120,K=4096$. They are used when NFE<10 and NFE$\geq$10 respectively. The total time for EMS computing is $\sim$3.5h on 8 GPU cards of NVIDIA A40.

\textbf{Sampling}$\mbox{ }$ Following EDM, we use start time $t_{\min}=0.002$ and end time $t_{\max}=80.0$, but adopt the uniform logSNR timestep schedule which performs better in practice. For DPM-Solver-v3, we use the 3rd-order predictor and additionally employ the 3rd-order corrector when NFE$\leq$ 6. In particular, we change to the pseudo 3rd-order predictor at 5 NFE to further boost the performance.

\subsection{Latent-Diffusion on LSUN-Bedroom}

\textbf{Codebase and evaluation}$\mbox{ }$ The unconditional sampling on LSUN-Bedroom~\citep{yu2015lsun} is based on the pretrained latent-space diffusion model provided by Latent-Diffusion~\citep{rombach2022high}. We use their official codebase of PyTorch implementation and their default checkpoint. We borrow the statistic file and code from Guided-Diffusion~\citep{dhariwal2021diffusion} for computing FID.

\textbf{EMS computing}$\mbox{ }$ We estimate the EMS at $N=120$ uniform timesteps $\lambda_{j_0},\lambda_{j_1},\dots,\lambda_{j_N}$ by drawing $K=1024$ datapoints $\x_{\lambda_0}\sim q_0$, where $q_0$ is the distribution of the latents of the training set. The total time for EMS computing is $\sim$12min on 8 GPU cards of NVIDIA A40. 

\textbf{Sampling}$\mbox{ }$ Following previous works~\citep{zhao2023unipc}, we use start time $\epsilon=10^{-3}$, end time $T=1$ and adopt the uniform $t$ timestep schedule. For DPM-Solver-v3, we use the 3rd-order predictor with the pseudo 4th-order corrector.

\subsection{Guided-Diffusion on ImageNet-256}

\textbf{Codebase and evaluation}$\mbox{ }$ The conditional sampling on ImageNet-256~\citep{deng2009imagenet} is based on the pretrained pixel-space diffusion model provided by Guided-Diffusion~\citep{dhariwal2021diffusion}. We use their official codebase of PyTorch implementation and their two checkpoints: the conditional diffusion model \texttt{256x256\_diffusion.pt} and the classifier \texttt{256x256\_classifier.pt}. We adopt their own statistic file and code for computing FID.

\textbf{EMS computing}$\mbox{ }$ We estimate the EMS at $N=500$ uniform timesteps $\lambda_{j_0},\lambda_{j_1},\dots,\lambda_{j_N}$ by drawing $K=1024$ datapoints $\x_{\lambda_0}\sim q_0$, where $q_0$ is the distribution of the training set. Also, we find that the FID metric on the ImageNet-256 dataset behaves specially, and degenerated $\lv_\lambda$ ($\lv_\lambda=\vect 1$) performs better. The total time for EMS computing is $\sim$9.5h on 8 GPU cards of NVIDIA A40.

\textbf{Sampling}$\mbox{ }$ Following previous works~\citep{lu2022dpm,lu2022dpmpp,zhao2023unipc}, we use start time $\epsilon=10^{-3}$, end time $T=1$ and adopt the uniform $t$ timestep schedule. For DPM-Solver-v3, we use the 2nd-order predictor with the pseudo 3rd-order corrector.

\subsection{Stable-Diffusion on MS-COCO2014 prompts}

\textbf{Codebase and evaluation}$\mbox{ }$ The text-to-image sampling on MS-COCO2014~\citep{lin2014microsoft} prompts is based on the pretrained latent-space diffusion model provided by Stable-Diffusion~\citep{rombach2022high}. We use their official codebase of PyTorch implementation and their checkpoint \texttt{sd-v1-4.ckpt}. We compute MSE on randomly selected captions from the MS-COCO2014 validation dataset, as detailed in Section~\ref{sec:main_results}.

\textbf{EMS computing}$\mbox{ }$ We estimate the EMS at $N=250$ uniform timesteps $\lambda_{j_0},\lambda_{j_1},\dots,\lambda_{j_N}$ by drawing $K=1024$ datapoints $\x_{\lambda_0}\sim q_0$. Since Stable-Diffusion is trained on the LAION-5B dataset~\cite{schuhmann2022laion}, there is a gap between the images in the MS-COCO2014 validation dataset and the images generated by Stable-Diffusion with certain guidance scale. Thus, we choose $q_0$ to be the distribution of the latents generated by Stable-Diffusion with corresponding guidance scale, using 200-step DPM-Solver++~\citep{lu2022dpmpp}. We generate these latents with random captions and Gaussian noise different from those we use to compute MSE. The total time for EMS computing is $\sim$11h on 8 GPU cards of NVIDIA A40 for each guidance scale.

\textbf{Sampling}$\mbox{ }$ Following previous works~\citep{lu2022dpmpp,zhao2023unipc}, we use start time $\epsilon=10^{-3}$, end time $T=1$ and adopt the uniform $t$ timestep schedule. For DPM-Solver-v3, we use the 2nd-order predictor with the pseudo 3rd-order corrector.

\subsection{License}

\begin{table}[ht]
    \centering
    \caption{\label{tab:license}The used datasets, codes and their licenses.}
    \vskip 0.1in
    \resizebox{\textwidth}{!}{
    \begin{tabular}{llll}
    \toprule
    Name&URL&Citation&License\\
    \midrule
    CIFAR10&\url{https://www.cs.toronto.edu/~kriz/cifar.html}&\citep{krizhevsky2009learning}&$\backslash$\\
    LSUN-Bedroom&\url{https://www.yf.io/p/lsun}&\citep{yu2015lsun}&$\backslash$\\
    ImageNet-256&\url{https://www.image-net.org}&\citep{deng2009imagenet}&$\backslash$\\
    MS-COCO2014&\url{https://cocodataset.org}&\citep{lin2014microsoft}&CC BY 4.0\\
    ScoreSDE&\url{https://github.com/yang-song/score_sde_pytorch}&\citep{song2020score}&Apache-2.0\\
    EDM&\url{https://github.com/NVlabs/edm}&\citep{karras2022elucidating}&CC BY-NC-SA 4.0\\
    Guided-Diffusion&\url{https://github.com/openai/guided-diffusion}&\citep{dhariwal2021diffusion}&MIT\\
    Latent-Diffusion&\url{https://github.com/CompVis/latent-diffusion}&\citep{rombach2022high}&MIT\\
    Stable-Diffusion&\url{https://github.com/CompVis/stable-diffusion}&\citep{rombach2022high}&CreativeML Open RAIL-M\\
    DPM-Solver++&\url{https://github.com/LuChengTHU/dpm-solver}&\citep{lu2022dpmpp}&MIT\\
    UniPC&\url{https://github.com/wl-zhao/UniPC}&\citep{zhao2023unipc}&$\backslash$\\
    \bottomrule
    \end{tabular}
    }
    \vspace{-0.1in}
\end{table}

We list the used datasets, codes and their licenses in Table~\ref{tab:license}.

\section{Runtime Comparison}
\label{appendix:runtime}
\begin{table}[ht]
    \centering
    \caption{\label{tab:runtime} Runtime of different methods to generate a single batch (second / batch, $\pm$std) on a single NVIDIA A40, varying the number of function evaluations (NFE). We don't include the runtime of the decoding stage for latent-space DPMs.}
    \vskip 0.1in
    \resizebox{\textwidth}{!}{
    \begin{tabular}{lrrrr}
    \toprule
    \multirow{2}{*}{Method}& \multicolumn{4}{c}{NFE} \\
    \cmidrule{2-5}
    &5&10&15&20\\
    \midrule
    \multicolumn{5}{l}{\textit{CIFAR10~\citep{krizhevsky2009learning}, ScoreSDE~\citep{song2020score} (batch size = 128)}}\\
    \arrayrulecolor{lightgray}\midrule
    \arrayrulecolor{black}
        DPM-Solver++~\citep{lu2022dpmpp}&1.253($\pm$0.0014)&2.503($\pm$0.0017)&3.754($\pm$0.0042)&5.010($\pm$0.0048)\\
        UniPC~\citep{zhao2023unipc}&1.268($\pm$0.0012)&2.532($\pm$0.0018)&3.803($\pm$0.0037)&5.080($\pm$0.0049)\\
        DPM-Solver-v3&1.273($\pm$0.0005)&2.540($\pm$0.0023)&3.826($\pm$0.0039)&5.108($\pm$0.0055)\\
    \midrule
    \multicolumn{5}{l}{\textit{CIFAR10~\citep{krizhevsky2009learning}, EDM~\citep{karras2022elucidating} (batch size = 128)}}\\
    \arrayrulecolor{lightgray}\midrule
    \arrayrulecolor{black}
        DPM-Solver++~\citep{lu2022dpmpp}&1.137($\pm$0.0011)&2.278($\pm$0.0015)&3.426($\pm$0.0024)&4.569($\pm$0.0031)\\
        UniPC~\citep{zhao2023unipc}&1.142($\pm$0.0016)&2.289($\pm$0.0019)&3.441($\pm$0.0035)&4.590($\pm$0.0021)\\
        DPM-Solver-v3&1.146($\pm$0.0010)&2.293($\pm$0.0015)&3.448($\pm$0.0018)&4.600($\pm$0.0027)\\
    \midrule
    \multicolumn{5}{l}{\textit{LSUN-Bedroom~\citep{yu2015lsun}, Latent-Diffusion~\citep{rombach2022high} (batch size = 32)}}\\
    \arrayrulecolor{lightgray}\midrule
    \arrayrulecolor{black}
        DPM-Solver++~\citep{lu2022dpmpp}&1.302($\pm$0.0009)&2.608($\pm$0.0010)&3.921($\pm$0.0023)&5.236($\pm$0.0045)\\
        UniPC~\citep{zhao2023unipc}&1.305($\pm$0.0005)&2.616($\pm$0.0019)&3.934($\pm$0.0033)&5.244($\pm$0.0043)\\
        DPM-Solver-v3&1.302($\pm$0.0010)&2.620($\pm$0.0027)&3.932($\pm$0.0028)&5.290($\pm$0.0030)\\
    \midrule
    \multicolumn{5}{l}{\textit{ImageNet256~\citep{deng2009imagenet}, Guided-Diffusion~\citep{dhariwal2021diffusion} (batch size = 4)}}\\
    \arrayrulecolor{lightgray}\midrule
    \arrayrulecolor{black}
        DPM-Solver++~\citep{lu2022dpmpp}&1.594($\pm$0.0011)&3.194($\pm$0.0018)&4.792($\pm$0.0031)&6.391($\pm$0.0045)\\
        UniPC~\citep{zhao2023unipc}&1.606($\pm$0.0026)&3.205($\pm$0.0025)&4.814($\pm$0.0049)&6.427($\pm$0.0060)\\
        DPM-Solver-v3&1.601($\pm$0.0059)&3.229($\pm$0.0031)&4.807($\pm$0.0068)&6.458($\pm$0.0257)\\
    \midrule
    \multicolumn{5}{l}{\textit{MS-COCO2014~\citep{lin2014microsoft}, Stable-Diffusion~\citep{rombach2022high} (batch size = 4)}}\\
    \arrayrulecolor{lightgray}\midrule
    \arrayrulecolor{black}
        DPM-Solver++~\citep{lu2022dpmpp}&1.732($\pm$0.0012)&3.464($\pm$0.0020)&5.229($\pm$0.0027)&6.974($\pm$0.0013)\\
        UniPC~\citep{zhao2023unipc}&1.735($\pm$0.0012)&3.484($\pm$0.0364)&5.212($\pm$0.0015)&6.988($\pm$0.0035)\\
        DPM-Solver-v3&1.731($\pm$0.0008)&3.471($\pm$0.0011)&5.211($\pm$0.0030)&6.945($\pm$0.0022)\\
    \bottomrule
    \end{tabular}
    }
\end{table}

As we have mentioned in Section~\ref{sec:exps}, the runtime of DPM-Solver-v3 is almost the same as other solvers (DDIM~\citep{song2020denoising}, DPM-Solver~\citep{lu2022dpm}, DPM-Solver++~\citep{lu2022dpmpp}, UniPC~\citep{zhao2023unipc}, etc.) as long as they use the same NFE. This is because the main computation costs are the serial evaluations of the large neural network $\epsilonv_\theta$, and the other coefficients are either analytically computed~\citep{song2020denoising,lu2022dpm,lu2022dpmpp,zhao2023unipc}, or precomputed (DPM-Solver-v3), thus having neglectable costs.

Table~\ref{tab:runtime} shows the runtime of DPM-Solver-v3 and some other solvers on a single NVIDIA A40 under different settings. We use \texttt{torch.cuda.Event} and \texttt{torch.cuda.synchronize} to accurately
compute the runtime. We evaluate the runtime on 8 batches (dropping the first batch since it contains extra initializations) and report the mean and std. We can see that the runtime is proportional to NFE and has a difference of about $\pm1$\% for different solvers, which confirms our statement. Therefore, the speedup for the NFE is almost the actual speedup of
the runtime.





\section{Quantitative Results}
\label{appendix:results}

\begin{table}[ht]
    \centering
    \caption{\label{tab:results_bedroom} Quantitative results on LSUN-Bedroom~\citep{yu2015lsun}. We report the FID$\downarrow$ of the methods with different numbers of function evaluations (NFE), evaluated on 50k samples.}
    \vskip 0.1in
    \resizebox{\textwidth}{!}{
    \begin{tabular}{lcccccccc}
    \toprule
    \multirow{2}{*}{Method}&\multirow{2}{*}{Model}& \multicolumn{7}{c}{NFE} \\
    \cmidrule{3-9}
    &&5&6&8&10&12&15&20\\
    \midrule
        DPM-Solver++~\citep{lu2022dpmpp}&\multirow{3}{*}{Latent-Diffusion~\citep{rombach2022high}}&18.59&8.50&4.19&3.63&3.43&3.29&3.16\\
        UniPC~\citep{zhao2023unipc}&&12.24&6.19&4.00&3.56&3.34&3.18&3.07\\
        DPM-Solver-v3&&\textbf{7.54}&\textbf{4.79}&\textbf{3.53}&\textbf{3.16}&\textbf{3.06}&\textbf{3.05}&\textbf{3.05}\\
    \bottomrule
    \end{tabular}
    }
\end{table}

\begin{table}[ht]
    \centering
    \caption{\label{tab:results_imagenet} Quantitative results on ImageNet-256~\citep{deng2009imagenet}. We report the FID$\downarrow$ of the methods with different numbers of function evaluations (NFE), evaluated on 10k samples.}
    \vskip 0.1in
    \begin{small}
    \begin{tabular}{lcccccccc}
    \toprule
    \multirow{2}{*}{Method}&\multirow{2}{*}{Model}& \multicolumn{7}{c}{NFE} \\
    \cmidrule{3-9}
    &&5&6&8&10&12&15&20\\
    \midrule
        DPM-Solver++~\citep{lu2022dpmpp}&\multirow{3}{*}{\makecell{Guided-Diffusion~\citep{dhariwal2021diffusion}\\($s=2.0$)}}&16.87& 13.09&9.95&8.72&8.13&7.73&7.48\\
        UniPC~\citep{zhao2023unipc}&&15.62&11.91&9.29&8.35&7.95&7.64&7.44\\
        DPM-Solver-v3&&\textbf{15.10}&\textbf{11.39}&\textbf{8.96}&\textbf{8.27}&\textbf{7.94}&\textbf{7.62}&\textbf{7.39}\\
    \bottomrule
    \end{tabular}
    \end{small}
\end{table}

\begin{table}[ht]
    \centering
    \caption{\label{tab:results_sdm} Quantitative results on MS-COCO2014~\citep{lin2014microsoft} prompts. We report the MSE$\downarrow$ of the methods with different numbers of function evaluations (NFE), evaluated on 10k samples.}
    \vskip 0.1in
    \resizebox{\textwidth}{!}{
    \begin{tabular}{lcccccccc}
    \toprule
    \multirow{2}{*}{Method}&\multirow{2}{*}{Model}& \multicolumn{7}{c}{NFE} \\
    \cmidrule{3-9}
    &&5&6&8&10&12&15&20\\
    \midrule
        DPM-Solver++~\citep{lu2022dpmpp}&\multirow{3}{*}{\makecell{Stable-Diffusion~\citep{rombach2022high}\\($s=1.5$)}}&0.076& 0.056&0.028&0.016&0.012&0.009&0.006\\
        UniPC~\citep{zhao2023unipc}&&0.055&0.039&\textbf{0.024}&0.012&0.007&0.005&\textbf{0.002}\\
        DPM-Solver-v3&&\textbf{0.037}&\textbf{0.027}&\textbf{0.024}&\textbf{0.007}&\textbf{0.005}&\textbf{0.001}& \textbf{0.002}\\
        \cmidrule{1-9}
        DPM-Solver++~\citep{lu2022dpmpp}&\multirow{3}{*}{\makecell{Stable-Diffusion~\citep{rombach2022high}\\($s=7.5$)}}& 0.60&0.65&0.50&0.46&\textbf{0.42}&0.38&0.30\\
        UniPC~\citep{zhao2023unipc}&&0.65&0.71&0.56&0.46&0.43&0.35&0.31\\
        DPM-Solver-v3&&\textbf{0.55}&\textbf{0.64}&\textbf{0.49}&\textbf{0.40}&0.45&\textbf{0.34}&\textbf{0.29}\\
    \bottomrule
    \end{tabular}
    }
\end{table}

We present the detailed quantitative results of Section~\ref{sec:main_results} for different datasets in Table~\ref{tab:results_cifar10}, Table~\ref{tab:results_bedroom}, Table~\ref{tab:results_imagenet} and Table~\ref{tab:results_sdm} respectively. They clearly verify that DPM-Solver-v3 achieves consistently better or comparable performance under various settings, especially in 5$\sim$10 NFEs.

\section{Ablations}
\label{appendix:ablation}

In this section, we conduct some ablations to further evaluate and analyze the effectiveness of
DPM-Solver-v3.

\subsection{Varying the Number of Timesteps and Datapoints for the EMS}
\begin{table}[ht]
    \centering
    \caption{\label{tab:ablation_nk} Ablation of the number of timesteps $N$ and datapoints $K$ for the EMS, experimented with ScoreSDE~\citep{song2020score} on CIFAR10~\citep{krizhevsky2009learning}. We report the FID$\downarrow$ with different numbers of function evaluations (NFE), evaluated on 50k samples.}
    \vskip 0.1in
    \begin{tabular}{lcccccccc}
    \toprule
    \multirow{2}{*}{$N$}&\multirow{2}{*}{$K$}& \multicolumn{7}{c}{NFE} \\
    \cmidrule{3-9}
    &&5&6&8&10&12&15&20\\
    \midrule
        1200&512&18.84&7.90&4.49&3.74&3.88&3.52&3.12\\
        1200&1024&15.52&7.55&4.17&3.56&3.37&3.03&2.78\\
        120&4096&13.67&7.60&4.09&3.49&3.24&\textbf{2.90}&\textbf{2.70}\\
        250&4096&13.28&7.56&4.00&3.45&\textbf{3.22}&2.92&\textbf{2.70}\\
        1200&4096&\textbf{12.76}&\textbf{7.40}&\textbf{3.94}&\textbf{3.40}&3.24&2.91&2.71\\
    \bottomrule
    \end{tabular}
\end{table}

First, we'd like to investigate how the number of timesteps $N$ and the number of datapoints $K$ for computing the EMS affects the performance. We conduct experiments with the DPM ScoreSDE~\citep{song2020score} on CIFAR10~\citep{krizhevsky2009learning}, by decreasing $N$ and $K$ from our default choice $N=1200,K=4096$. 

We list the FID results using the EMS of different $N$ and $K$ in Table~\ref{tab:ablation_nk}. We can observe that the number of datapoints $K$ is crucial to the performance, while the number of timesteps $N$ is less significant and affects mainly the performance in 5$\sim$10 NFEs. When NFE>10, we can decrease $N$ to as little as 50, which gives even better FIDs. Note that the time cost for computing the EMS is proportional to $NK$, so how to choose appropriate $N$ and $K$ for both efficiency and accuracy is worth studying.
\subsection{First-Order Comparison}
\begin{table}[ht]
    \centering
    \caption{\label{tab:ablation_first_order} Quantitative comparison of first-order solvers (DPM-Solver-v3-1 and DDIM~\citep{song2020denoising}), experimented with ScoreSDE~\citep{song2020score} on CIFAR10~\citep{krizhevsky2009learning}. We report the FID$\downarrow$ with different numbers of function evaluations (NFE), evaluated on 50k samples.}
    \vskip 0.1in
    \begin{tabular}{lccccccc}
    \toprule
    \multirow{2}{*}{Method}& \multicolumn{7}{c}{NFE} \\
    \cmidrule{2-8}
    &5&6&8&10&12&15&20\\
    \midrule
        DDIM~\citep{song2020denoising}&54.56&41.92&27.51&20.11&15.64&12.05&9.00\\
        DPM-Solver-v3-1&\textbf{39.18}&\textbf{29.82}&\textbf{20.03}&\textbf{14.98}&\textbf{11.86}&\textbf{9.34}&\textbf{7.19}\\
    \bottomrule
    \end{tabular}
\end{table}

\begin{figure}[ht]
    \vspace{-.1in}
    \centering
    \resizebox{\textwidth}{!}{
    \begin{tabular}{cccc}
    &NFE = 5&NFE = 10&NFE = 20\\
    \makecell{DDIM\\\citep{song2020denoising}}&\includegraphics[width=.33\linewidth,align=c]{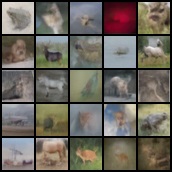}&\includegraphics[width=.33\linewidth,align=c]{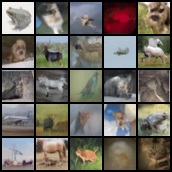}&\includegraphics[width=.33\linewidth,align=c]{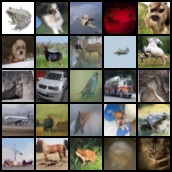}\\
    &FID 54.56 &FID 20.11&FID 9.00\\
    \\
    \makecell{DPM-Solver-v3-1\\(\textbf{Ours})}&\includegraphics[width=.33\linewidth,align=c]{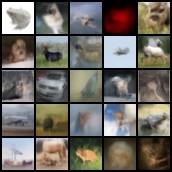}&\includegraphics[width=.33\linewidth,align=c]{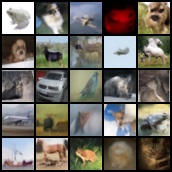}&\includegraphics[width=.33\linewidth,align=c]{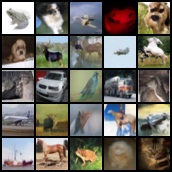}\\
    &FID 39.18&FID 14.98&FID 7.19
    \end{tabular}
    }
    \caption{\label{fig:scoresde_first} Random samples by first-order solvers (DPM-Solver-v3-1 and DDIM~\citep{song2020denoising}) of ScoreSDE~\citep{song2020score} on CIFAR10 dataset~\cite{krizhevsky2009learning}, using 5, 10 and 20 NFE.}
    \vspace{-0.1in}
\end{figure}

As stated in Appendix~\ref{appendix:related}, the first-order case of DPM-Solver-v3 (\textit{DPM-Solver-v3-1}) is different from DDIM~\citep{song2020denoising}, which is the previous best first-order solver for DPMs. Note that DPM-Solver-v3-1 applies no corrector, since any corrector has an order of at least 2.

In Table~\ref{tab:ablation_first_order} and Figure~\ref{fig:scoresde_first}, we compare DPM-Solver-v3-1 with DDIM both quantitatively and qualitatively, using the DPM ScoreSDE~\citep{song2020score} on CIFAR10~\citep{krizhevsky2009learning}. The results verify our statement that DPM-Solver-v3-1 performs better than DDIM.
\subsection{Effects of Pseudo-Order Solver}
We now demonstrate the effectiveness of the pseudo-order solver, including the pseudo-order predictor and the pseudo-order corrector.

\textbf{Pseudo-order predictor}$\mbox{ }$ The pseudo-order predictor is only applied in one case (at 5 NFE on CIFAR10~\citep{krizhevsky2009learning}) to achieve maximum performance improvement. In such cases, without the pseudo-order predictor, the FID results will degenerate from 12.76 to 15.91 for ScoreSDE~\citep{song2020score}, and from 12.21 to 12.72 for EDM~\citep{karras2022elucidating}. While they are still better than previous methods, the pseudo-order predictor is proven to further boost the performance at NFEs as small as 5.

\begin{table}[ht]
    \centering
    \caption{\label{tab:ablation_pseudo_c} Effects of pseudo-order corrector under different settings. We report the FID$\downarrow$ with different numbers of function evaluations (NFE).}
    \vskip 0.1in
    \begin{tabular}{lccccccc}
    \toprule
    \multirow{2}{*}{Method}& \multicolumn{7}{c}{NFE} \\
    \cmidrule{2-8}
    &5&6&8&10&12&15&20\\
    \midrule
    \multicolumn{8}{l}{LSUN-Bedroom~\citep{yu2015lsun}, Latent-Diffusion~\citep{rombach2022high}}\\
    \arrayrulecolor{lightgray}\midrule
    \arrayrulecolor{black}
        4th-order corrector&8.83&5.28&3.65&3.27&3.17&3.14&3.13\\
        $\rightarrow$pseudo (default)&\textbf{7.54}&\textbf{4.79}&\textbf{3.53}&\textbf{3.16}&\textbf{3.06}&\textbf{3.05}&\textbf{3.05}\\
    \midrule
    \multicolumn{8}{l}{ImageNet-256~\citep{deng2009imagenet}, Guided-Diffusion~\citep{dhariwal2021diffusion} ($s=2.0$)}\\
    \arrayrulecolor{lightgray}\midrule
    \arrayrulecolor{black}
        3rd-order corrector&15.87&11.91&9.27&8.37&7.97&\textbf{7.62}&7.47\\
        $\rightarrow$pseudo (default)&\textbf{15.10}&\textbf{11.39}&\textbf{8.96}&\textbf{8.27}&\textbf{7.94}&\textbf{7.62}&\textbf{7.39}\\
    \midrule
    \multicolumn{8}{l}{MS-COCO2014~\citep{lin2014microsoft}, Stable-Diffusion~\citep{rombach2022high} ($s=1.5$)}\\
    \arrayrulecolor{lightgray}\midrule
    \arrayrulecolor{black}
        3rd-order corrector&\textbf{0.037}&0.028&0.028&0.014&0.0078&0.0024&\textbf{0.0011}\\
        $\rightarrow$pseudo (default)&\textbf{0.037}&\textbf{0.027}&\textbf{0.024}&\textbf{0.0065}&\textbf{0.0048}&\textbf{0.0014}& 0.0022\\
    \bottomrule
    \end{tabular}
\end{table}

\textbf{Pseudo-order corrector}$\mbox{ }$ We show the comparison between true and pseudo-order corrector in Table~\ref{tab:ablation_pseudo_c}. We can observe a consistent improvement when switching to the pseudo-order corrector. Thus, it suggests that if we use $n$-th order predictor, we'd better combine it with pseudo $(n+1)$-th order corrector rather than $(n+1)$-th order corrector.

\subsection{Effects of Half-Corrector}
\begin{table}[ht]
    \centering
    \caption{\label{tab:ablation_hc} Ablation of half-corrector/full-corrector on MS-COCO2014~\citep{lin2014microsoft} prompts with Stable-Diffusion model~\citep{rombach2022high} and guidance scale 7.5. We report the MSE$\downarrow$ of the methods with different numbers of function evaluations (NFE), evaluated on 10k samples.}
    \vskip 0.1in
    \begin{tabular}{lcccccccc}
    \toprule
    \multirow{2}{*}{Method}&\multirow{2}{*}{Corrector Usage}& \multicolumn{7}{c}{NFE} \\
    \cmidrule{3-9}
    &&5&6&8&10&12&15&20\\
    \midrule
        DPM-Solver++~\citep{lu2022dpmpp}&no corrector& 0.60&0.65&0.50&0.46&0.42&0.38&0.30\\
        \cmidrule{1-9}
        \multirow{2}{*}{UniPC~\citep{zhao2023unipc}}&full-corrector&0.65&0.71&0.56&0.46&0.43&0.35&0.31\\
        &$\rightarrow$half-corrector&0.59&0.66&0.50&0.46&\textbf{0.41}&0.38&0.30\\
        \cmidrule{1-9}
        \multirow{2}{*}{DPM-Solver-v3}&full-corrector&0.65&0.67&\textbf{0.49}&\textbf{0.40}&0.47&\textbf{0.34}&0.30\\
        &$\rightarrow$half-corrector&\textbf{0.55}&\textbf{0.64}&0.51&0.44&0.45&0.36&\textbf{0.29}\\
    \bottomrule
    \end{tabular}
\end{table}

We demonstrate the effects of half-corrector in Table~\ref{tab:ablation_hc}, using the popular Stable-Diffusion model~\citep{rombach2022high}. We can observe that under the relatively large guidance scale of 7.5 which is necessary for producing samples of high quality, the corrector adopted by UniPC~\citep{zhao2023unipc} has a negative effect on the convergence to the ground-truth samples, making UniPC even worse than DPM-Solver++~\citep{lu2022dpmpp}. When we employ the half-corrector technique, the problem is partially alleviated. Still, it lags behind our DPM-Solver-v3, since we further incorporate the EMS.

\subsection{Singlestep vs. Multistep}
\begin{table}[ht]
    \centering
    \caption{\label{tab:ablation_single_multi}
    Quantitative comparison of single-step methods (S) vs. multi-step methods (M), experimented with ScoreSDE~\citep{song2020score} on CIFAR10~\citep{krizhevsky2009learning}. We report the FID$\downarrow$ with different numbers of function evaluations (NFE), evaluated on 50k samples.}
    \vskip 0.1in
    \begin{tabular}{lcccccccc}
    \toprule
    \multirow{2}{*}{Method}& \multicolumn{8}{c}{NFE} \\
    \cmidrule{2-9}
    &5&6&8&10&12&15&20&25\\
    \midrule
        DPM-Solver (S)~\citep{lu2022dpm}&290.51&23.78&23.51&4.67&4.97&3.34&2.85&2.70\\
        DPM-Solver (M)~\citep{lu2022dpmpp}&27.40&17.85&9.04&6.41&5.31&4.10&3.30&2.98\\
        DPM-Solver++ (S)~\citep{lu2022dpmpp}&51.80&38.54&12.13&6.52&6.36&4.56&3.52&3.09\\
        DPM-Solver++ (M)~\citep{lu2022dpmpp}&28.53&13.48&5.34&4.01&4.04&3.32&2.90&2.76\\
        DPM-Solver-v3 (S)&21.83&16.81&7.93&5.76&5.17&3.99&3.22&2.96\\
        DPM-Solver-v3 (M)&12.76&7.40&3.94&3.40&3.24&2.91&2.71&2.64\\
    \bottomrule
    \end{tabular}
\end{table}

As we stated in Section~\ref{sec:global-solver}, multistep methods perform better than singlestep methods. To study the relationship between parameterization and these solver types, we develop a singlestep version of DPM-Solver-v3 in Algorithm~\ref{algorithm:singlestep}. Note that since $(n+1)$-th order singlestep solver divides each step into $n+1$ substeps, the total number of timesteps is a multiple of $n+1$. For flexibility, we follow the adaptive third-order strategy of DPM-Solver~\citep{lu2022dpm}, which first takes third-order steps and then takes first-order or second-order steps in the end.

\begin{algorithm}[H]
\caption{$(n+1)$-th order singlestep solver}
\label{algorithm:singlestep}
\textbf{Require:} noise prediction model $\epsilonv_\theta$, noise schedule $\alpha_t,\sigma_t$, coefficients $\lv_\lambda,\sv_\lambda,\bv_\lambda$, cache $Q_1,Q_2$

\textbf{Input:} timesteps $\{t_i\}_{i=0}^{(n+1)M}$, initial value $\x_0$

\begin{algorithmic}[1]
\STATE $Q_1\overset{cache}{\leftarrow} \x_0$ 
\STATE $Q_2\overset{cache}{\leftarrow}\epsilonv_\theta(\x_0,t_0)$

\FOR {$m = 1$ \TO $M$}
\STATE $i_0\leftarrow (n+1)(m-1)$
\FOR {$i = 0$ \TO $n$}
\STATE $\hat\x_{i_0},\dots,\hat\x_{i_0+i}\overset{fetch}{\leftarrow} Q_1$
\STATE $\hat\epsilonv_{i_0},\dots,\hat\epsilonv_{i_0+i}\overset{fetch}{\leftarrow} Q_2$
\STATE $\displaystyle\hat\gv_l\leftarrow e^{-\int_{\lambda_{i_0}}^{\lambda_l} \sv_\tau\dm\tau}\frac{\sigma_{\lambda_l}\hat\epsilonv_l - \lv_{\lambda_l}\hat\x_l}{\alpha_{\lambda_l}}
    - \int_{\lambda_{i_0}}^{\lambda_l}e^{-\int_{\lambda_{i_0}}^{r}\sv_\tau\dm\tau}\bv_{r}\dm r,\quad l=i_0,\dots,i_0+i$ (Eq.~\eqref{eqn:g_definition})
\STATE $\hat\x_{i_0+i+1}\leftarrow \text{LUpdate}_{i+1}(\{t_{i_0+1},\hat\gv_{i_0+1}\},\dots,\{t_{i_0+i},\hat\gv_{i_0+i}\},\{t_{i_0},\hat\x_{i_0},\hat\gv_{i_0}\},t_{i_0+i+1})$
\IF {$(i+1)m\neq (n+1)M$}
\STATE $\hat\epsilonv_{i_0+i+1}\leftarrow \epsilonv_\theta(\hat\x_{i_0+i+1},t_{i_0+i+1})$
\STATE $Q_1\overset{cache}{\leftarrow}\hat\x_{i_0+i+1}$
\STATE $Q_2\overset{cache}{\leftarrow}\hat\epsilonv_{i_0+i+1}$
\ENDIF
\ENDFOR
\ENDFOR
\end{algorithmic}

\textbf{Output:} $\hat\x_{(n+1)M}$
\end{algorithm}
Since UniPC~\citep{zhao2023unipc} is based on a multistep predictor-corrector framework and has no singlestep version, we only compare with DPM-Solver~\citep{lu2022dpm} and DPM-Solver++~\citep{lu2022dpmpp}, which uses noise prediction and data prediction respectively. The results of ScoreSDE model~\citep{song2020score} on CIFAR10~\citep{krizhevsky2009learning} are shown in Table~\ref{tab:ablation_single_multi}, which demonstrate that different parameterizations have different relative performance under singlestep and multistep methods.

Specifically, for singlestep methods, DPM-Solver-v3 (S) outperforms DPM-Solver++ (S) across NFEs, but DPM-Solver (S) is even better than DPM-Solver-v3 (S) when NFE$\geq$10 (though when NFE<10, DPM-Solver (S) has worst performance), which suggests that noise prediction is best strategy in such scenarios; for multistep methods, we have DPM-Solver-v3 (M) > DPM-Solver++ (M) > DPM-Solver (M) across NFEs. Moreover, DPM-Solver (S) even outperforms DPM-Solver++ (M) when NFE$\geq$20, and the properties of different parameterizations in singlestep methods are left for future study. Overall, DPM-Solver-v3 (M) achieves the best results among all these methods, and performs the most stably under different NFEs.
\section{FID/CLIP Score on Stable-Diffusion}
\label{appendix:fid-clip-sdm}
\begin{table}[ht]
    \centering
    \caption{\label{tab:fid-clip} Sample quality and text-image alignment performance on MS-COCO2014~\citep{lin2014microsoft} prompts with Stable-Diffusion model~\citep{rombach2022high} and guidance scale 7.5. We report the FID$\downarrow$ and CLIP score$\uparrow$ of the methods with different numbers of function evaluations (NFE), evaluated on 10k samples.}
    \vskip 0.1in
    \begin{tabular}{lcccccccc}
    \toprule
    \multirow{2}{*}{Method}&\multirow{2}{*}{Metric}& \multicolumn{7}{c}{NFE} \\
    \cmidrule{3-9}
    &&5&6&8&10&12&15&20\\
    \midrule
        \multirow{2}{*}{DPM-Solver++~\citep{lu2022dpmpp}}&FID& 18.87&17.44&16.40&15.93&15.78&15.84&15.72\\
        &CLIP score&0.263&0.265&0.265&0.265&0.266&0.265&0.265\\
        \multirow{2}{*}{UniPC~\citep{zhao2023unipc}}&FID& \textbf{18.77}&17.32&16.20&16.15&16.09&16.06&15.94\\
        &CLIP score&0.262&0.263&0.265&0.265&0.265&0.265&0.265\\
        \multirow{2}{*}{DPM-Solver-v3}&FID& 18.83&\textbf{16.41}&\textbf{15.41}&\textbf{15.32}&\textbf{15.13}&\textbf{15.30}&\textbf{15.23}\\
        &CLIP score&0.260&0.262&0.264&0.265&0.265&0.265&0.265\\
    \bottomrule
    \end{tabular}
\end{table}
In text-to-image generation, since the sample quality is affected not only by discretization error of the sampling process, but also by estimation error of neural networks during training, low MSE (faster convergence) does not necessarily imply better sample quality. Therefore, we choose the MSCOCO2014~\citep{lin2014microsoft} validation set as the reference, and additionally evaluate DPM-Solver-v3 on Stable-Diffusion model~\citep{rombach2022high} by the standard metrics FID and CLIP score~\citep{radford2021learning} which measure the sample quality and text-image alignment respectively. For DPM-Solver-v3, we use the full-corrector strategy when NFE<10, and no corrector when NFE$\geq$10.

The results in Table~\ref{tab:fid-clip} show that DPM-Solver-v3 achieves consistently better FID and similar CLIP scores. \textbf{Notably, we achieve an FID of 15.4 in 8 NFE, close to the reported FID of Stable-Diffusion v1.4.}

Still, we claim that FID is not a proper metric for evaluating the convergence of latent-space diffusion models. As stated in DPM-Solver++ and Section~\ref{sec:main_results}, we can see that the FIDs quickly achieve 15.0$\sim$16.0 within 10 steps, even if the latent code does not converge, because of the strong image decoder. Instead, MSE in the latent space is a direct way to measure the convergence. By comparing the MSE, our sampler does converge faster to the ground-truth samples of Stable Diffusion itself.
\section{More Theoretical Analyses}
\subsection{Expressive Power of Our Generalized Parameterization}
\label{appendix:expressive-power}
Though the introduced coefficients $\lv_\lambda,\sv_\lambda,\bv_\lambda$ seem limited to guarantee the optimality of the parameterization formulation itself, we claim that the generalized parameterization $\gv_\theta$ in Eq.~\eqref{eqn:g_definition} can actually cover a wide range of parameterization families in the form of $\psiv_\theta(\x_\lambda,\lambda)=\alphav(\lambda)\epsilonv_\theta(\x_\lambda,\lambda)+\betav(\lambda)\x_\lambda+\gammav(\lambda)$. Considering the paramerization on $[\lambda_s,\lambda_t]$, by rearranging the terms, Eq.~\eqref{eqn:g_definition} can be written as
\begin{equation}
    \gv_\theta(\x_\lambda,\lambda)=e^{-\int_{\lambda_s}^\lambda\sv_\tau\dm\tau}\frac{\sigma_\lambda}{\alpha_\lambda}\epsilonv_\theta(\x_\lambda,\lambda)-e^{-\int_{\lambda_s}^\lambda\sv_\tau\dm\tau}\frac{\lv_\lambda}{\alpha_\lambda}\x_\lambda-\int_{\lambda_s}^\lambda e^{-\int_{\lambda_s}^r\sv_\tau\dm\tau}\bv_r\dm r
\end{equation}
We can compare the coefficients before $\epsilonv_\theta$ and $\x_\lambda$ in $\gv_\theta$ and $\psiv_\theta$ to figure out how $\lv_\lambda,\sv_\lambda,\bv_\lambda$ corresponds to $\alphav(\lambda),\betav(\lambda),\gammav(\lambda)$. In fact, we can not directly let $\gv_\theta$ equal $\psiv_\theta$, since when $\lambda=\lambda_s$, we have $\int_{\lambda_s}^\lambda(\cdot)=0$, and the coefficient before $\epsilonv_\theta$ in $\gv_\theta$ is fixed. Still, $\psiv_\theta$ can be equalized to $\gv_\theta$ by a linear transformation, which only depends on $\lambda_s$ and does not affect our analyses of the discretization error and solver.

Specifically, assuming $\psiv_\theta=\omegav_{\lambda_s}\gv_\theta+\xiv_{\lambda_s}$, by corresponding the coefficients we have
\begin{equation}
    \begin{cases}
    \alphav(\lambda)=\omegav_{\lambda_s}e^{-\int_{\lambda_s}^\lambda\sv_\tau\dm\tau}\frac{\sigma_\lambda}{\alpha_\lambda}\\
    \betav(\lambda)=-\omegav_{\lambda_s}e^{-\int_{\lambda_s}^\lambda\sv_\tau\dm\tau}\frac{\lv_\lambda}{\alpha_\lambda}\\
    \gammav(\lambda)=-\omegav_{\lambda_s}\int_{\lambda_s}^\lambda e^{-\int_{\lambda_s}^r\sv_\tau\dm\tau}\bv_r\dm r+\xiv_{\lambda_s}
    \end{cases}\Rightarrow
    \begin{cases}
    \omegav_{\lambda_s}=e^{\lambda_s}\alphav(\lambda_s)\\
    \xiv_{\lambda_s}=\gammav(\lambda_s)\\
    \lv_\lambda=-\sigma_\lambda\frac{\betav(\lambda)}{\alphav(\lambda)}\\
    \sv_\lambda=-1-\frac{\alphav'(\lambda)}{\alphav(\lambda)}\\
    \bv_\lambda=-e^{-\lambda}\frac{\gammav'(\lambda)}{\alphav(\lambda)}
    \end{cases}
\end{equation}
Therefore, as long as $\alphav(\lambda)\neq 0$ and $\alphav(\lambda),\gammav(\lambda)$ have first-order derivatives, our proposed parameterization $\gv_\theta$ holds the same expressive power as $\psiv_\theta$, while at the same time enabling the neat optimality criteria of $\lv_\lambda,\sv_\lambda,\bv_\lambda$ in Eq.~\eqref{eqn:l_definition} and Eq.~\eqref{eqn:sb_definition}.
\subsection{Justification of Why Minimizing First-order Discretization Error Can Help Higher-order Solver}
\label{appendix:why-first-help-high}
The EMS $\sv^*_\lambda,\bv^*_\lambda$ in Eq.~\eqref{eqn:sb_definition} are designed to minimize the first-order discretization error in Eq.~\eqref{eqn:first-order-error}. However, the high-order solver is actually more frequently adopted in practice (specifically, third-order in unconditional sampling, second-order in conditional sampling), since it incurs lower sampling errors by taking higher-order Taylor expansions to approximate the predictor $\gv_\theta$.

In the following, we show that the EMS can also help high-order solver. By Eq.~\eqref{eqn:proof_unbiased} in Appendix~\ref{appendix:proof_unbiasedness}, the $(n+1)$-th order local error can be expressed as
\begin{equation}
\begin{aligned}
\hat\x_t-\x_t&=\alpha_t\Av(\lambda_s,\lambda_t)\int_{\lambda_s}^{\lambda_t}\Ev_{\lambda_s}(\lambda)\left(\gv_\theta(\x_\lambda,\lambda)-\gv_\theta(\x_{\lambda_s},\lambda_s)\right)\dm\lambda\\
&-\alpha_t\Av(\lambda_s,\lambda_t)\sum_{k=1}^{n}\left(\sum_{l=1}^n (\Rv_n^{-1})_{kl}(\gv_\theta(\x_{\lambda_{i_l}},\lambda_{i_l})-\gv_\theta(\x_{\lambda_s},\lambda_s))\right)\int_{\lambda_s}^{\lambda_t}\Ev_{\lambda_s}(\lambda)\frac{(\lambda-\lambda_s)^k}{k!}\dm\lambda
\end{aligned}
\end{equation}
By Newton-Leibniz theorem, it is equivalent to
\begin{equation}
\label{eqn:proof-first-high}
\begin{aligned}
\hat\x_t-\x_t&=\alpha_t\Av(\lambda_s,\lambda_t)\int_{\lambda_s}^{\lambda_t}\Ev_{\lambda_s}(\lambda)\left(\int_{\lambda_s}^{\lambda}g^{(1)}_\theta(x_{\tau},\tau)\dm\tau\right)\dm\lambda\\
&-\alpha_t\Av(\lambda_s,\lambda_t)\sum_{k=1}^{n}\left(\sum_{l=1}^n (\Rv_n^{-1})_{kl}\int_{\lambda_s}^{\lambda_{i_l}}g^{(1)}_\theta(x_{\lambda},\lambda)\dm\lambda\right)\int_{\lambda_s}^{\lambda_t}\Ev_{\lambda_s}(\lambda)\frac{(\lambda-\lambda_s)^k}{k!}\dm\lambda
\end{aligned}
\end{equation}
We assume that the estimated EMS are bounded (in the order of $\Oc(1)$, Assumption~\ref{assum:2} in Appendix~\ref{appendix:assumptions}), which is empirically confirmed as in Section~\ref{sec:vis_ems}. By the definition of $\gv_\theta$ in Eq.~\eqref{eqn:g_definition}, we have $\gv_\theta^{(1)}(\x_\tau,\tau) = e^{-\int_{\lambda_s}^{\tau}\sv_r\dm r}\left(\fv_\theta^{(1)}(\x_\tau,\tau)-\sv_\tau \fv_\theta(\x_\tau,\tau)- \bv_\tau\right)$. Therefore, Eq.~\eqref{eqn:sb_definition} controls $\|\gv_\theta^{(1)}\|_2$ and further controls $\|\hat\x_t-\x_t\|_2$, since other terms are only dependent on the EMS and are bounded.
\subsection{The Extra Error of EMS Estimation and Integral Estimation}
\label{appendix:extra-error}
\paragraph{Analysis of EMS estimation error} In practice, the EMS in Eq.~\eqref{eqn:l_definition} and Eq.~\eqref{eqn:sb_definition} are estimated on finite datapoints by the explicit expressions in Eq.~\eqref{eqn:l-explicit} and Eq.~\eqref{eqn:sb-explicit}, which may differ from the true $\lv_\lambda^*,\sv_\lambda^*,\bv_\lambda^*$. Theoretically, on one hand, the order and convergence theorems in Section~\ref{sec:develop-high-order} are irrelevant to the EMS estimation error: The ODE solution in Eq.~\eqref{eqn:solution} is correct whatever $\lv_\lambda,\sv_\lambda,\bv_\lambda$ are, and we only need the assumption that these coefficients are bounded (Assumption~\ref{assum:2} in Appendix~\ref{appendix:assumptions}) to prove the local and global order; on the other hand, the first-order discretization error in Eq.~\eqref{eqn:first-order-error} is vulnerable to the EMS estimation error, which relates to the performance at few steps. Empirically, to enable fast sampling, we need to ensure the number of datapoints for estimating EMS (see ablations in Table~\ref{tab:ablation_nk}), and we find that our method is robust to the EMS estimation error given only 1024 datapoints in most cases (see EMS computing configs in Appendix~\ref{appendix:exp_details}).

\paragraph{Analysis of integral estimation error} Another source of error is the process of integral estimation ($\int_{\lambda_s}^{\lambda_t}\Ev_{\lambda_s}(\lambda)\Bv_{\lambda_s}(\lambda)\dm\lambda$ and $\int_{\lambda_s}^{\lambda_t}\Ev_{\lambda_s}(\lambda)\frac{(\lambda-\lambda_s)^k}{k!}\dm\lambda$ in Eq.~\eqref{eqn:local_approx}) by trapezoidal rule. We can analyze the estimation error by the error bound formula of trapezoidal rule: suppose 
we use uniform discretization on $[a,b]$ with interval $h$ to estimate $\int_a^b f(x) \dm x$, then the error $E$ satisfies
\begin{equation}
    |E|\leq \frac{(b-a)h^2}{12}\max |f''(x)|
\end{equation}
Under Assumption~\ref{assum:2}, the EMS and their first-order derivative are bounded. Denote $\fv_1(\lambda)=\Ev_{\lambda_s}(\lambda)\Bv_{\lambda_s}(\lambda),\fv_2(\lambda)=\Ev_{\lambda_s}(\lambda)\frac{(\lambda-\lambda_s)^k}{k!}$, then
\begin{equation}
    \begin{aligned}
    \fv_1(\lambda)&=e^{\int_{\lambda_s}^{\lambda}(\lv_\tau+\sv_\tau)\dm\tau}\int_{\lambda_s}^{\lambda}e^{-\int_{\lambda_s}^{r}\sv_\tau\dm\tau}\bv_{r}\dm r\\
    \fv_1'(\lambda)&=(\lv_\lambda+\sv_\lambda)e^{\int_{\lambda_s}^{\lambda}(\lv_\tau+\sv_\tau)\dm\tau}\int_{\lambda_s}^{\lambda}e^{-\int_{\lambda_s}^{r}\sv_\tau\dm\tau}\bv_{r}\dm r+\bv_\lambda e^{\int_{\lambda_s}^{\lambda}\lv_\tau\dm\tau}\int_{\lambda_s}^{\lambda}e^{-\int_{\lambda_s}^{r}\sv_\tau\dm\tau}\bv_{r}\dm r\\
    \fv_1''(\lambda)&=(\lv_\lambda'+\sv_\lambda')e^{\int_{\lambda_s}^{\lambda}(\lv_\tau+\sv_\tau)\dm\tau}\int_{\lambda_s}^{\lambda}e^{-\int_{\lambda_s}^{r}\sv_\tau\dm\tau}\bv_{r}\dm r+(\lv_\lambda+\sv_\lambda)^2e^{\int_{\lambda_s}^{\lambda}(\lv_\tau+\sv_\tau)\dm\tau}\int_{\lambda_s}^{\lambda}e^{-\int_{\lambda_s}^{r}\sv_\tau\dm\tau}\bv_{r}\dm r\\
    &+(\lv_\lambda+\sv_\lambda)\bv_\lambda e^{\int_{\lambda_s}^{\lambda}\lv_\tau\dm\tau}\int_{\lambda_s}^{\lambda}e^{-\int_{\lambda_s}^{r}\sv_\tau\dm\tau}\bv_{r}\dm r+\bv_\lambda' e^{\int_{\lambda_s}^{\lambda}\lv_\tau\dm\tau}\int_{\lambda_s}^{\lambda}e^{-\int_{\lambda_s}^{r}\sv_\tau\dm\tau}\bv_{r}\dm r\\
    &+\bv_\lambda\lv_\lambda e^{\int_{\lambda_s}^{\lambda}\lv_\tau\dm\tau}\int_{\lambda_s}^{\lambda}e^{-\int_{\lambda_s}^{r}\sv_\tau\dm\tau}\bv_{r}\dm r+\bv_\lambda^2 e^{\int_{\lambda_s}^{\lambda}(\lv_\tau-\sv_\tau)\dm\tau}\int_{\lambda_s}^{\lambda}e^{-\int_{\lambda_s}^{r}\sv_\tau\dm\tau}\bv_{r}\dm r
    \end{aligned}
\end{equation}
\begin{equation}
    \begin{aligned}
    \fv_2(\lambda)&=e^{\int_{\lambda_s}^{\lambda}(\lv_\tau+\sv_\tau)\dm\tau}\frac{(\lambda-\lambda_s)^k}{k!}\\
    \fv_2'(\lambda)&=(\lv_\lambda+\sv_\lambda)e^{\int_{\lambda_s}^{\lambda}(\lv_\tau+\sv_\tau)\dm\tau}\frac{(\lambda-\lambda_s)^k}{k!}+e^{\int_{\lambda_s}^{\lambda}(\lv_\tau+\sv_\tau)\dm\tau}\frac{(\lambda-\lambda_s)^{k-1}}{(k-1)!}\\
    \fv_2''(\lambda)&=(\lv_\lambda'+\sv_\lambda')e^{\int_{\lambda_s}^{\lambda}(\lv_\tau+\sv_\tau)\dm\tau}\frac{(\lambda-\lambda_s)^k}{k!}+(\lv_\lambda+\sv_\lambda)^2e^{\int_{\lambda_s}^{\lambda}(\lv_\tau+\sv_\tau)\dm\tau}\frac{(\lambda-\lambda_s)^k}{k!}\\
    &+(\lv_\lambda+\sv_\lambda)e^{\int_{\lambda_s}^{\lambda}(\lv_\tau+\sv_\tau)\dm\tau}\frac{(\lambda-\lambda_s)^{k-1}}{{k-1}!}+(\lv_\lambda+\sv_\lambda)e^{\int_{\lambda_s}^{\lambda}(\lv_\tau+\sv_\tau)\dm\tau}\frac{(\lambda-\lambda_s)^{k-1}}{(k-1)!}\\
    &+e^{\int_{\lambda_s}^{\lambda}(\lv_\tau+\sv_\tau)\dm\tau}\frac{(\lambda-\lambda_s)^{k-2}}{(k-2)!}
    \end{aligned}
\end{equation}
Since $\lv_\lambda,\sv_\lambda,\bv_\lambda,\lv_\lambda',\sv_\lambda',\bv_\lambda'$ are all $\Oc(1)$, we can conclude that $\fv_1''(\lambda)=\Oc(h),\fv_2''(\lambda)=\Oc(h^{k-2})$, and the errors of $\int_{\lambda_s}^{\lambda_t}\Ev_{\lambda_s}(\lambda)\Bv_{\lambda_s}(\lambda)\dm\lambda,\int_{\lambda_s}^{\lambda_t}\Ev_{\lambda_s}(\lambda)\frac{(\lambda-\lambda_s)^k}{k!}\dm\lambda$ are $O(h_0^2h^2),O(h_0^2h^{k-1})$ respectively, where $h_0$ is the stepsize of EMS discretization ($h_0=\frac{\lambda_t-\lambda_s}{n}$, $n$ corresponds to 120$\sim$1200 timesteps for our EMS computing), and $h=\lambda_t-\lambda_s$. Therefore, the extra error of integral estimation is under high order and ignorable.
\section{More Discussions}
\label{appendix:discussions}
\subsection{Extra Computational and Memory Costs}
The extra memory cost of DPM-Solver-v3 is rather small. The extra coefficients $\lv_\lambda,\sv_\lambda,\bv_\lambda$ are discretized and computed at $N$ timesteps, each with a dimension $D$
 same as the diffused data. The extra memory cost is $\Oc(ND)$, including the precomputed terms in Appendix~\ref{appendix:integral_estimation}, and is rather small compared to the pretrained model (e.g. only $\sim$125M in total on Stable-Diffusion, compared to $\sim$4G of the model itself).

The pre-computation time for estimating EMS is rather short. The EMS introduced by our method can be effectively estimated on around 1k datapoints within hours (Appendix~\ref{appendix:exp_details}), which is rather short compared to the long training/distillation time of other methods. Moreover, the integrals of these extra coefficients are just some vector constants that can be pre-computed within seconds, as shown in Appendix~\ref{appendix:integral_estimation}. The precomputing is done only once before sampling.

The extra computational overhead of DPM-Solver-v3 during sampling is negligible. Once we obtain the estimated EMS and their integrals at discrete timesteps, they can be regarded as constants. Thus, during the subsequent sampling process, the computational overhead is the same as previous training-free methods (such as DPM-Solver++) with negligible differences (Appendix~\ref{appendix:runtime}).
\subsection{Flexibility}
The pre-computed EMS can be applied for any time schedule during sampling without re-computing EMS. Besides, we compute EMS on unconditional models and it can be used for a wide range of guidance scales (such as cfg=7.5 in Stable Diffusion). In short, EMS is flexible and easy to adopt in downstream applications.
\begin{itemize}
    \item Time schedule: The choice of $N,K$ in EMS is disentangled with the timestep scheduler in sampling. Once we have estimated the EMS at $N$ (e.g., 1200) timesteps, they can be flexibly adapted to any schedule (uniform $\lambda$/uniform $t$...) during sampling, by corresponding the actual timesteps during sampling to the $N$ bins. For different time schedule, we only need to re-precompute $E_{\lambda_s,\lambda_t}^{(k)}$ in Appendix~\ref{appendix:integral_estimation}, and the time cost is within seconds.
    \item Guided sampling: We compute the EMS on the unconditional model for all guided cases. Empirically, the EMS computed on the model without guidance (unconditional part) performs more stably than those computed on the model with guidance, and can accelerate the sampling procedure in a wide range of guidance scales (including the common guidance scales used in pretrained models). We think that the unconditional model contains some common information (image priors) for all the conditions, such as color, sketch, and other image patterns. Extracting them helps correct some common biases such as shallow color, low saturation level and lack of details. In contrast, the conditional model is dependent on the condition and has a large variance.
\end{itemize}
\begin{remark}
EMS computed on models without guidance cannot work for extremely large guidance scales (e.g., cfg scale 15 for Stable Diffusion), since in this case, the condition has a large impact on the denoising process. Note that at these extremely large scales, the sampling quality is very low (compared to cfg scale 7.5) and they are rarely used in practice. Therefore, our proposed EMS without guidance is suitable enough for the common applications with the common guidance.
\end{remark}
\subsection{Stability}
\begin{figure}[t]
    \centering
	\includegraphics[width=.5\linewidth]{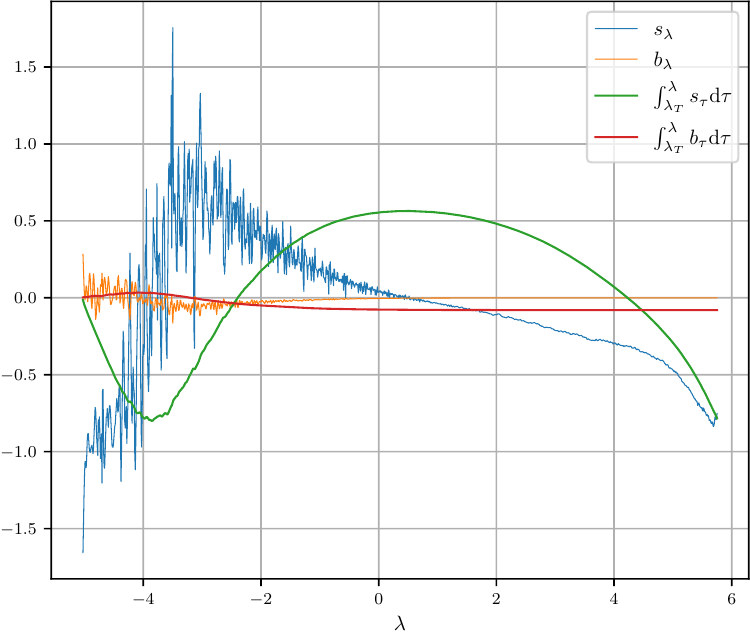} \\
	\caption{\label{fig:sb_int} Visualization of the EMS $\sv_\lambda,\bv_\lambda$ and their integrals w.r.t. $\lambda$, estimated on ScoreSDE~\citep{song2020score} on CIFAR10~\citep{krizhevsky2009learning}. $\sv_\lambda,\bv_\lambda$ are rather fluctuating, but their integrals are smooth enough to ensure the stability of DPM-Solver-v3.}
\end{figure}
As shown in Section~\ref{sec:vis_ems}, the estimated EMS $\sv_\lambda,\bv_\lambda$ appear much fluctuating, especially for ScoreSDE on CIFAR10.
We would like to clarify that the unstable $\sv_\lambda,\bv_\lambda$ is not an issue, and our sampler is stable:
\begin{itemize}
    \item The fluctuation of $\sv_\lambda,\bv_\lambda$ on ScoreSDE is intrinsic and not due to the estimation error. As we increase the number of samples to decrease the estimation error, the fluctuation is not reduced. We attribute it to the periodicity of trigonometric functions in the positional timestep embedding as stated in Section~\ref{sec:vis_ems}.
    \item Moreover, we only need to consider the integrals of $\sv_\lambda,\bv_\lambda$ in the ODE solution Eq.~\eqref{eqn:solution}. As shown in Figure~\ref{fig:sb_int}, the integrals of $\sv_\lambda,\bv_\lambda$ are rather smooth, which ensures the stability of our method. Therefore, there is no need for extra smoothing.
\end{itemize}
\subsection{Practical Value}
When NFE is around 20, our improvement of sample quality is small because all different fast samplers based on diffusion ODEs almost converge. Therefore, what matters is that our method has a faster convergence speed to good sample quality. As we observed, the less diverse the domain is, the more evidently the speed-up takes effect. For example, on LSUN-Bedroom, our method can achieve up to 40\% faster convergence.

To sum up, the practical value of DPM-Solver-v3 embodies the following aspects:

\begin{enumerate}
    \item 15$\sim$30\% speed-up can save lots of costs for online text-to-image applications. Our speed-ups are applicable to large text-to-image diffusion models, which are an important part of today's AIGC community. As the recent models become larger, a single NFE requires more computational resources, and 15$\sim$30\% speed-up can save a lot of the companies' expenses for commercial usage.
    \item Improvement in 5$\sim$10 NFEs benefits image previewing. Since the samples are controlled by the random seed, coarse samples with 5$\sim$10 NFEs can be used to \textit{preview} thousands of samples with low costs and give guidance on choosing the random seed, which can be then used to generate fine samples with best-quality sampling strategies. This is especially useful for text-to-image generation. Since our method achieves better quality and converges faster to the ground-truth sample, it can provide better guidance when used for preview.
\end{enumerate}
\section{Additional Samples}
\label{appendix:samples}
We provide more visual samples in Figure~\ref{fig:more_samples_scoresde}, Figure~\ref{fig:more_samples_edm}, Figure~\ref{fig:more_samples_imagenet256} and Table~\ref{tab:more_samples_sdm} to demonstrate the qualitative effectiveness of DPM-Solver-v3. It can be seen that the visual quality of DPM-Solver-v3 outperforms previous state-of-the-art solvers. Our method can generate images that have reduced bias (less ``shallow''), higher saturation level and more visual details, as mentioned in Section~\ref{sec:vis_quality}.

\begin{figure}[ht]
    \vspace{-.1in}
    \centering
    \resizebox{\textwidth}{!}{
    \begin{tabular}{ccc}
    &NFE = 5&NFE = 10\\
    \makecell{DPM-Solver++\\\citep{lu2022dpmpp}}&\includegraphics[width=.5\linewidth,align=c]{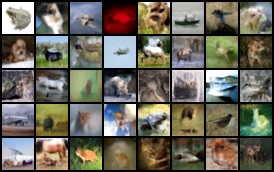}&\includegraphics[width=.5\linewidth,align=c]{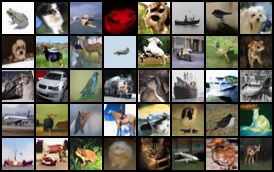}\\
    &FID 28.53 &FID 4.01\\
    \\
    \makecell{UniPC\\\citep{zhao2023unipc}}&\includegraphics[width=.5\linewidth,align=c]{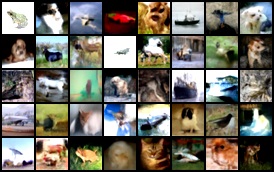}&\includegraphics[width=.5\linewidth,align=c]{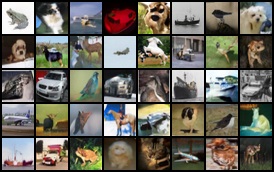}\\
    &FID 23.71 &FID 3.93\\
    \\
    \makecell{DPM-Solver-v3\\(\textbf{Ours})}&\includegraphics[width=.5\linewidth,align=c]{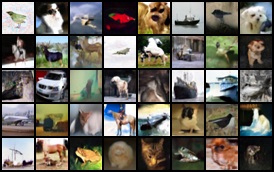}&\includegraphics[width=.5\linewidth,align=c]{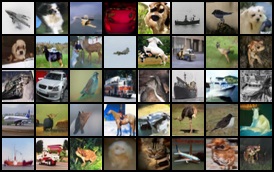}\\
    &FID 12.76&FID 3.40
    \end{tabular}
    }
    \caption{\label{fig:more_samples_scoresde} Random samples of ScoreSDE~\citep{song2020score} on CIFAR10 dataset~\cite{krizhevsky2009learning} with only 5 and 10 NFE.}
    \vspace{-0.1in}
\end{figure}

\begin{figure}[ht]
    \vspace{-.1in}
    \centering
    \resizebox{\textwidth}{!}{
    \begin{tabular}{ccc}
    &NFE = 5&NFE = 10\\
    \makecell{Heun's 2nd\\\citep{karras2022elucidating}}&\includegraphics[width=.5\linewidth,align=c]{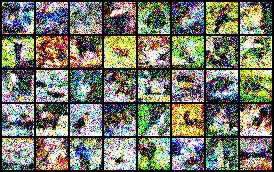}&\includegraphics[width=.5\linewidth,align=c]{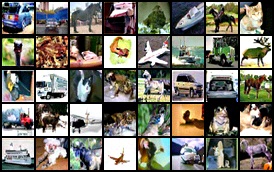}\\
    &FID 320.80&FID 16.57\\
    \\
    \makecell{DPM-Solver++\\\citep{lu2022dpmpp}}&\includegraphics[width=.5\linewidth,align=c]{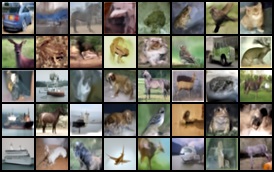}&\includegraphics[width=.5\linewidth,align=c]{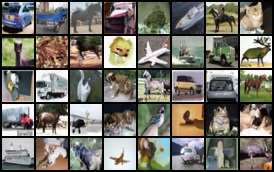}\\
    &FID 24.54&FID 2.91\\
    \\
    \makecell{UniPC\\\citep{zhao2023unipc}}&\includegraphics[width=.5\linewidth,align=c]{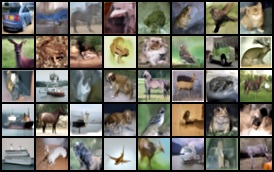}&\includegraphics[width=.5\linewidth,align=c]{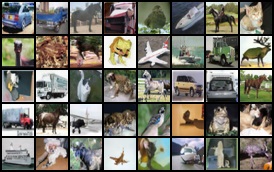}\\
    &FID 23.52&FID 2.85\\
    \\
    \makecell{DPM-Solver-v3\\(\textbf{Ours})}&\includegraphics[width=.5\linewidth,align=c]{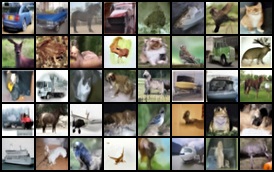}&\includegraphics[width=.5\linewidth,align=c]{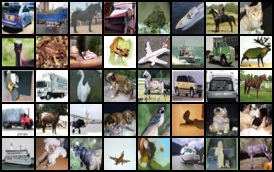}\\
    &FID 12.21&FID 2.51
    \end{tabular}
    }
    \caption{\label{fig:more_samples_edm} Random samples of EDM~\citep{karras2022elucidating} on CIFAR10 dataset~\cite{krizhevsky2009learning} with only 5 and 10 NFE.}
    \vspace{-0.1in}
\end{figure}

\begin{figure}[ht]
    \vspace{-.1in}
    \centering
    \resizebox{\textwidth}{!}{
    \begin{tabular}{cc}
    \makecell{DPM-Solver++\\\citep{lu2022dpmpp}\\(FID 11.02)}&\includegraphics[width=\linewidth,align=c]{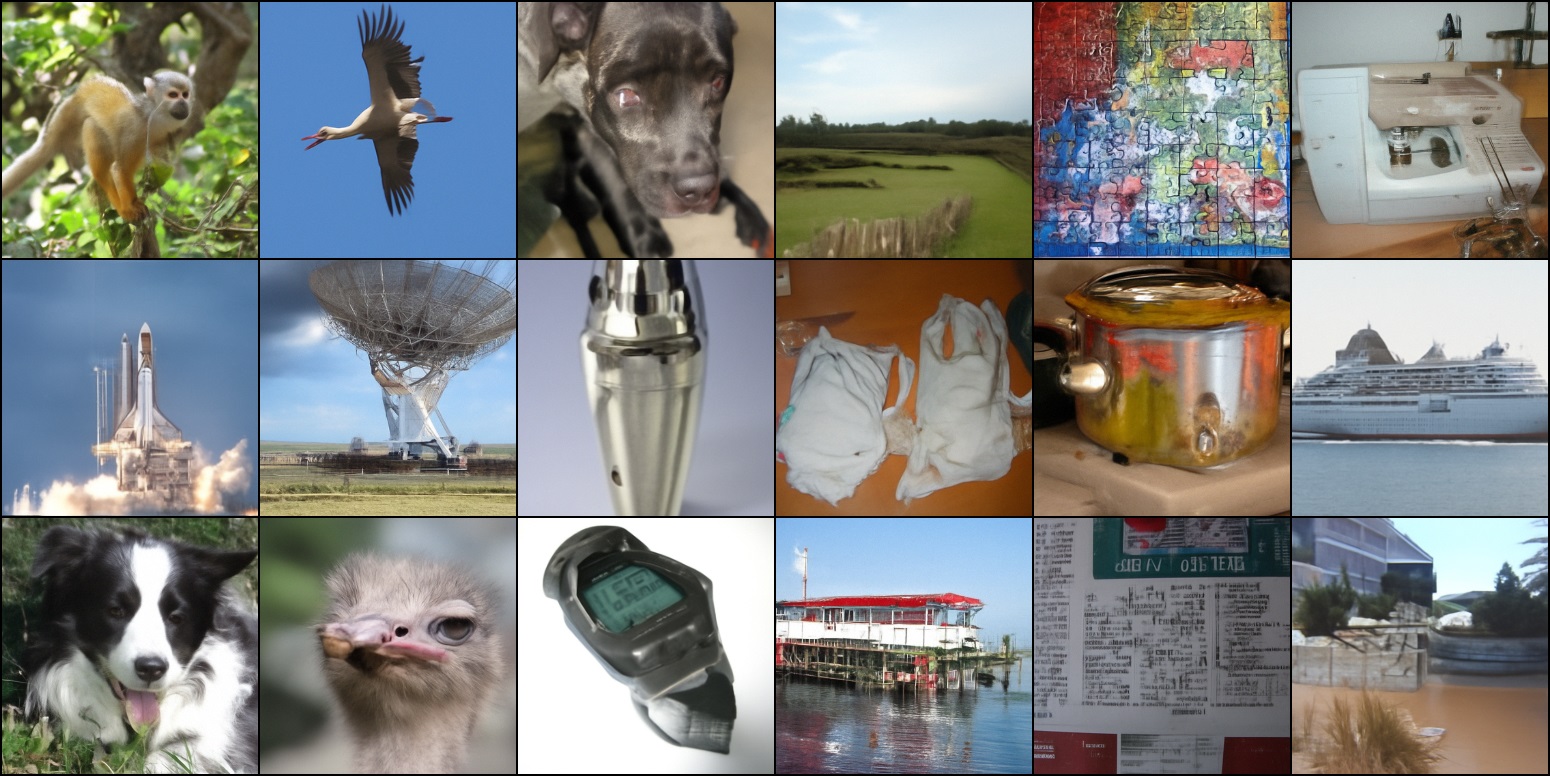}\\
    \\
    \makecell{UniPC\\\citep{zhao2023unipc}\\(FID 10.19)}&\includegraphics[width=\linewidth,align=c]{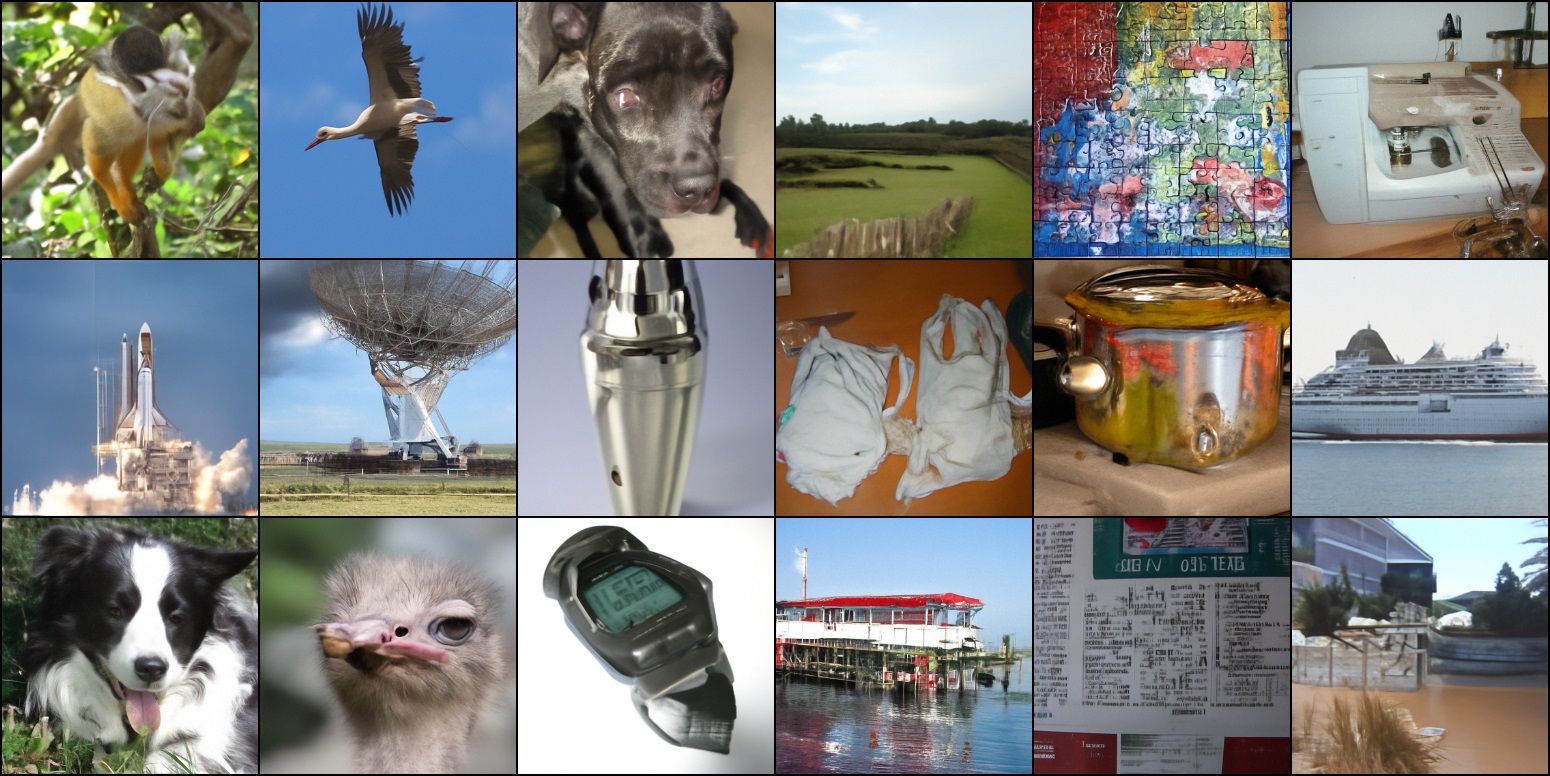}\\
    \\
    \makecell{DPM-Solver-v3\\(\textbf{Ours})\\(FID 9.70)}&\includegraphics[width=\linewidth,align=c]{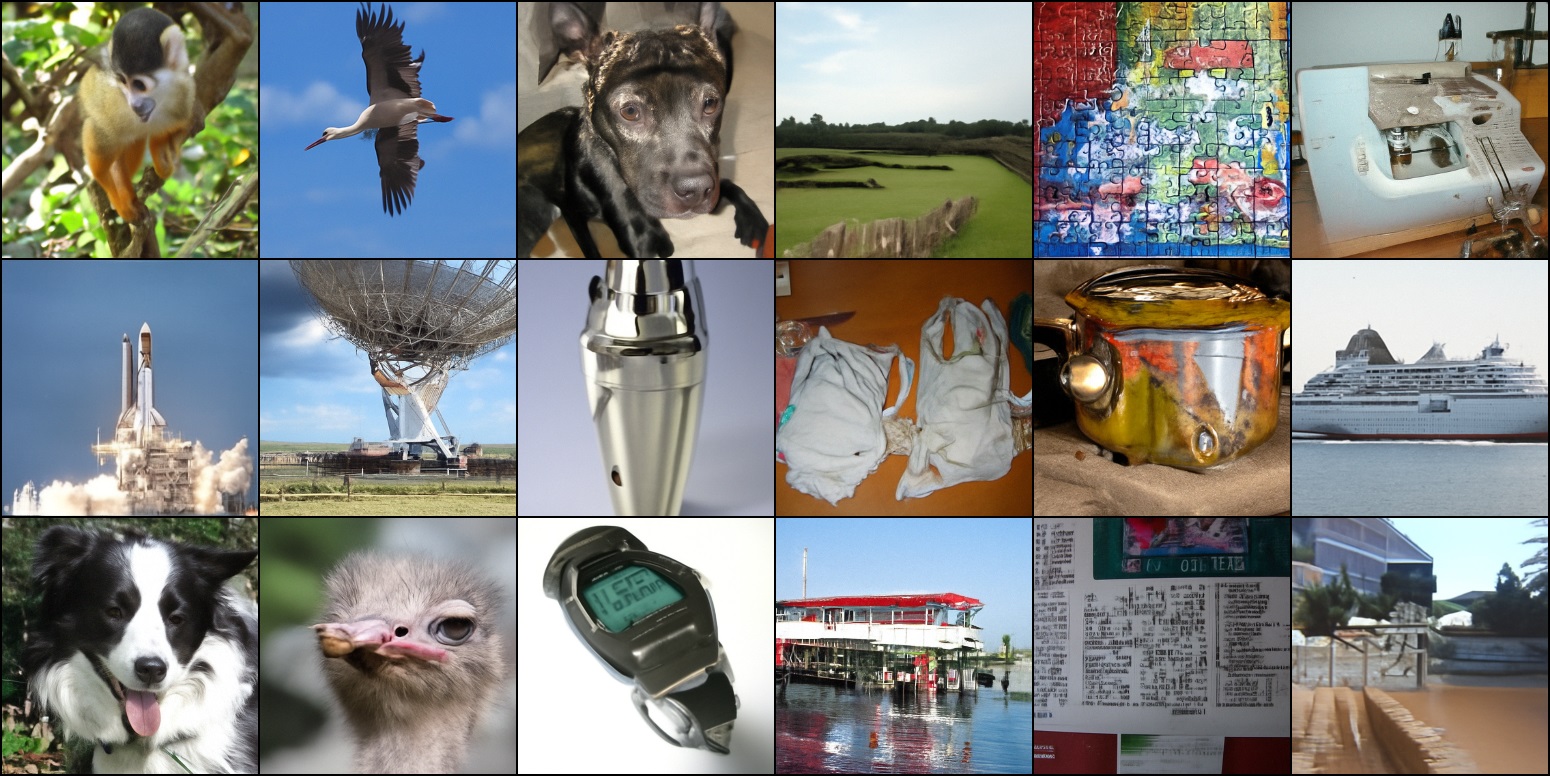}
    \end{tabular}
    }
    \caption{\label{fig:more_samples_imagenet256} Random samples of Guided-Diffusion~\cite{dhariwal2021diffusion} on ImageNet-256 dataset~\cite{deng2009imagenet} with a classifier guidance scale 2.0, using only 7 NFE. We manually remove the potentially disturbing images such as those containing snakes or insects.}
    \vspace{-0.1in}
\end{figure}

\begin{table}[ht]
    \vspace{-.1in}
    \centering
    \caption{\label{tab:more_samples_sdm}Additional samples of Stable-Diffusion~\cite{rombach2022high} with a classifier-free guidance scale 7.5, using only 5 NFE and selected text prompts. Some displayed prompts are truncated.}
    \vskip 0.1in
    \resizebox{\textwidth}{!}{
    \begin{tabular}{c@{\hspace{5pt}}c@{\hspace{1pt}}c@{\hspace{1pt}}c}
    \toprule
    Text Prompts&\makecell{DPM-Solver++\\\citep{lu2022dpmpp}\\(MSE 0.60)}&\makecell{UniPC\\\citep{zhao2023unipc}\\(MSE 0.65)}&\makecell{DPM-Solver-v3\\(\textbf{Ours})\\(MSE 0.55)}\\
    \midrule
    \parbox{10cm}{\textit{``pixar movie still portrait photo of madison beer, jessica alba, woman, as hero catgirl cyborg woman by pixar, by greg rutkowski, wlop, rossdraws, artgerm, weta, marvel, rave girl, leeloo, unreal engine, glossy skin, pearlescent, wet, bright morning, anime, sci-fi, maxim magazine cover''}}&\includegraphics[width=.175\linewidth,align=c]{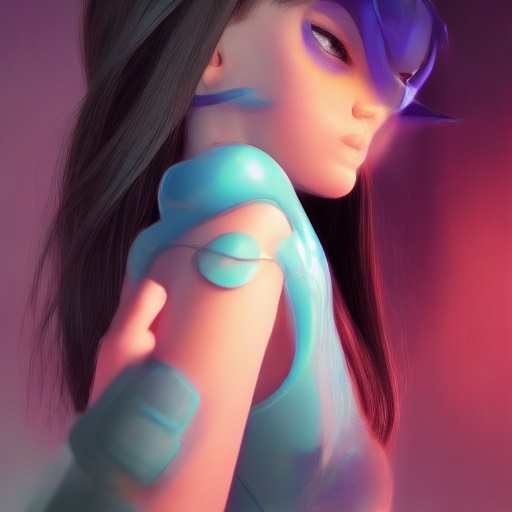}&\includegraphics[width=.175\linewidth,align=c]{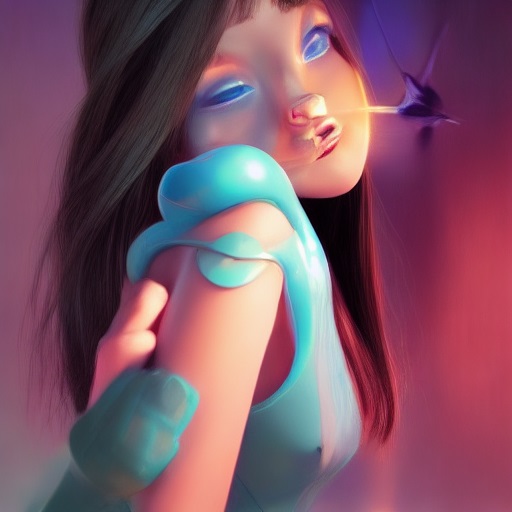}&\includegraphics[width=.175\linewidth,align=c]{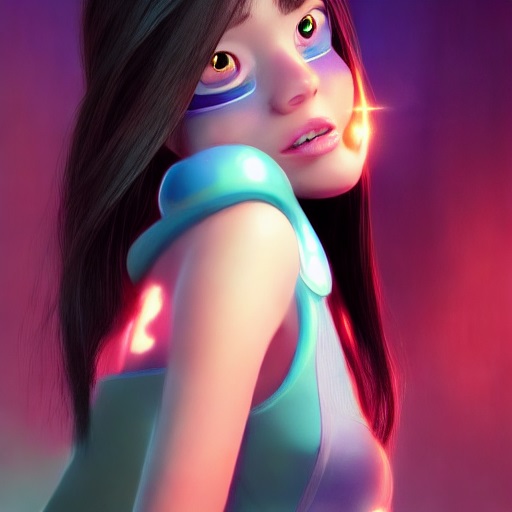}\\
    \midrule
    \parbox{10cm}{\textit{``oil painting with heavy impasto of a pirate ship and its captain, cosmic horror painting, elegant intricate artstation concept art by craig mullins detailed''}}&\includegraphics[width=.175\linewidth,align=c]{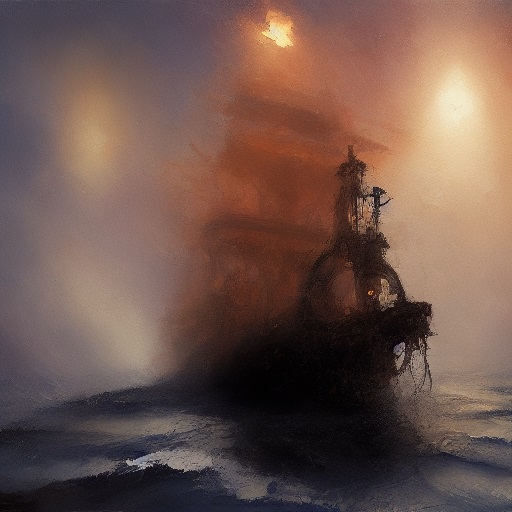}&\includegraphics[width=.175\linewidth,align=c]{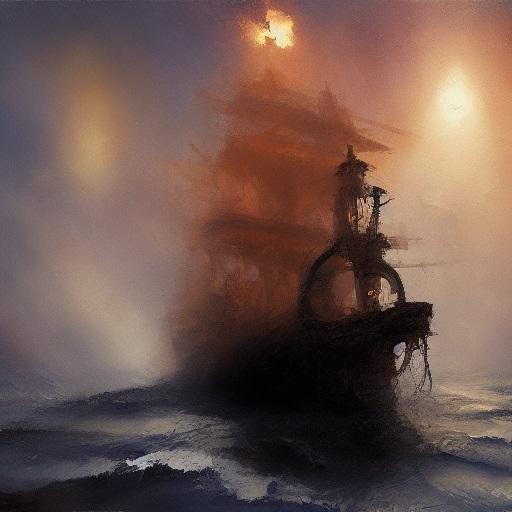}&\includegraphics[width=.175\linewidth,align=c]{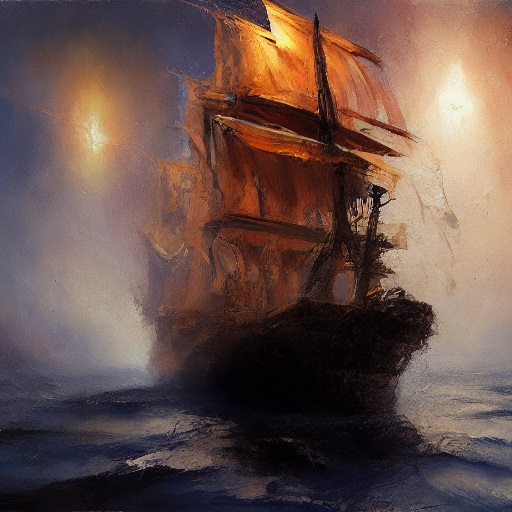}\\
    \midrule
    \parbox{10cm}{\textit{``environment living room interior, mid century modern, indoor garden with fountain, retro, m vintage, designer furniture made of wood and plastic, concrete table, wood walls, indoor potted tree, large window, outdoor forest landscape, beautiful sunset, cinematic, concept art, sunstainable architecture, octane render, utopia, ethereal, cinematic light''}}&\includegraphics[width=.175\linewidth,align=c]{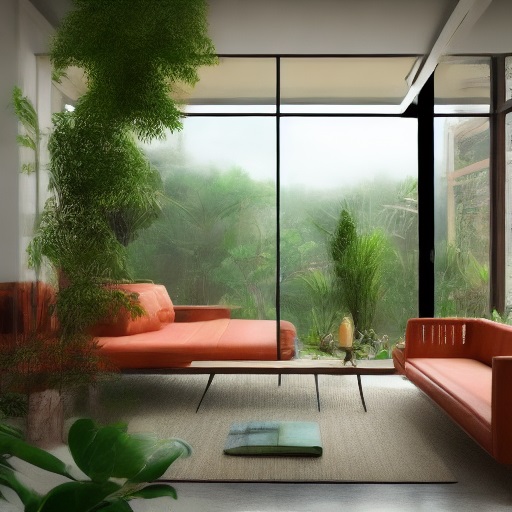}&\includegraphics[width=.175\linewidth,align=c]{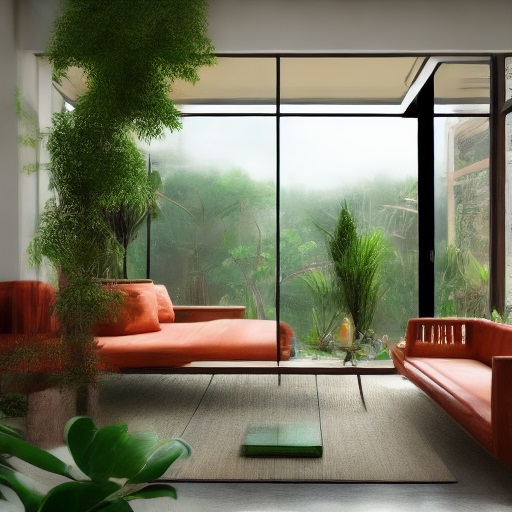}&\includegraphics[width=.175\linewidth,align=c]{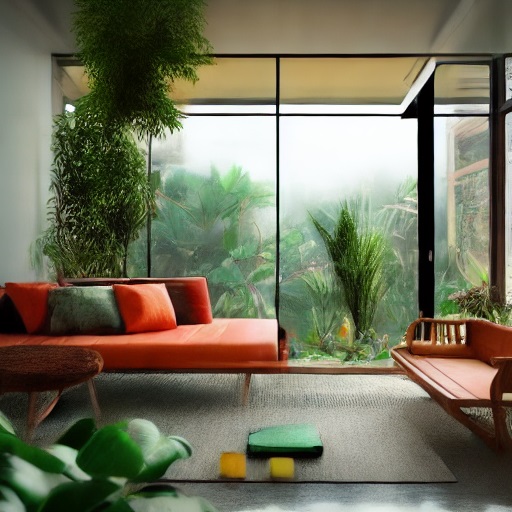}\\
    \midrule
    \parbox{10cm}{\textit{``the living room of a cozy wooden house with a fireplace, at night, interior design, concept art, wallpaper, warm, digital art. art by james gurney and larry elmore.''}}&\includegraphics[width=.175\linewidth,align=c]{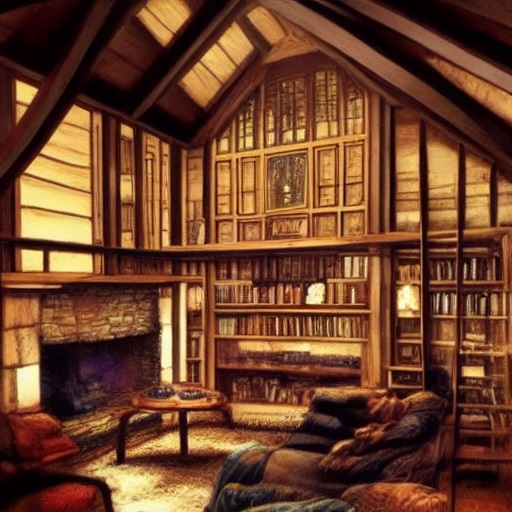}&\includegraphics[width=.175\linewidth,align=c]{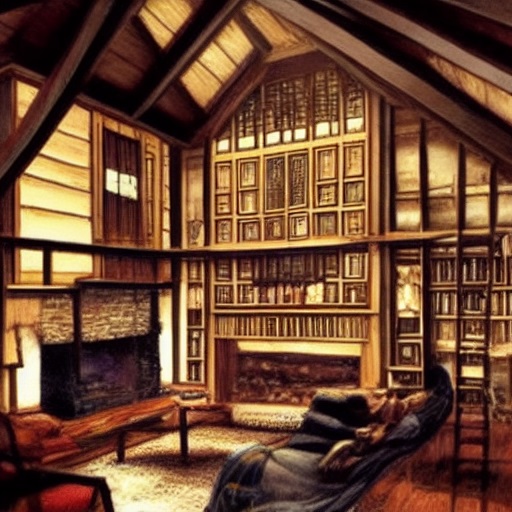}&\includegraphics[width=.175\linewidth,align=c]{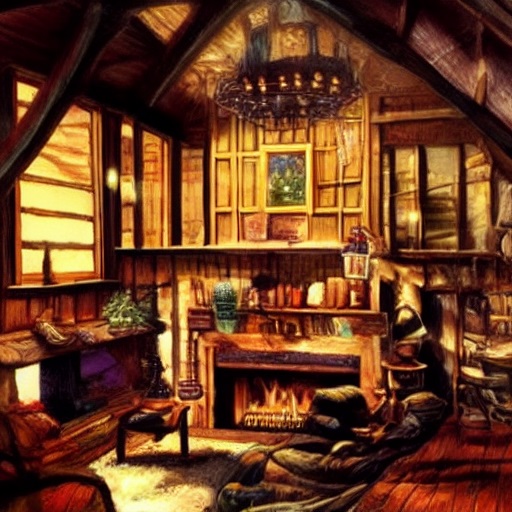}\\
    \midrule
    \parbox{10cm}{\textit{``Full page concept design how to craft life Poison, intricate details, infographic of alchemical, diagram of how to make potions, captions, directions, ingredients, drawing, magic, wuxia''}}&\includegraphics[width=.175\linewidth,align=c]{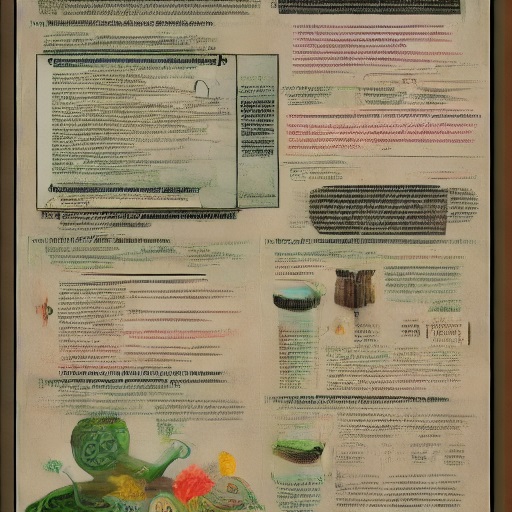}&\includegraphics[width=.175\linewidth,align=c]{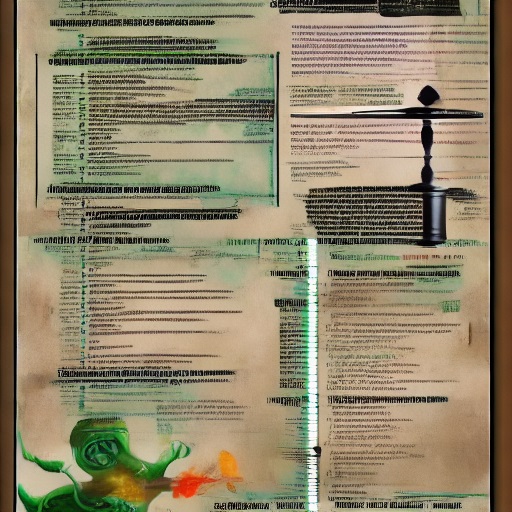}&\includegraphics[width=.175\linewidth,align=c]{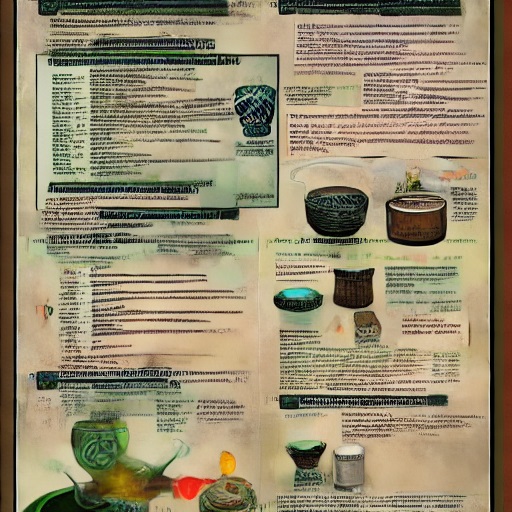}\\
    \midrule
    \parbox{10cm}{\textit{``Fantasy art, octane render, 16k, 8k, cinema 4d, back-lit, caustics, clean environment, Wood pavilion architecture, warm led lighting, dusk, Landscape, snow, arctic, with aqua water, silver Guggenheim museum spire, with rays of sunshine, white fabric landscape, tall building, zaha hadid and Santiago calatrava, smooth landscape, cracked ice, igloo, warm lighting, aurora borialis, 3d cgi, high definition, natural lighting, realistic, hyper realism''}}&\includegraphics[width=.175\linewidth,align=c]{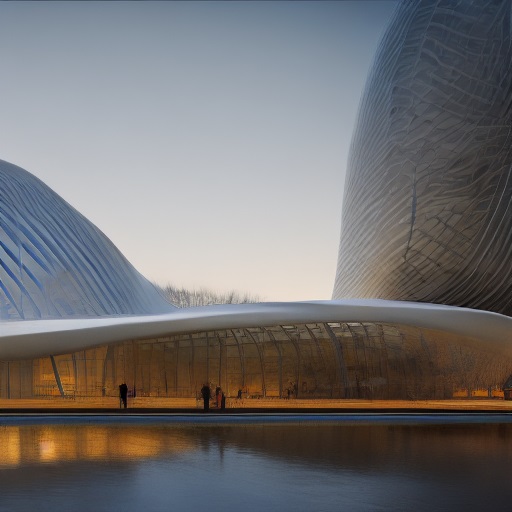}&\includegraphics[width=.175\linewidth,align=c]{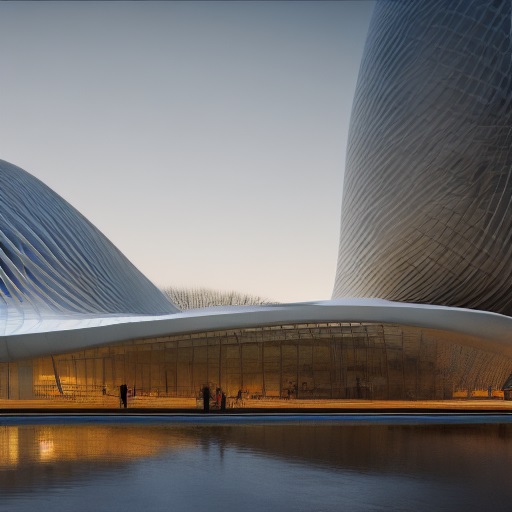}&\includegraphics[width=.175\linewidth,align=c]{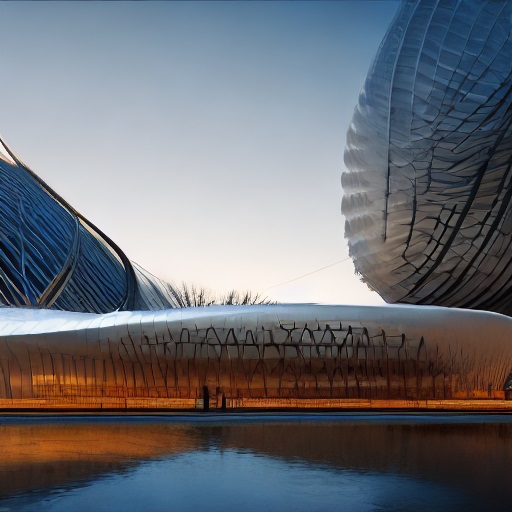}\\
    \midrule
    \parbox{10cm}{\textit{``tree house in the forest, atmospheric, hyper realistic, epic composition, cinematic, landscape vista photography by Carr Clifton \& Galen Rowell, 16K resolution, Landscape veduta photo by Dustin Lefevre \& tdraw, detailed landscape painting by Ivan Shishkin, DeviantArt, Flickr, rendered in Enscape, Miyazaki, Nausicaa Ghibli, Breath of The Wild, 4k detailed post processing, artstation, unreal engine''}}&\includegraphics[width=.175\linewidth,align=c]{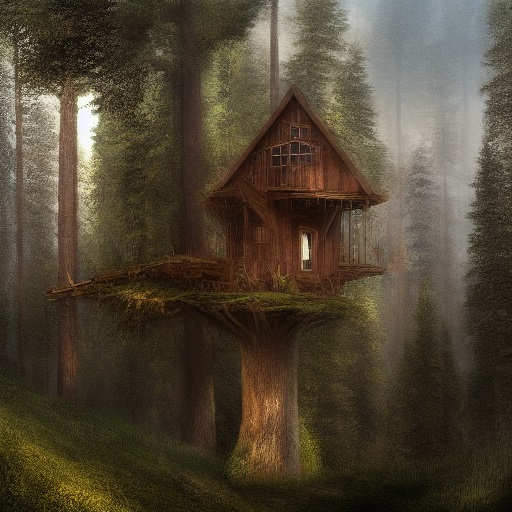}&\includegraphics[width=.175\linewidth,align=c]{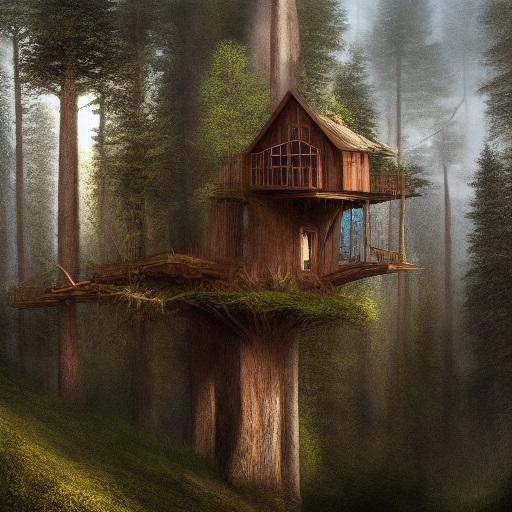}&\includegraphics[width=.175\linewidth,align=c]{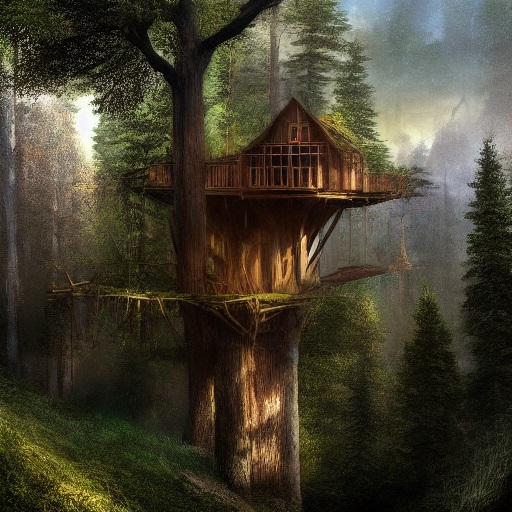}\\
    \midrule
    \parbox{10cm}{\textit{``A trail through the unknown, atmospheric, hyper realistic, 8k, epic composition, cinematic, octane render, artstation landscape vista photography by Carr Clifton \& Galen Rowell, 16K resolution, Landscape veduta photo by Dustin Lefevre \& tdraw, 8k resolution, detailed landscape painting by Ivan Shishkin, DeviantArt, Flickr, rendered in Enscape, Miyazaki, Nausicaa Ghibli, Breath of The Wild, 4k detailed post processing, artstation, rendering by octane, unreal engine''}}&\includegraphics[width=.175\linewidth,align=c]{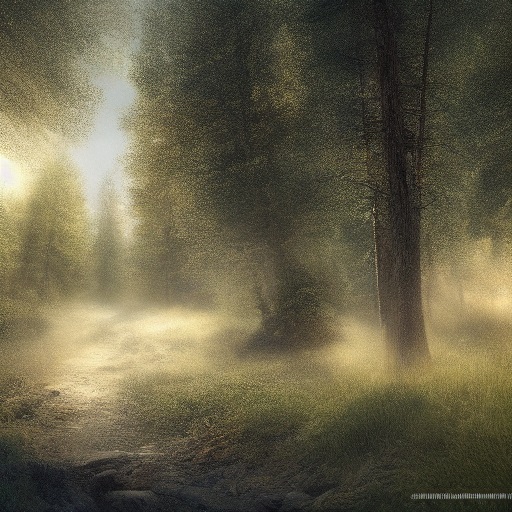}&\includegraphics[width=.175\linewidth,align=c]{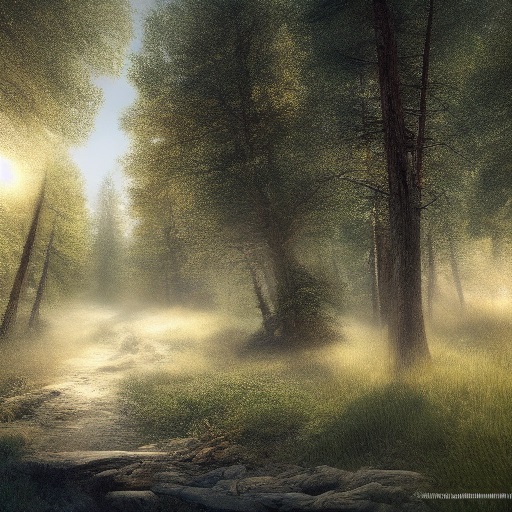}&\includegraphics[width=.175\linewidth,align=c]{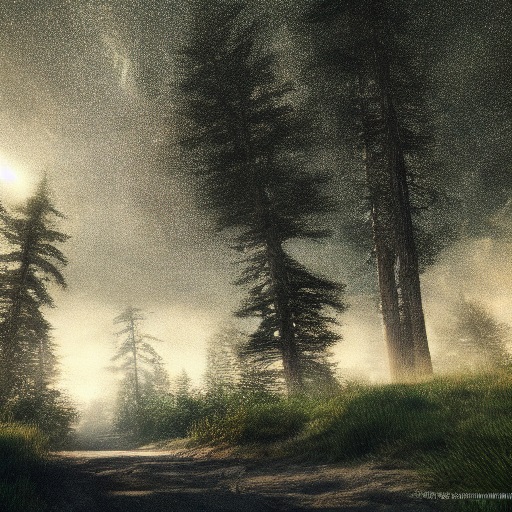}\\
    \midrule
    \parbox{10cm}{\textit{``postapocalyptic city turned to fractal glass, ctane render, 8 k, exploration, cinematic, trending on artstation, by beeple, realistic, 3 5 mm camera, unreal engine, hyper detailed, photo–realistic maximum detai, volumetric light, moody cinematic epic concept art, realistic matte painting, hyper photorealistic, concept art, cinematic epic, octane render, 8k, corona render, movie concept art, octane render, 8 k, corona render, trending on artstation, cinematic composition, ultra–detailed, hyper–realistic, volumetric lighting''}}&\includegraphics[width=.175\linewidth,align=c]{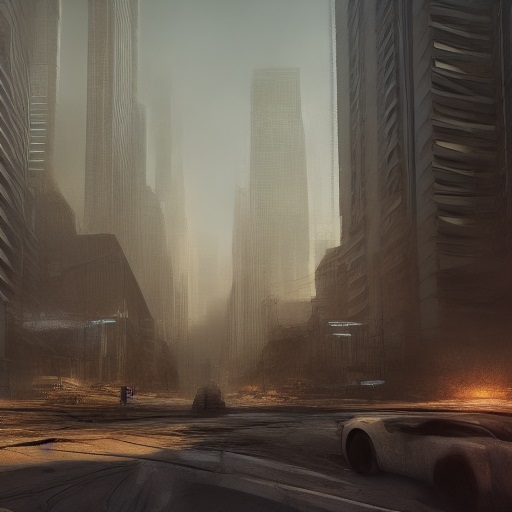}&\includegraphics[width=.175\linewidth,align=c]{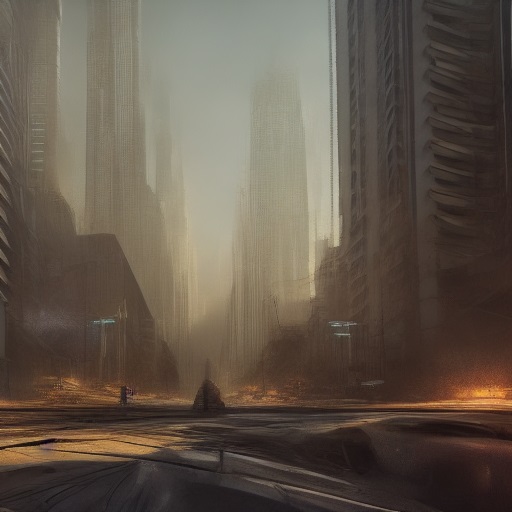}&\includegraphics[width=.175\linewidth,align=c]{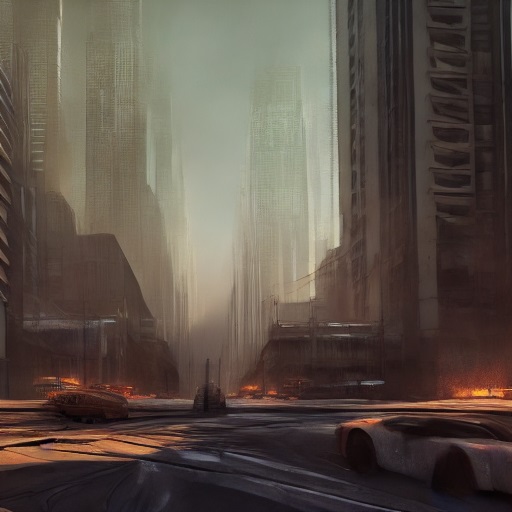}\\
    \midrule
    \parbox{10cm}{\textit{``“WORLDS”: zoological fantasy ecosystem infographics, magazine layout with typography, annotations, in the style of Elena Masci, Studio Ghibli, Caspar David Friedrich, Daniel Merriam, Doug Chiang, Ivan Aivazovsky, Herbert Bauer, Edward Tufte, David McCandless''}}&\includegraphics[width=.175\linewidth,align=c]{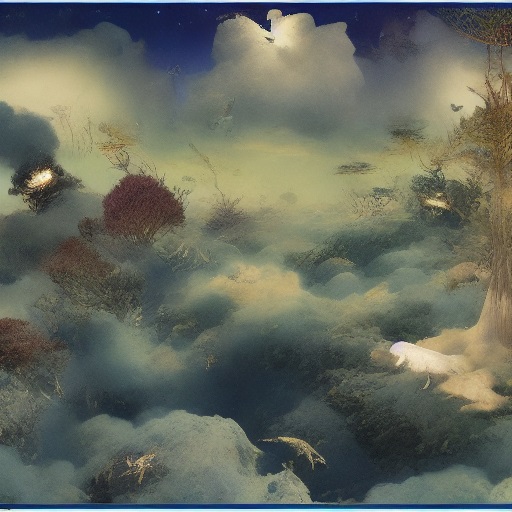}&\includegraphics[width=.175\linewidth,align=c]{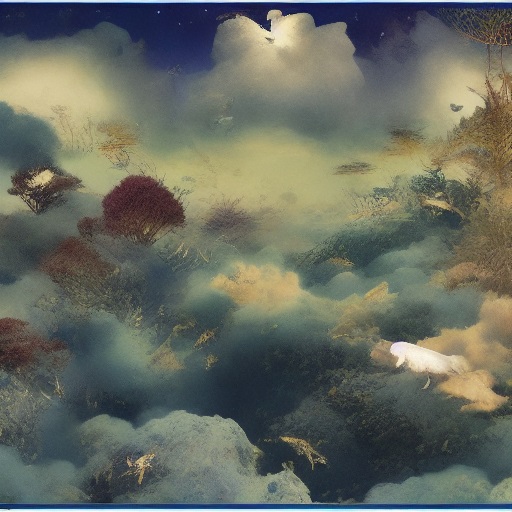}&\includegraphics[width=.175\linewidth,align=c]{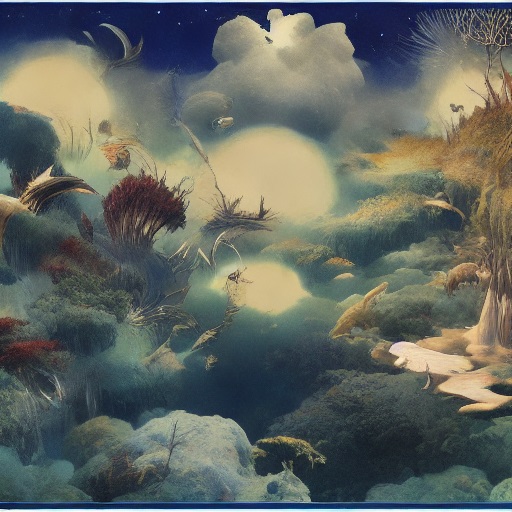}\\
    \bottomrule
    \end{tabular}
     }
    \vspace{-0.1in}
\end{table}

\end{document}